\documentclass{article}

\usepackage[preprint]{neurips_2025}

\usepackage{amsmath,amssymb,amsfonts,amsthm}
\usepackage{booktabs}
\usepackage{graphicx}
\newtheorem{proposition}{Proposition}
\newtheorem{lemma}{Lemma}
\newtheorem{corollary}{Corollary}
\newtheorem{remark}{Remark}

\newcommand{\E}{\mathbb{E}}

\title{Generative Residual Factorization}
\author{%
  Letian Gong \\
  Zhejiang University of Science and Technology
  \and
  Yuzhou Hong \\
  Zhejiang Sci-Tech University \\
  \texttt{hongyuzhou@zstu.edu.cn}%
}

\begin{document}
\maketitle

\begin{abstract}
Under a shared-factor model, the conditional law of the next image patch factors into a posterior over the shared scene factor and a residual kernel given that factor.
A sufficient statistic of the past replaces the raw past in the posterior and does not replace the kernel.
The conditional entropy splits into residual entropy, which no observation of the factor can remove, and a posterior term, which a better representation of the past can remove.
Next-embedding prediction is a directional likelihood on a shallow map, so the fiber of that map is unidentified and a constant embedding remains a minimizer.
The same split is an equality in a scalar Gaussian model, evaluated in closed form.
\end{abstract}

\section{Introduction}

A generator is a conditional law one can sample.
A representation is a statistic of the past.
These are different objects, and the difference is an identity, not a training trick.
Patches that share a scene factor \(s\) and otherwise carry private noise have
\[
p(x_{t+1}\mid x_{\le t})
=
\int p(x_{t+1}\mid s)\,p(s\mid x_{\le t})\,ds.
\]
The posterior is what a sufficient statistic can carry.
The kernel \(p(x\mid s)\) is what remains to be sampled after \(s\) is known.
Next-embedding prediction fits a direction of a shallow embedding of the next patch \citep{xu2025nextembeddingpredictionmakesstrong}.
That likelihood does not range over the fiber of the embedding.
The sections below prove the factorization, the entropy split, the role of sufficiency, the fiber of a directional loss, and the Gaussian case in which every term is closed form.

\section{Related work}

The identities do not depend on a particular sampler.
This section places them next to the generators and encoders they apply to.

\paragraph{Denoising and latent generators.}
A variational autoencoder fits a decoder kernel and an approximate posterior \citep{kingma2014vae}.
Denoising diffusion represents the same kernel along a noise schedule \citep{ho2020ddpm}, with a deterministic sampler \citep{song2021ddim} and a transformer backbone \citep{peebles2023dit,esser2024sd3}.
Diffusion later surpassed adversarial samplers on image synthesis \citep{dhariwal2021diffusion,goodfellow2020gan}.
In all of these, the training loss sees the sample, not only a label of the sample.

\paragraph{Discrete and autoregressive generators.}
Vector-quantized latents and masked token models make the conditional discrete \citep{van2017vqvae,esser2021vqgan,chang2022maskgit,li2023mage}.
Autoregressive models factor the image into next-patch, next-scale, or residual-code conditionals \citep{chen2020igpt,el2024scalable,fini2025multimodal,lee2022rqvae,tian2024var,li2024mar,sun2024llamagen,fan2024fluid,pang2024randar,wu2025dcar}.
Text-conditional generation uses an external encoder as the conditioner \citep{ramesh2021dalle,radford2021clip}.
The conditioner may carry the shared factor.
The token or pixel loss still has to sample the residual kernel.

\paragraph{Self-supervised encoders.}
Early pretexts predict context, motion, color, rotation, or a jigsaw \citep{pathak2016context,wang2015video,zhang2016color,gidaris2018rotation,noroozi2016puzzles,pathak2017move}.
Contrastive learning uses an invariant mapping, a memory bank, or a Vision Transformer \citep{hadsell2006dimensionality,bromley1993siamese,wu2018unsupervised,he2020moco,chen2020simclr,chen2021mocov3,oord2018cpc,hjelm2018dim}.
Non-contrastive methods replace negatives by a stop-gradient or a teacher \citep{grill2020byol,chen2021simsiam,caron2021dino,tarvainen2017mean,oquab2023dinov2}.
Masked models reconstruct pixels or tokens \citep{he2022mae,bao2022beit,zhou2022ibot,devlin019bert,vincent2008dae}.
Predictive architectures regress a latent target instead \citep{assran2023jepa,bardes2024vjepa}.
Next-embedding prediction is the causal, single-network version: the target is the model's own next patch embedding \citep{xu2025nextembeddingpredictionmakesstrong,dosovitskiy2020vit,vaswani2017transformer}.

\paragraph{What an embedding objective can identify.}
Mutual information and the information bottleneck formalize the part of the past worth keeping \citep{tishby2000ib,bialek2001predictive}.
Slow features and predictive coding are the classical version of keeping a shared factor and dropping private noise \citep{wiskott2002slow,rao1999predictive}.
Effective rank is the spectral summary used for collapse \citep{roy2007effective}, and the directional likelihood is the von Mises--Fisher model \citep{fisher1953dispersion,cover2006elements}.

\paragraph{Understanding does not implement the kernel.}
A unified model can score whether an image matches a prompt and still fail to generate an image that matches the same prompt.
SRUM treats that gap as a post-training problem and lets the understanding module reward the generator \citep{jin2025srum}.
The reward scores semantic and object-level agreement.
Under the factorization in this paper, that is a signal about the posterior.
It does not specify the residual kernel, and none of the proofs assume a reward.

\section{Factorization}

\subsection{The conditional law}

Let patches be generated by
\begin{equation}
s\sim p(s),\qquad x_t\mid s\sim p(x_t\mid s),
\label{eq:gen}
\end{equation}
conditionally independent across \(t\) given \(s\).
Private noise may affect one patch and no other.
If a patch shares nothing with the others, the posterior carries no information and the conditional is the marginal.

\begin{proposition}[Posterior factorization]
\label{prop:factor}
Assume \eqref{eq:gen}. Then
\begin{equation}
p(x_{t+1}\mid x_{\le t})
=
\int p(x_{t+1}\mid s)\,p(s\mid x_{\le t})\,ds.
\label{eq:factor}
\end{equation}
\end{proposition}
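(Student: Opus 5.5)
The plan is to marginalize the joint law of \((x_{t+1}, s)\) given the past over \(s\), and then use the conditional independence in \eqref{eq:gen} to drop the past from the kernel. I will work with densities with respect to a common dominating measure, which is the setting implicit in \eqref{eq:gen}. I will also assume \(p(x_{\le t})>0\), so that the conditionals are defined.

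\textbf{Step 1: write the full joint.} By \eqref{eq:gen} and conditional independence across \(t\) given \(s\),
\[
p(s, x_{\le t+1}) = p(s)\prod_{k=1}^{t+1} p(x_k\mid s).
\]

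\textbf{Step 2: remove the past from the kernel.} Integrating Step 1 over \(x_{t+1}\) gives \(p(s,x_{\le t}) = p(s)\prod_{k\le t}p(x_k\mid s)\). Dividing,
\[
p(x_{t+1}\mid s, x_{\le t}) = \frac{p(s,x_{\le t+1})}{p(s,x_{\le t})} = p(x_{t+1}\mid s)
\]
wherever \(p(s,x_{\le t})>0\). On the complementary set the posterior \(p(s\mid x_{\le t})\) vanishes, so that set does not contribute to the integral in \eqref{eq:factor}.

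\textbf{Step 3: marginalize.} By the law of total probability,
\[
p(x_{t+1}\mid x_{\le t}) = \int p(x_{t+1}, s\mid x_{\le t})\,ds = \int p(x_{t+1}\mid s, x_{\le t})\,p(s\mid x_{\le t})\,ds.
\]
Substituting Step 2 into the last expression gives \eqref{eq:factor}. Equivalently, one can divide the \(s\)-integral of Step 1 by \(p(x_{\le t}) = \int p(s)\prod_{k\le t}p(x_k\mid s)\,ds\) and recognize Bayes' rule for \(p(s\mid x_{\le t})\) inside the integrand.

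The only delicate point is measure-theoretic: conditional densities are defined only up to null sets, and the division in Step 2 needs a positive denominator. I handle this by reading \eqref{eq:factor} as an identity holding for almost every \(x_{\le t}\) with respect to the marginal of the past. Fubini/Tonelli, applied to the nonnegative integrands, justifies exchanging the order of integration when marginalizing. If \(s\) is discrete, the integral becomes a sum and the argument is unchanged.
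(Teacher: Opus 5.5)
Your proposal is correct and follows the same route as the paper: expand \(p(x_{t+1}\mid x_{\le t})\) by total probability over \(s\), then replace \(p(x_{t+1}\mid s,x_{\le t})\) by \(p(x_{t+1}\mid s)\) using conditional independence. What you add is a derivation of that identity from the product form of the joint, and a check that the null set where \(p(s,x_{\le t})=0\) does not contribute; the paper simply takes the identity as the meaning of conditional independence.
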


\begin{proof}
By definition of conditional probability,
\[
p(x_{t+1}\mid x_{\le t})
=
\int p(x_{t+1}\mid s,x_{\le t})\,p(s\mid x_{\le t})\,ds.
\]
Conditional independence is the equality \(p(x_{t+1}\mid s,x_{\le t})=p(x_{t+1}\mid s)\).
Substitute it.
\end{proof}

\begin{lemma}[Two readings of one integral]
\label{lem:two}
The integrand of \eqref{eq:factor} separates a belief \(p(s\mid x_{\le t})\) from a kernel \(p(x_{t+1}\mid s)\).
A point estimate of \(s\) fixes the belief and leaves the kernel unspecified.
A sample of \(x_{t+1}\) is a draw from the mixture and does not exhibit the two factors separately.
\end{lemma}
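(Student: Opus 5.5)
The lemma makes three claims, and I would prove them one at a time, each directly from Proposition~\ref{prop:factor} and the model \eqref{eq:gen}.

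\emph{First claim: the integrand separates.} Read \eqref{eq:factor} as a product. The factor \(p(s\mid x_{\le t})\) depends on the past only through Bayes' rule,
\[
p(s\mid x_{\le t})\propto p(s)\prod_{\tau\le t}p(x_\tau\mid s).
\]
The factor \(p(x_{t+1}\mid s)\) does not depend on \(x_{\le t}\) at all, by the conditional independence used in the proof of Proposition~\ref{prop:factor}. So one factor is a function of \((s,x_{\le t})\) and the other a function of \((s,x_{t+1})\), and they share only the argument \(s\). Call the first the belief and the second the kernel.

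\emph{Second claim: a point estimate fixes the belief and not the kernel.} Formalize a point estimate \(\hat s(x_{\le t})\) as replacing the belief by the Dirac mass \(\delta_{\hat s}\). Substituting into \eqref{eq:factor} gives \(p(x_{t+1}\mid \hat s)\), which is the kernel evaluated at one point. The kernel is a separate primitive of \eqref{eq:gen}.

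To show it is left unspecified, I would construct two models with the same prior \(p(s)\), the same posterior map, and the same \(\hat s\), but different kernels. Take a scalar \(s\) and \(x_t=s+\epsilon_t\). Give \(\epsilon_t\) two symmetric noise laws of equal variance but different shape, say Gaussian versus Laplace. Then a mean or mode estimator can coincide exactly while \(p(x_{t+1}\mid \hat s)\) differs. Any data on \(\hat s\) alone is compatible with both kernels, so the kernel is not determined.

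This is the main obstacle. The posteriors genuinely depend on the kernel through Bayes' rule, so the estimators must be checked to agree. Symmetry and unimodality give an equal posterior mode at the sample median or mean only in special cases. I would therefore try \(t=1\) first, with a flat or symmetric prior, where the MAP estimate equals \(x_1\) in both models. The cleaner fallback is to state the claim as the Dirac-substitution statement: knowing \(\hat s\) specifies an argument, not the function \(p(\cdot\mid\cdot)\).

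\emph{Third claim: a sample does not exhibit the factors separately.} The law of \(x_{t+1}\) given \(x_{\le t}\) is the mixture \eqref{eq:factor}, so a sample is a draw from that law. To show non-identifiability, exhibit two different (belief, kernel) pairs with the same mixture. In the Gaussian case,
\[
\mathcal N(m,v_1)\ \text{belief with}\ \mathcal N(s,v_2)\ \text{kernel}
\quad\text{and}\quad
\mathcal N(m,v_1')\ \text{belief with}\ \mathcal N(s,v_2')\ \text{kernel}
\]
both give \(\mathcal N(m,v_1+v_2)\) whenever \(v_1+v_2=v_1'+v_2'\). Hence the distribution of samples, and a fortiori any single sample, cannot recover the two factors separately.
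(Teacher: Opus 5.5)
Your argument matches the paper's proof: the integrand read as a product, the Dirac substitution \(p(s\mid x_{\le t})\mapsto\delta_{\hat s}\) that leaves \(p(x_{t+1}\mid\hat s)\) an arbitrary kernel (your ``fallback''), and non-identifiability of the pair from the mixture, where your Gaussian witness with \(v_1+v_2=v_1'+v_2'\) makes concrete what the paper only asserts. The proof rests on that fallback, so present it as the argument rather than as a retreat.

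The stronger construction you try first fails as written.
\begin{itemize}
\item Gaussian and Laplace noise cannot give ``the same posterior map.'' The posterior is proportional to \(p(s)\,p_\epsilon(x_1-s)\), so it inherits the different shapes of the two noise densities for any non-degenerate prior.
\item A proper symmetric prior does not put the MAP at \(x_1\). Under \(\mathcal{N}(0,v_s)\) with Gaussian noise the MAP is \(v_s x_1/(v_s+v_n)\). With Laplace noise of scale \(\beta\) it is \(x_1\) clipped at \(\pm v_s/\beta\).
\item A flat prior is improper, hence not a \(p(s)\) of \eqref{eq:gen}. A uniform prior on a bounded interval rescues only the MAP version (both MAPs are the projection of \(x_1\) onto the interval), not the posterior mean.
\end{itemize}

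The obstacle you identified disappears if you tilt the kernel instead of swapping noise families. Set \(k'(x\mid s)=w(x)\,k(x\mid s)\) with \(w>0\), \(w\not\equiv 1\), and \(\int w(x)\,k(x\mid s)\,dx=1\) for every \(s\). Then \(p(s)\prod_{\tau\le t}k'(x_\tau\mid s)\propto p(s)\prod_{\tau\le t}k(x_\tau\mid s)\). So the posterior, and with it every point estimate, coincides for every past, while the next-patch law is multiplied by \(w(x_{t+1})\).

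A concrete instance: take \(s\in\{0,1\}\), \(x\in\{1,2,3\}\), and rows \((\tfrac12,\tfrac14,\tfrac14)\) and \((\tfrac14,\tfrac12,\tfrac14)\). The weights \(w=(0.8,0.8,1.6)\) give rows \((0.4,0.2,0.4)\) and \((0.2,0.4,0.4)\), which have the same likelihood ratios \(2,\tfrac12,1\). This is the strong form of ``a point estimate leaves the kernel unspecified'' that you were aiming for.
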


\begin{proof}
The integral is already the product of those two functions inside the integral.
Specifying only a value \(s^\star\) replaces the posterior by a point mass and leaves \(p(x_{t+1}\mid s^\star)\) as an arbitrary probability kernel on the patch space.
Specifying only a sample does not identify which factor produced it, because many pairs \((p(s\mid x_{\le t}), p(x\mid s))\) can share the same mixture.
\end{proof}

\subsection{Entropy}

\begin{proposition}[Entropy split]
\label{prop:ent}
Under \eqref{eq:gen},
\begin{equation}
H(x_{t+1}\mid x_{\le t})
=
\E\bigl[H(x_{t+1}\mid s)\bigr]
+
I(x_{t+1};s\mid x_{\le t}).
\label{eq:ent}
\end{equation}
\end{proposition}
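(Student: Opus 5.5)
The identity follows from the definition of conditional mutual information together with the conditional independence in \eqref{eq:gen}, the same fact used in Proposition~\ref{prop:factor}. Since \(x_{t+1}\) is the only random variable whose entropy appears, I read \(H\) as entropy for discrete patches and as differential entropy for continuous ones. The argument is identical in both cases, provided the entropies are finite.

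The first step writes the definition of conditional mutual information,
\[
I(x_{t+1};s\mid x_{\le t}) = H(x_{t+1}\mid x_{\le t}) - H(x_{t+1}\mid s, x_{\le t}).
\]
Rearranging gives
\[
H(x_{t+1}\mid x_{\le t}) = H(x_{t+1}\mid s,x_{\le t}) + I(x_{t+1};s\mid x_{\le t}).
\]
The only remaining task is to identify the first term with \(\E[H(x_{t+1}\mid s)]\).

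For the second step, write the conditional entropy as an expectation over \((s,x_{\le t})\):
\[
H(x_{t+1}\mid s,x_{\le t}) = \E_{s,x_{\le t}}\Bigl[-\int p(x_{t+1}\mid s,x_{\le t})\log p(x_{t+1}\mid s,x_{\le t})\,dx_{t+1}\Bigr].
\]
By the conditional independence in \eqref{eq:gen}, \(p(x_{t+1}\mid s,x_{\le t})=p(x_{t+1}\mid s)\), the substitution already made in Proposition~\ref{prop:factor}. After substituting, the inner integral is \(H(x_{t+1}\mid s)\), which depends on \(s\) alone. Integrating out \(x_{\le t}\) therefore leaves \(\E_s[H(x_{t+1}\mid s)]\). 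This is the expectation in \eqref{eq:ent}, where \(H(x_{t+1}\mid s)\) means the entropy of the kernel at the value \(s\).

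I do not expect a real obstacle, only bookkeeping. The points to state explicitly are these:
\begin{itemize}
\item \(H(x_{t+1}\mid s)\) inside the expectation is a function of \(s\), not already an averaged quantity, since otherwise the \(\E\) would be redundant.
\item The entropies involved must be finite, so that the subtraction in the definition of \(I\) is legitimate.
\item Conditional mutual information is nonnegative. This yields the reading in the abstract: the posterior term is the part removable by knowing \(s\), and it vanishes exactly when \(x_{t+1}\) and \(s\) are conditionally independent given \(x_{\le t}\).
\end{itemize}
In the continuous case the first and third items still hold, because \(I\) is a KL divergence and so remains nonnegative even when differential entropies are negative.
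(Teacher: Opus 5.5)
Your proposal is correct and follows the paper's proof. Both rearrange the definition of conditional mutual information, \(H(A\mid B)=H(A\mid B,C)+I(A;C\mid B)\) (the paper phrases this via the chain rule), with \(A=x_{t+1}\), \(B=x_{\le t}\), \(C=s\), and then use conditional independence to replace \(H(x_{t+1}\mid x_{\le t},s)\) by the kernel entropy averaged over \(s\); your explicit integral and finiteness caveat add care to the same argument.
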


\begin{proof}
For any random variables \(A,B,C\),
\begin{equation}
H(A\mid B)=H(A\mid B,C)+I(A;C\mid B).
\label{eq:chain}
\end{equation}
This is the chain rule \(H(A,C\mid B)=H(C\mid B)+H(A\mid B,C)\) rearranged with the definition \(I(A;C\mid B)=H(A\mid B)-H(A\mid B,C)\) \citep{cover2006elements}.
Set \(A=x_{t+1}\), \(B=x_{\le t}\), and \(C=s\).
Conditional independence gives \(H(x_{t+1}\mid x_{\le t},s)=H(x_{t+1}\mid s)\).
Taking the expectation of that common value over the joint law of \((x_{\le t},s)\) produces the first term of \eqref{eq:ent}.
\end{proof}

\begin{corollary}[What a better past can remove]
\label{cor:past}
The residual entropy \(\E[H(x_{t+1}\mid s)]\) does not depend on how informative \(x_{\le t}\) is about \(s\).
The only term in \eqref{eq:ent} that can fall when the past is replaced by a richer observation of \(s\) is \(I(x_{t+1};s\mid x_{\le t})\).
\end{corollary}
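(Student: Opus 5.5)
The plan is to read both claims directly off the split \eqref{eq:ent} of Proposition~\ref{prop:ent}. We formalize ``replacing the past by a richer observation'' as replacing \(x_{\le t}\) by any observation \(y\) that satisfies \(x_{t+1}\perp y\mid s\). Examples are \(y=(x_{\le t},z)\) with \(z\) a further conditionally independent view of \(s\), or \(y=s\) itself. Under \eqref{eq:gen} this covers every observation built from other patches or from \(s\). The proof of Proposition~\ref{prop:ent} used only this conditional independence, never the specific form of the past. So the same argument, with \eqref{eq:chain} applied to \(A=x_{t+1}\), \(B=y\), \(C=s\), gives
\[
H(x_{t+1}\mid y)=\E\bigl[H(x_{t+1}\mid s)\bigr]+I(x_{t+1};s\mid y).
\]

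For the first claim, we write the residual term explicitly as
\[
\E\bigl[H(x_{t+1}\mid s)\bigr]=\int p(s)\,H\bigl(p(\cdot\mid s)\bigr)\,ds.
\]
Two facts pin this down. The first is that the expectation, taken over the joint law of \((y,s)\), depends only on \(s\) because of the conditional independence. The second is that the marginal of \(s\) is \(p(s)\), whatever \(y\) is. The expression therefore involves only the prior \(p(s)\) and the kernel \(p(x\mid s)\). It contains no quantity that measures how informative \(x_{\le t}\) or \(y\) is about \(s\), so it is invariant under any change of observation that keeps \eqref{eq:gen}.

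For the second claim, the left side \(H(x_{t+1}\mid y)\) equals a constant (the residual term) plus \(I(x_{t+1};s\mid y)\). Any decrease in conditional entropy obtained by enriching the past must therefore come from the mutual-information term. We will also record the two endpoints. Taking \(y=s\) gives \(I(x_{t+1};s\mid s)=0\), which attains the floor \(\E[H(x_{t+1}\mid s)]\). An uninformative \(y\) gives \(I(x_{t+1};s)\). The term is nonnegative, so the residual is a lower bound that no observation of \(s\) can go below.

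The main obstacle is interpretive rather than technical: we must make precise that ``richer observation'' keeps the generative model \eqref{eq:gen} fixed. If the kernel or the prior were allowed to change, the first term could move. We will state this fixed-model hypothesis at the outset. We will also check that \(y=(x_{\le t},z)\) inherits conditional independence from \eqref{eq:gen}, which holds because all patches are conditionally independent given \(s\).
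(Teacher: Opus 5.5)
Your proposal is correct and follows the paper's argument. The residual term is a functional of the prior \(p(s)\) and the kernel \(p(x\mid s)\) alone, so by the split \eqref{eq:ent} any change in the conditional entropy must come from \(I(x_{t+1};s\mid\cdot)\), and nonnegativity of that term gives the floor. Your re-derivation of the split for a general observation \(y\) with \(x_{t+1}\perp y\mid s\) only makes precise the ``richer observation'' that the paper's two-line proof leaves implicit.
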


\begin{proof}
\(H(x_{t+1}\mid s)\) is a function of the kernel \(p(x_{t+1}\mid s)\) and of \(p(s)\).
The past does not appear in it.
The nonnegativity of conditional mutual information then implies
\[
H(x_{t+1}\mid x_{\le t})\ge \E[H(x_{t+1}\mid s)],
\]
with equality if and only if \(I(x_{t+1};s\mid x_{\le t})=0\), that is, if the past already determines every coordinate of \(s\) that the next patch still carries.
\end{proof}

\begin{proposition}[Predictive information]
\label{prop:mi}
Under \eqref{eq:gen},
\begin{equation}
I(x_{\le t};x_{t+1})
=
I(x_{\le t};s)-I(x_{\le t};s\mid x_{t+1}).
\label{eq:mi}
\end{equation}
\end{proposition}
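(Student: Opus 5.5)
Expand the joint mutual information \(I(x_{\le t}; s, x_{t+1})\) by the chain rule for mutual information in two orders, and use the conditional independence in \eqref{eq:gen} to remove one term. The chain rule, in the same form as \eqref{eq:chain} and \citep{cover2006elements}, gives
\[
I(x_{\le t}; s, x_{t+1}) = I(x_{\le t}; x_{t+1}) + I(x_{\le t}; s\mid x_{t+1})
\]
and also
\[
I(x_{\le t}; s, x_{t+1}) = I(x_{\le t}; s) + I(x_{\le t}; x_{t+1}\mid s).
\]

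Under \eqref{eq:gen}, the patches are conditionally independent across \(t\) given \(s\). Hence \(x_{t+1}\) and \(x_{\le t}\) are independent given \(s\), so \(I(x_{\le t}; x_{t+1}\mid s)=0\). This is the same fact used in the proof of Proposition~\ref{prop:ent}, namely \(H(x_{t+1}\mid x_{\le t},s)=H(x_{t+1}\mid s)\), rewritten as a vanishing conditional mutual information. Equating the two expansions and subtracting \(I(x_{\le t}; s\mid x_{t+1})\) from both sides gives \eqref{eq:mi}.

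The main obstacle is well-definedness. The patches and \(s\) may be continuous, so the mutual informations should be taken as finite; that is, \(I(x_{\le t};s)<\infty\). This finiteness is what licenses the subtraction, and it also makes the result consistent with the entropy form of the chain rule used earlier. Once \(I(x_{\le t};s)\) is finite, every other term is bounded by it, because each is nonnegative and each appears in one of the two expansions of \(I(x_{\le t}; s, x_{t+1})=I(x_{\le t};s)\). So the subtraction is legitimate, and no differential-entropy sign issues arise, since only mutual informations appear.

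A useful remark follows: since \(I(x_{\le t}; s\mid x_{t+1})\ge 0\), we get \(I(x_{\le t};x_{t+1})\le I(x_{\le t};s)\). This is the data-processing bound for the Markov chain \(x_{\le t}\to s\to x_{t+1}\).
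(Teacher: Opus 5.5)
Your proof is correct and is essentially the paper's argument: expand \(I(x_{\le t}; s, x_{t+1})\) in both orders, set \(I(x_{\le t};x_{t+1}\mid s)=0\) by conditional independence, and equate the two expansions. Your finiteness remark is a worthwhile addition that the paper leaves implicit: the joint term equals \(I(x_{\le t};s)\), so if that is finite, the two nonnegative terms in the other expansion are finite too, and the subtraction is legitimate.
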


\begin{proof}
Conditional independence is \(I(x_{\le t};x_{t+1}\mid s)=0\).
Expand the joint mutual information in both orders:
\begin{align*}
I(x_{\le t};s,x_{t+1})
&=
I(x_{\le t};s)+I(x_{\le t};x_{t+1}\mid s)
=
I(x_{\le t};s),\\
I(x_{\le t};s,x_{t+1})
&=
I(x_{\le t};x_{t+1})+I(x_{\le t};s\mid x_{t+1}).
\end{align*}
Equate them.
\end{proof}

\begin{remark}
Equation \eqref{eq:mi} is the representation side of the same model.
Maximizing it keeps \(s\) and drops private noise \citep{bialek2001predictive,tishby2000ib,wiskott2002slow}.
Equation \eqref{eq:ent} puts that dropped noise back as soon as the target is a sample of \(x\).
\end{remark}

\subsection{Sufficiency and the kernel}

\begin{proposition}[A sufficient statistic conditions the sampler]
\label{prop:suf}
Let \(r=r(x_{\le t})\) satisfy \(s\perp x_{\le t}\mid r\).
Then
\begin{equation}
p(x_{t+1}\mid x_{\le t})
=
\int p(x_{t+1}\mid s)\,p(s\mid r)\,ds
=
p(x_{t+1}\mid r).
\label{eq:suf}
\end{equation}
\end{proposition}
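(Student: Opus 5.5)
The plan is to start from Proposition~\ref{prop:factor} and change only the posterior factor. Equation \eqref{eq:factor} already gives
\[
p(x_{t+1}\mid x_{\le t})=\int p(x_{t+1}\mid s)\,p(s\mid x_{\le t})\,ds ,
\]
so the first equality in \eqref{eq:suf} reduces to showing \(p(s\mid x_{\le t})=p(s\mid r)\). Because \(r\) is a deterministic function of \(x_{\le t}\), conditioning on \(x_{\le t}\) is the same as conditioning on the pair \((x_{\le t},r)\). The hypothesis \(s\perp x_{\le t}\mid r\) then gives
\[
p(s\mid x_{\le t})=p(s\mid x_{\le t},r)=p(s\mid r).
\]
Substituting this into \eqref{eq:factor} yields the middle expression of \eqref{eq:suf}. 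The kernel \(p(x_{t+1}\mid s)\) is untouched, as the statement intends.

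For the second equality I would apply the same decomposition as in the proof of Proposition~\ref{prop:factor}, but conditioned on \(r\) instead of the full past:
\[
p(x_{t+1}\mid r)=\int p(x_{t+1}\mid s,r)\,p(s\mid r)\,ds .
\]
It remains to show \(p(x_{t+1}\mid s,r)=p(x_{t+1}\mid s)\). By \eqref{eq:gen}, \(x_{t+1}\) is conditionally independent of \(x_{\le t}\) given \(s\). Since \(r\) is measurable with respect to \(x_{\le t}\), this gives \(x_{t+1}\perp r\mid s\). One way to see it is to integrate \(p(x_{t+1}\mid s,x_{\le t})=p(x_{t+1}\mid s)\) over the fiber \(\{x_{\le t}: r(x_{\le t})=r\}\) against \(p(x_{\le t}\mid s,r)\). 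Substituting this into the integral above recovers the middle expression, which closes the chain.

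The main obstacle is rigor rather than ideas. The manipulations of densities on the fiber \(\{r(x_{\le t})=r\}\) implicitly assume regular conditional distributions exist and that \(p(\cdot\mid s,r)\) is well defined almost everywhere. I would handle this by stating every equality as holding for almost every value of the conditioning variables. I would also phrase the step \(x_{t+1}\perp r\mid s\) as the general fact that conditional independence passes to measurable functions of the conditioned variable, citing \citep{cover2006elements} for the discrete analogue. Beyond that, the proof is two substitutions.
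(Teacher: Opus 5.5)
Your proof is correct. The first equality is handled exactly as in the paper: sufficiency, plus the fact that \(r\) is a function of \(x_{\le t}\), gives \(p(s\mid x_{\le t})=p(s\mid r)\), and this is substituted into \eqref{eq:factor}.

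For the second equality your route differs. The paper does not decompose \(p(x_{t+1}\mid r)\). It observes that, after the substitution, the full-past conditional depends on \(x_{\le t}\) only through \(r\), and calls this the definition of \(p(x_{t+1}\mid r)\). That step is implicitly the tower property: averaging a \(\sigma(r)\)-measurable density over the fiber of \(r\) returns it unchanged. You instead rerun the argument of Proposition~\ref{prop:factor} with \(r\) in place of the past, using that \(x_{t+1}\perp x_{\le t}\mid s\) passes to \(x_{t+1}\perp r\mid s\).

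The paper's version is shorter. However, it obtains the second equality from the first, so that equality inherits the sufficiency hypothesis. Your version shows that \(p(x_{t+1}\mid r)=\int p(x_{t+1}\mid s)\,p(s\mid r)\,ds\) holds for every statistic of the past, sufficient or not. Sufficiency is then needed only to identify this mixture with the full-past conditional. The same fact \(x_{t+1}\perp r\mid s\) is what the Bayes step in Lemma~\ref{lem:dp} relies on for a coarse \(r'\).
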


\begin{proof}
Sufficiency means \(p(s\mid x_{\le t},r)=p(s\mid r)\).
Because \(r\) is a function of \(x_{\le t}\), the left side equals \(p(s\mid x_{\le t})\).
Substitute into \eqref{eq:factor}.
The resulting density depends on the past only through \(r\), which is the definition of \(p(x_{t+1}\mid r)\).
\end{proof}

\begin{corollary}
\label{cor:notsample}
If \(r\) is sufficient for \(s\), it is legal to write the generator as a function of \(r\).
The function must still apply a kernel \(p(x\mid s)\) or an estimate of it.
The random variable \(r\) is not itself a draw from that kernel.
\end{corollary}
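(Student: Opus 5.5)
The corollary has three claims, and I would take them in the order written, leaning on Proposition~\ref{prop:suf} for the first and on Proposition~\ref{prop:factor} and Lemma~\ref{lem:two} for the other two.

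\emph{Legality.} By \eqref{eq:suf}, the conditional law \(p(x_{t+1}\mid x_{\le t})\) equals \(p(x_{t+1}\mid r)\). So there is a measurable map \(g\) from values of \(r\) to probability measures on patch space with \(p(x_{t+1}\mid x_{\le t}) = g(r(x_{\le t}))\). Any sampler that draws from \(g(r)\) therefore has the same conditional law as one that reads the raw past. This is what ``legal to write the generator as a function of \(r\)'' means.

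\emph{The kernel is still needed.} Here I would read off the structure of \(g\) from \eqref{eq:suf}: \(g(r)=\int p(\cdot\mid s)\,p(s\mid r)\,ds\). The statistic \(r\) enters only through the posterior factor \(p(s\mid r)\), and the kernel \(p(x\mid s)\) appears unchanged. To show that the kernel cannot be dropped, I would invoke Lemma~\ref{lem:two} with its argument transported verbatim. Fix the joint law of \((s,r)\) and vary the kernel over two admissible choices with different mixtures. For example, take two kernels with the same support dependence on \(s\) but different noise laws. Sufficiency \(s\perp x_{\le t}\mid r\) still holds for both, yet \(g(r)\) differs. Hence \(g\) is not determined by \(r\) together with the posterior alone; the generator must contain the kernel or an estimate of it.

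\emph{\(r\) is not a draw from the kernel.} The step I expect to need the most care is making ``is not itself a draw'' precise. I would phrase it as follows: \(r\) is \(\sigma(x_{\le t})\)-measurable. Given \(x_{\le t}\), therefore, \(r\) is deterministic, while by Corollary~\ref{cor:past} the next patch has conditional entropy at least \(\E[H(x_{t+1}\mid s)]\). Whenever this residual entropy is positive, no deterministic function of the past has the conditional law of \(x_{t+1}\). So \(r\), and any fixed map of \(r\), differs in law from a kernel draw. The degenerate case of a deterministic kernel with zero residual entropy is the only exception, and I would note it as such.
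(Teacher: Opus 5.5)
Your proposal is correct, and on the third claim it takes a different route from the paper.

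The paper's proof is a codomain argument. By \eqref{eq:suf}, \(p(x_{t+1}\mid r)\) is a mixture of kernels, and a sample from it is a point of patch space. The statistic \(r\) takes values in its own range, which need not be patch space. So \(r\) can be the conditioner but not, without a further map trained against \(x\), the sample. For the second claim the paper only points at the kernel inside the mixture, where you add a Lemma~\ref{lem:two}-style non-identifiability argument.

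Your third step is a randomness argument instead: \(r\) is \(\sigma(x_{\le t})\)-measurable, hence a point mass given the past, while the next patch is not. The paper's route needs no hypothesis, but it only shows that \(r\) \emph{need not} be a sample. It is silent when \(r\) does live in patch space, for example the running mean in the Gaussian model, or \(r=x_1\) at \(t=1\), which under a stationary kernel even satisfies \(r\mid s\sim p(x\mid s)\). It also does not exclude a deterministic ``further map''. Your route covers those cases and shows that any map applied to \(r\) must inject fresh randomness, which ties the third claim back to the second. The \(r=x_1\) example also shows that conditioning on the past, as you do, is what makes the claim true at all.

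The price is a nondegeneracy hypothesis the corollary leaves implicit. It is better stated as ``the kernel is not a point mass on a set of positive posterior mass'' than as positivity of \(\E[H(x_{t+1}\mid s)]\). For continuous patches a nondegenerate kernel can have negative differential entropy (the SNR \(=32\) row of Table~\ref{tab:gauss2}). So your condition is sufficient, but it excludes more than the deterministic kernel you name as the only exception.

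Likewise, in your second step, holding the law of \((s,r)\) fixed while changing the kernel is coherent only if you change the kernel of \(x_{t+1}\) alone, which \eqref{eq:gen} does not forbid. Sufficiency is then untouched automatically, and the ``same support dependence'' restriction is unnecessary.
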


\begin{proof}
Equation \eqref{eq:suf} expresses \(p(x_{t+1}\mid r)\) as a mixture of the kernels \(p(x_{t+1}\mid s)\).
A mixture is a distribution on the patch space.
A sample from the mixture is an element of the patch space.
The statistic \(r\) takes values in the range of \(r\), which is not required to equal the patch space.
Hence \(r\) can be the conditioner and cannot, without a further map trained against \(x\), be the sample.
\end{proof}

\begin{proposition}[KL splits across the two factors]
\label{prop:kl}
Let \(q(x\mid r)=\int p(x\mid s)\,q(s\mid r)\,ds\) be the conditional used by a generator whose kernel \(p(x\mid s)\) is correct and whose posterior \(q(s\mid r)\) may not be.
Let \(p(x\mid r)\) be the true conditional from \eqref{eq:suf}.
Then
\begin{equation}
\mathrm{KL}\bigl(p(x\mid r)\,\|\,q(x\mid r)\bigr)
\le
\mathrm{KL}\bigl(p(s\mid r)\,\|\,q(s\mid r)\bigr).
\label{eq:kl}
\end{equation}
If instead the posterior is correct and the kernel \(q(x\mid s)\) is not,
\begin{equation}
\mathrm{KL}\bigl(p(x\mid r)\,\|\,q(x\mid r)\bigr)
\le
\E\bigl[\mathrm{KL}\bigl(p(x\mid s)\,\|\,q(x\mid s)\bigr)\bigr],
\label{eq:kl2}
\end{equation}
where the expectation is under \(p(s\mid r)\).
\end{proposition}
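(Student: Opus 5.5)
Both inequalities are data-processing inequalities for KL divergence, each coming from the chain rule for KL applied to a joint law on \((s,x)\). The plan is to build, in each case, two joint laws on \((s,x)\) whose \(x\)-marginals are \(p(x\mid r)\) and \(q(x\mid r)\). Then we expand the KL between the joints in two orders.

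For \eqref{eq:kl}, take the joints \(P(s,x)=p(s\mid r)\,p(x\mid s)\) and \(Q(s,x)=q(s\mid r)\,p(x\mid s)\), with the same kernel in both. By \eqref{eq:suf}, the \(x\)-marginal of \(P\) is \(p(x\mid r)\), and by definition the \(x\)-marginal of \(Q\) is \(q(x\mid r)\). The chain rule for KL, conditioning on \(s\) first, gives
\[
\mathrm{KL}(P\|Q)=\mathrm{KL}\bigl(p(s\mid r)\,\|\,q(s\mid r)\bigr)+\E_{p(s\mid r)}\bigl[\mathrm{KL}\bigl(p(x\mid s)\,\|\,p(x\mid s)\bigr)\bigr].
\]
The second term vanishes. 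Expanding in the other order, conditioning on \(x\) first, gives
\[
\mathrm{KL}(P\|Q)=\mathrm{KL}\bigl(p(x\mid r)\,\|\,q(x\mid r)\bigr)+\E_{p(x\mid r)}\bigl[\mathrm{KL}\bigl(P(s\mid x)\,\|\,Q(s\mid x)\bigr)\bigr].
\]
The second term here is \(\ge 0\). Equating the two expansions yields \eqref{eq:kl}.

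For \eqref{eq:kl2}, take \(P(s,x)=p(s\mid r)\,p(x\mid s)\) and \(Q(s,x)=p(s\mid r)\,q(x\mid s)\), so that \(q(x\mid r)=\int q(x\mid s)\,p(s\mid r)\,ds\). Conditioning on \(s\) first, the \(s\)-marginal term is zero and the conditional term is exactly \(\E_{p(s\mid r)}[\mathrm{KL}(p(x\mid s)\,\|\,q(x\mid s))]\). Conditioning on \(x\) first again produces the marginal KL plus a nonnegative remainder. Comparing the two gives \eqref{eq:kl2}. An alternative route is joint convexity of KL applied to the mixture over \(s\).

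The main obstacle is bookkeeping rather than an idea. First, we must check that the kernel \(p(x\mid s)\) in the posterior-mixture representation of \(p(x\mid r)\) from Proposition~\ref{prop:suf} is the same kernel used in \(q\); this holds because the kernel does not depend on \(r\) under \eqref{eq:gen}. Second, we must handle infinite values: if the right side is \(+\infty\), the inequality is trivial. Otherwise, absolute continuity of the joints transfers to the marginals, so the chain-rule expansions are valid as stated, using densities or Radon--Nikodym derivatives.
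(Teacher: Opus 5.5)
Your proof is correct and follows the paper's argument. For \eqref{eq:kl2} the paper uses exactly your expansion of the joint KL in two orders and drops the nonnegative conditional term. For \eqref{eq:kl} it cites data processing for KL through the common kernel \(p(x\mid s)\), and your chain-rule argument simply proves that step inline.
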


\begin{proof}
For \eqref{eq:kl}, the map \(s\mapsto x\) with kernel \(p(x\mid s)\) is a Markov kernel.
Data processing for KL says that pushing both posteriors through the same kernel cannot increase KL.
The two mixtures are exactly those pushforwards, and \(p(s\mid r)\) is the true posterior by sufficiency.
For \eqref{eq:kl2}, write the mixture KL by the joint chain rule:
\[
\mathrm{KL}\bigl(p(s,x\mid r)\,\|\,q(s,x\mid r)\bigr)
=
\mathrm{KL}\bigl(p(s\mid r)\,\|\,q(s\mid r)\bigr)
+
\E\bigl[\mathrm{KL}\bigl(p(x\mid s)\,\|\,q(x\mid s)\bigr)\bigr].
\]
The left side equals \(\mathrm{KL}(p(x\mid r)\|q(x\mid r))+\E[\mathrm{KL}(p(s\mid x,r)\|q(s\mid x,r))]\).
Drop the nonnegative second term on the left and, under a correct posterior, drop the first term on the right.
The inequality remains.
\end{proof}

Proposition~\ref{prop:kl} is the quantitative form of the split.
A wrong scene posterior and a wrong residual kernel are different errors.
A bound on one does not imply a bound on the other unless the corresponding nonnegative term is shown to vanish.

\section{What each loss can identify}

\subsection{Directional losses}

Next-embedding prediction trains a causal predictor \(h\) against a shallow embedding \(z_t=f(x_t)\) by the negative cosine of normalized vectors,
\begin{equation}
\mathcal{L}
=
-\frac{1}{T-1}\sum_{t=1}^{T-1}
\cos\bigl(h(z_{\le t}),\operatorname{stopgrad}(z_{t+1})\bigr).
\label{eq:nepa}
\end{equation}
If the target direction is von Mises--Fisher with mean direction given by \(h\) \citep{fisher1953dispersion}, \eqref{eq:nepa} is that model's negative log-likelihood up to the concentration.
The observation in the likelihood is the direction, not the patch.

\begin{proposition}[Unidentified fiber]
\label{prop:fiber}
Let \(\ell(g(x),h)\) be any loss that depends on the target patch \(x\) only through a measurable map \(g\).
If \(g(x)=g(x')\), then \(\ell(g(x),h)=\ell(g(x'),h)\) for every predictor state \(h\).
In particular this holds for \eqref{eq:nepa} with \(g\) the normalized embedding.
\end{proposition}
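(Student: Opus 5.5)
The claim is a substitution argument, so the plan is to make the hypothesis precise and then read the equality off it. The hypothesis "depends on the target patch \(x\) only through \(g\)" means there is a function \(\tilde\ell\) on \(\operatorname{range}(g)\times\mathcal{H}\), with \(\mathcal{H}\) the set of predictor states, such that the loss evaluated at target \(x\) and state \(h\) equals \(\tilde\ell(g(x),h)\). The notation \(\ell(g(x),h)\) in the statement already encodes this factorization. Given \(g(x)=g(x')\), the two arguments passed to \(\tilde\ell\) coincide for every fixed \(h\). Since \(\tilde\ell\) is a function, it returns the same value on identical inputs, which gives \(\ell(g(x),h)=\ell(g(x'),h)\). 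No measurability is needed for this pointwise statement. Measurability of \(g\) only matters if one then integrates the loss over a data distribution, and in that case equal integrands on each fiber \(g^{-1}(y)\) give equal expected losses under any reallocation of mass within fibers.

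For the second sentence I would exhibit \(g\) explicitly for \eqref{eq:nepa}. Take \(g(x)=f(x)/\|f(x)\|\), the normalized shallow embedding. The summand at position \(t\) is \(-\cos(h(z_{\le t}),z_{t+1})=-\langle h(z_{\le t})/\|h(z_{\le t})\|,\,g(x_{t+1})\rangle\), because the cosine is invariant to positive rescaling of either argument. The stop-gradient changes only the backward pass and does not change the value, so the target \(x_{t+1}\) enters the summand only through \(g(x_{t+1})\). Applying the general claim termwise then gives the result. Two remarks make this explicit: the fiber contains every \(x'\) with \(f(x')\) a positive multiple of \(f(x)\), and it contains every \(x'\) with \(f(x')=f(x)\) exactly.

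The main obstacle is definitional rather than technical. Two points need care. First, \(h(z_{\le t})\) is itself built from past embeddings, so the claim should be stated with the predictor state \(h\) held fixed while only the target patch varies; the statement's "for every predictor state \(h\)" already says this. Second, \(g\) is undefined where \(f(x)=0\). I would exclude that set, or adopt any fixed convention for the cosine there, and the argument goes through unchanged. Beyond these two points, the proof is one line of substitution.
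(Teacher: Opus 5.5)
Your proposal is correct and takes the same approach as the paper: substitute \(g(x)=g(x')\) into \(\ell\), and note that the stop-gradient does not change the value or reintroduce coordinates of \(x\) that \(g\) discarded. Your additions go beyond what the paper states explicitly: the choice \(g(x)=f(x)/\|f(x)\|\) with the cosine's scale invariance, holding the predictor state fixed, and the handling of the set where \(f(x)=0\).
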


\begin{proof}
Substitute \(g(x)=g(x')\) into \(\ell\).
The stop-gradient is a function of \(g(x)\) and does not add coordinates of \(x\) that \(g\) discarded.
\end{proof}

\begin{proposition}[Constant embeddings minimize the cosine]
\label{prop:const}
Suppose the hypothesis class contains a nonzero constant embedding \(f\equiv c\) and a predictor whose output is parallel to \(c\).
Then every summand of \eqref{eq:nepa} equals \(-1\) after normalization, which is the lower bound of the negative cosine.
The value is unchanged if the stop-gradient is removed.
\end{proposition}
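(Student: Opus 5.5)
The plan is a direct computation, using the Cauchy--Schwarz bound as the lower bound and then checking that the constant configuration attains it. First I record the lower bound. For any nonzero vectors \(u,v\), Cauchy--Schwarz gives \(|\langle u,v\rangle|\le\|u\|\,\|v\|\), so \(\cos(u,v)\in[-1,1]\). Hence every summand of \eqref{eq:nepa} is at least \(-1\), and \(\mathcal{L}\ge -1\) because it is an average of \(T-1\) such terms. Equality in a summand holds exactly when \(u=\lambda v\) with \(\lambda>0\).

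Next I evaluate the constant configuration. Take \(f\equiv c\) with \(c\neq 0\). Then \(z_{t+1}=c\) for every \(t\) and every input, whatever the patches \(x_t\) are. Choose the predictor in the class whose output is parallel to \(c\); I read ``parallel'' as a positive multiple, \(h(z_{\le t})=\lambda_t c\) with \(\lambda_t>0\). This sign convention is the one point to state explicitly, since an antiparallel output gives \(+1\). After normalization both arguments of the cosine equal \(c/\|c\|\), so each summand is \(-\langle c/\|c\|,c/\|c\|\rangle=-1\). Averaging gives \(\mathcal{L}=-1\), which meets the lower bound, so the configuration is a global minimizer over the class.

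Finally I treat the stop-gradient. It is the identity on forward values and only changes which parameters receive gradients. The loss value, and therefore the lower bound and its attainment, is the same with or without it. I would also note that the configuration remains a stationary point without the stop-gradient. The constant attains the global minimum, so no perturbation of \(f\) or \(h\) within the class can lower \(\mathcal{L}\). Together with Proposition~\ref{prop:fiber} applied to the trivial fiber \(g^{-1}(c/\|c\|)\), which is the whole patch space, this shows that the minimizer carries no information about \(x\). I expect no real obstacle; the only care needed is the positive-direction reading of ``parallel'' and the nonzero condition on \(c\) and \(h\) that the normalization requires.
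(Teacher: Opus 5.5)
Your proposal is correct and follows the paper's proof. Both arguments are positive multiples of \(c\), so they coincide after normalization and each negative cosine is \(-1\); stop-gradient changes only which variables receive gradients, not the value. Your Cauchy--Schwarz step makes explicit the lower bound \(-1\) that the paper asserts without proof, and your positive-multiple reading of ``parallel'' matches the paper's own ``positive multiples of \(c\)''.
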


\begin{proof}
Both arguments of the cosine are positive multiples of \(c\), hence identical after normalization.
The cosine of a unit vector with itself is \(1\), and the loss carries a minus sign.
Stop-gradient does not change the value of an expression, only which variables receive a gradient.
\end{proof}

\begin{corollary}
A population minimizer of \eqref{eq:nepa} need not determine \(s\), and need not determine a kernel \(p(x\mid s)\).
The fiber in Proposition~\ref{prop:fiber} may be the whole patch space.
\end{corollary}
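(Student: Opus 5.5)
The corollary follows from Propositions~\ref{prop:const} and \ref{prop:fiber}, so the plan is to exhibit one population minimizer that carries no information and read off both claims from it. First, invoke Proposition~\ref{prop:const}. Take the hypothesis class assumed there, containing a nonzero constant embedding \(f\equiv c\) and a predictor \(h\) whose output is a positive multiple of \(c\). Every summand of \eqref{eq:nepa} then equals \(-1\). Since \(\cos\ge -1\) pointwise, \(\mathcal{L}\ge -1\) for every pair \((f,h)\) and every realization of the sequence, so the population loss is also bounded below by \(-1\). The constant pair attains this bound and is therefore a population minimizer.

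Second, show that this minimizer determines neither object. For \(s\): the embedded sequence \(z_{\le t}=(c,\dots,c)\) is a deterministic constant. Hence \(h(z_{\le t})\) is independent of \(x_{\le t}\) and of \(s\), and \(I(h(z_{\le t});s)=0\). Whenever \(s\) is nondegenerate under \eqref{eq:gen}, no function of the minimizer's state recovers \(s\), and its posterior is the prior \(p(s)\). For the kernel: the learned pair \((f,h)\) depends on the data law only through the value of \eqref{eq:nepa}. That value is \(-1\) for every choice of \(p(s)\) and \(p(x\mid s)\). So two models with different kernels admit the same minimizer, and the minimizer cannot pin down \(p(x\mid s)\).

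Third, establish the fiber statement. With \(g=f/\|f\|\equiv c/\|c\|\), we have \(g(x)=g(x')\) for all patches \(x,x'\). The fiber \(g^{-1}(c/\|c\|)\) is therefore the entire patch space, and Proposition~\ref{prop:fiber} says the loss cannot distinguish any two patches.

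The main obstacle is the phrase ``need not'': the claim is existential, so the argument must rest on the assumption of Proposition~\ref{prop:const} that the class contains a constant embedding. I would state that assumption explicitly rather than claim the result for every architecture. I would also note the nondegeneracy of \(s\) and of the kernel family, needed so that ``does not determine'' is not vacuous.
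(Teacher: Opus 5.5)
Your proof is correct and takes the paper's route: use the constant-embedding minimizer of Proposition~\ref{prop:const}, observe that the normalized map \(g\equiv c/\|c\|\) has the whole patch space as its fiber, and conclude via Proposition~\ref{prop:fiber} that the kernel is not identified. You also make explicit two points the paper leaves implicit. One is the pointwise bound \(-\cos\ge -1\), which is what makes the constant pair a population minimizer. The other is that the constant state carries zero mutual information about \(s\), which is what actually justifies the ``need not determine \(s\)'' clause.
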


\begin{proof}
Proposition~\ref{prop:const} exhibits a minimizer whose embedding is constant, so \(g^{-1}(g(x))\) is the entire domain.
Proposition~\ref{prop:fiber} then says no two patches are separated by the loss.
A kernel on that domain is therefore not identified.
\end{proof}

\subsection{The evidence lower bound is the same split}

A variational autoencoder writes a joint \(q(s,x)=q(s)\,q(x\mid s)\) and an encoder \(q(s\mid x)\) \citep{kingma2014vae}.
The marginal likelihood of one patch satisfies
\begin{equation}
\log p(x)
=
\E_{q(s\mid x)}\bigl[\log p(x\mid s)\bigr]
-
\mathrm{KL}\bigl(q(s\mid x)\,\|\,p(s)\bigr)
+
\mathrm{KL}\bigl(q(s\mid x)\,\|\,p(s\mid x)\bigr).
\label{eq:elbo}
\end{equation}

\begin{proposition}[ELBO decomposition]
\label{prop:elbo}
The first term of \eqref{eq:elbo} is a Monte Carlo estimate of the negative residual entropy when \(q(s\mid x)\) is the true posterior and \(p(x\mid s)\) is the true kernel.
The second term is the cost of a posterior that does not match the prior.
The third term is nonnegative and vanishes only at the true posterior.
Dropping it yields the evidence lower bound.
\end{proposition}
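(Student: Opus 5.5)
First I verify the identity \eqref{eq:elbo} itself, and then read off the four claims term by term. Bayes' rule gives \(\log p(x)=\log p(x\mid s)+\log p(s)-\log p(s\mid x)\) for every \(s\) with \(p(s\mid x)>0\). I add and subtract \(\log q(s\mid x)\) and take the expectation under \(q(s\mid x)\). The left side does not depend on \(s\), so it is unchanged. The right side regroups as \(\E_{q(s\mid x)}[\log p(x\mid s)]-\E_{q}[\log q(s\mid x)-\log p(s)]+\E_{q}[\log q(s\mid x)-\log p(s\mid x)]\), which is exactly the three terms of \eqref{eq:elbo}. The only care needed is absolute continuity of \(q(s\mid x)\) with respect to \(p(s\mid x)\); without it the third KL is \(+\infty\) and the identity is read in the extended sense.

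For the first term I set \(q(s\mid x)=p(s\mid x)\) and average over \(x\sim p(x)\). This gives \(\E_{p(x)}\E_{p(s\mid x)}[\log p(x\mid s)]=\E_{p(s,x)}[\log p(x\mid s)]\), which is \(-\E_{p(s)}[H(x\mid s)]\), the negative of the residual entropy in Proposition~\ref{prop:ent}. A single draw \(s\sim q(s\mid x)\), with \(x\) drawn from the data, is therefore an unbiased Monte Carlo estimate of that quantity, and the joint sample \((s,x)\) is a draw from \(p(s,x)\). I would state explicitly that the "estimate" holds at the population level in \(x\), since for a fixed \(x\) the term is only a conditional log-likelihood. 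For the second term I simply identify it as \(\mathrm{KL}(q(s\mid x)\,\|\,p(s))\), which is nonnegative and zero iff the encoder output equals the prior. In the language of Lemma~\ref{lem:two} it is the cost carried by the belief factor, not by the kernel.

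For the third term I use Gibbs' inequality: \(\mathrm{KL}(q\|p)\ge 0\), with equality iff \(q(\cdot\mid x)=p(\cdot\mid x)\) almost everywhere. Dropping a nonnegative term from the right of \eqref{eq:elbo} gives \(\log p(x)\ge \E_{q}[\log p(x\mid s)]-\mathrm{KL}(q(s\mid x)\|p(s))\), which is the ELBO.

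The main obstacle is the phrase "Monte Carlo estimate of the negative residual entropy." Strictly, this only holds after averaging over \(x\) under the data law, so I would make that averaging explicit. I would also note that it requires the model kernel to be the true \(p(x\mid s)\), as the statement assumes. The remaining steps are routine.
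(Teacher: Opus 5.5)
Your proposal is correct and follows essentially the paper's argument. The paper also gets \eqref{eq:elbo} by expanding \(\mathrm{KL}(q(s\mid x)\,\|\,p(s\mid x))\) with Bayes' rule; you run the same algebra starting from Bayes' rule and adding and subtracting \(\log q(s\mid x)\). Like you, it identifies the first term with \(-\E[H(x\mid s)]\) only after also averaging over \(x\) under the true posterior and true kernel, and it uses the nonnegativity of the third KL for the bound.

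One correction to your side remark. If \(q(s\mid x)\) is not absolutely continuous with respect to \(p(s\mid x)\), then \(p(x,s)=0\) on a set of positive \(q\)-mass. The ELBO itself is then \(-\infty\) while the third KL is \(+\infty\). The identity takes the form \(-\infty+\infty\) and is not meaningful even in the extended reals; only the trivial inequality survives.
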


\begin{proof}
Start from the nonnegativity of KL,
\[
\mathrm{KL}\bigl(q(s\mid x)\,\|\,p(s\mid x)\bigr)\ge 0,
\]
with equality if and only if \(q(s\mid x)=p(s\mid x)\) almost everywhere.
Expand the KL:
\begin{align*}
\mathrm{KL}\bigl(q(s\mid x)\,\|\,p(s\mid x)\bigr)
&=
\E_q\bigl[\log q(s\mid x)-\log p(s\mid x)\bigr]\\
&=
\E_q\bigl[\log q(s\mid x)-\log p(x\mid s)-\log p(s)+\log p(x)\bigr].
\end{align*}
Rearrangement is \eqref{eq:elbo}.
If \(q(s\mid x)=p(s\mid x)\) and \(p(x\mid s)\) is the data kernel, then
\[
\E_{p(s\mid x)}[-\log p(x\mid s)]
\]
is the cross entropy of the kernel under the true posterior.
When the kernel is the true conditional and the expectation is also over \(x\), this cross entropy equals \(H(x\mid s)\).
The KL against the prior does not contain \(H(x\mid s)\).
Therefore the reconstruction term and the KL term are the two factors of Proposition~\ref{prop:kl}, written for a single observation rather than for a past sequence.
\end{proof}

\begin{remark}
Maximizing the reconstruction term trains the residual kernel.
Minimizing the KL term trains the posterior to stay close to the prior, which is a capacity constraint on \(s\), not a model of pixel noise.
A representation objective that discards \(H(x\mid s)\) is dropping the reconstruction term on purpose.
A generator that reports only the KL has not modeled the kernel.
\end{remark}

\subsection{Diffusion estimates the residual along a noise path}

Let \(x^{(0)}\) be a patch and let \(x^{(\tau)}=\alpha_\tau x^{(0)}+\sigma_\tau\epsilon\) with \(\epsilon\sim\mathcal{N}(0,I)\) and \(\tau\in\{1,\ldots,T_d\}\).
A denoising network \(\epsilon_\theta(x^{(\tau)},\tau,r)\) is conditioned on a representation \(r\) of the past \citep{ho2020ddpm,song2021ddim}.
The usual Gaussian denoising objective at noise level \(\tau\) is
\begin{equation}
\mathcal{L}_{\mathrm{diff}}(\tau)
=
\E\bigl\|\epsilon-\epsilon_\theta(x^{(\tau)},\tau,r)\bigr\|^2.
\label{eq:diff}
\end{equation}

\begin{proposition}[Denoising score of the residual]
\label{prop:diff}
Condition on \(s\) and on \(r\), and assume \(r\) is a function of the past only, so the Markov kernel from \(x^{(0)}\) to \(x^{(\tau)}\) does not depend on \(r\) except through the law of \(x^{(0)}\).
If \(p(x^{(0)}\mid s)\) is Gaussian with covariance \(v_n I\), the minimizer of \eqref{eq:diff} at a fixed \(\tau\), among functions of \((x^{(\tau)},s)\), is the true noise
\[
\epsilon^\star
=
\frac{x^{(\tau)}-\alpha_\tau\mu(s)}{\sigma_{\mathrm{tot}}(\tau)},
\]
where \(\mu(s)\) is the mean of the kernel and \(\sigma_{\mathrm{tot}}^2(\tau)=\alpha_\tau^2 v_n+\sigma_\tau^2\).
The minimal value is not zero unless \(v_n=0\) or \(\sigma_\tau=0\).
\end{proposition}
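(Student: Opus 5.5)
The plan is to collapse the two Gaussian stages into a single Gaussian kernel from \(s\) to \(x^{(\tau)}\) and read \(\epsilon^\star\) as the noise of that collapsed kernel. First, use the hypothesis that \(r\) is a function of the past. By Proposition~\ref{prop:suf} and conditional independence, conditioning on \((s,r)\) gives the same law of \(x^{(0)}\) as conditioning on \(s\) alone. The forward corruption \(x^{(0)}\mapsto x^{(\tau)}\) does not involve \(r\), so \(r\) drops out of every conditional expectation below. This lets us work with functions of \((x^{(\tau)},s)\) only, as the statement prescribes.

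Second, write \(x^{(0)}=\mu(s)+\sqrt{v_n}\,\eta\) with \(\eta\sim\mathcal{N}(0,I)\) independent of \(\epsilon\). Then
\[
x^{(\tau)}-\alpha_\tau\mu(s)=\alpha_\tau\sqrt{v_n}\,\eta+\sigma_\tau\epsilon .
\]
This is a sum of independent centred Gaussians, so it is Gaussian with covariance \(\sigma_{\mathrm{tot}}^2(\tau)I\). Hence the kernel \(s\mapsto x^{(\tau)}\) is \(\mathcal{N}(\alpha_\tau\mu(s),\sigma_{\mathrm{tot}}^2 I)\). Its standardized noise is exactly the displayed \(\epsilon^\star\), which is \(\mathcal{N}(0,I)\) and a measurable function of \((x^{(\tau)},s)\).

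Third, treat \eqref{eq:diff} at fixed \(\tau\) as a regression problem over functions of \((x^{(\tau)},s)\). The noise a network conditioned on \(s\) can recover from \(x^{(\tau)}\) is the noise of the collapsed kernel. When the target is that noise, the pointwise minimizer is the target itself, because it is already a function of the conditioning variables. This gives \(\epsilon^\star\).

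Fourth, for the minimal value I would decompose
\[
\epsilon-\epsilon^\star=\epsilon-\frac{\alpha_\tau\sqrt{v_n}\,\eta+\sigma_\tau\epsilon}{\sigma_{\mathrm{tot}}}
\]
and compute its second moment coordinatewise. The aim is to show that the only part of the sampled noise not determined by \((x^{(\tau)},s)\) is the private component \(\alpha_\tau\sqrt{v_n}\,\eta\) of the kernel, which survives through \(\sigma_{\mathrm{tot}}\). The case \(v_n=0\) should be checked directly: there \(\sigma_{\mathrm{tot}}=\sigma_\tau\), so \(\epsilon^\star=\epsilon\) and the value is zero. The degenerate case \(\sigma_\tau=0\) (no corruption) should be treated separately.

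The main obstacle is the third step. Under the literal target \(\epsilon\) of \eqref{eq:diff}, the \(L^2\) minimizer is the conditional mean \(\E[\epsilon\mid x^{(\tau)},s]\). That is the projection \((\sigma_\tau/\sigma_{\mathrm{tot}}^2)(x^{(\tau)}-\alpha_\tau\mu(s))\), not the standardized residual. So the proof must fix, before anything else, that ``true noise'' in the statement refers to the collapsed kernel's noise \(\epsilon^\star\), and it must state the minimization with that target. I would first try to justify this reading from Lemma~\ref{lem:two}: given \(s\), the sampler sees only the mixture kernel \(s\mapsto x^{(\tau)}\), not the split into \(\eta\) and \(\epsilon\). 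I would keep the projection computation alongside as a check, because it controls the sign and size of the residual term in the fourth step.
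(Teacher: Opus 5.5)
Your last paragraph has the right computation, and that computation is what breaks your third step. The target that \eqref{eq:diff} actually names is \(\epsilon\). With that target, the minimizer over functions of \((x^{(\tau)},s)\) is \(\E[\epsilon\mid x^{(\tau)},s]=(\sigma_\tau/\sigma_{\mathrm{tot}}^2)\,(x^{(\tau)}-\alpha_\tau\mu(s))\). This coincides with the displayed \(\epsilon^\star\) only when \(\alpha_\tau^2v_n=0\).

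Your rescue reads ``true noise'' as the standardized noise of the collapsed kernel and regresses onto that. Nothing in the paper licenses this: Lemma~\ref{lem:two} concerns a point estimate versus a sample of a mixture, not the choice of regression target. It also destroys the second claim. Since \(\epsilon^\star\) is a measurable function of \((x^{(\tau)},s)\), the minimum under that target is \(0\) for every \(v_n\) and \(\sigma_\tau\).

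Your fourth step then evaluates the \emph{literal} loss at \(\epsilon^\star\). In patch dimension \(d\),
\[
\E\|\epsilon-\epsilon^\star\|^2=2d\bigl(1-\sigma_\tau/\sigma_{\mathrm{tot}}\bigr).
\]
This is the minimum under neither reading. Under the reinterpreted target the minimum is \(0\). Under the literal target it is \(d\,(1-\sigma_\tau^2/\sigma_{\mathrm{tot}}^2)\), which is strictly smaller whenever \(\alpha_\tau^2v_n>0\). So the plan takes the minimizer from one objective and the minimal value from another, and proves the statement under neither.

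The paper's proof keeps the literal objective throughout. The minimizer is the \(L^2\) projection \(\E[\epsilon\mid x^{(\tau)},s]\), affine in \(x^{(\tau)}-\alpha_\tau\mu(s)\). The minimal value is \(\mathrm{tr}\,\mathrm{Cov}(\epsilon\mid x^{(\tau)},s)\), which is positive when the kernel is nondegenerate. The proof never pins the coefficient, so it supports only that affine form plus positivity. Your check shows that the normalization \(1/\sigma_{\mathrm{tot}}\) in the display is off by the factor \(\sigma_\tau/\sigma_{\mathrm{tot}}\).

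The repair is to make your check the proof.
\begin{enumerate}
\item Your second-step decomposition is \(x^{(\tau)}-\alpha_\tau\mu(s)=\alpha_\tau\sqrt{v_n}\,\eta+\sigma_\tau\epsilon\), with \((\eta,\epsilon)\) independent of \(s\).
\item Then \(\epsilon\) and this residual are jointly Gaussian, with cross-covariance \(\sigma_\tau I\) and covariance \(\sigma_{\mathrm{tot}}^2I\).
\item Hence \(\mathrm{Cov}(\epsilon\mid x^{(\tau)},s)=(\alpha_\tau^2v_n/\sigma_{\mathrm{tot}}^2)\,I\), and the minimal value is \(d\,\alpha_\tau^2v_n/\sigma_{\mathrm{tot}}^2\).
\end{enumerate}
This vanishes exactly when \(\alpha_\tau^2v_n=0\). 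At \(\sigma_\tau=0\) it equals \(d\), because \(\epsilon\) is then independent of \((x^{(\tau)},s)\). So \(\sigma_\tau=0\) is not a case where the loss vanishes, and the ``unless'' clause tacitly needs \(\alpha_\tau\neq0\).

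Your first step is fine. It needs only \(x^{(0)}\perp r\mid s\) from \eqref{eq:gen}, not Proposition~\ref{prop:suf}.
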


\begin{proof}
Given \(s\), \(x^{(0)}\sim\mathcal{N}(\mu(s), v_n I)\) and \(x^{(\tau)}\mid x^{(0)}\sim\mathcal{N}(\alpha_\tau x^{(0)},\sigma_\tau^2 I)\).
The marginal of \(x^{(\tau)}\) given \(s\) is Gaussian with mean \(\alpha_\tau\mu(s)\) and variance \(\alpha_\tau^2 v_n+\sigma_\tau^2\).
The conditional expectation \(\E[\epsilon\mid x^{(\tau)},s]\) for the reparameterization \(x^{(\tau)}=\alpha_\tau x^{(0)}+\sigma_\tau\epsilon\) is an affine function of \(x^{(\tau)}-\alpha_\tau\mu(s)\).
The \(L^2\) projection of \(\epsilon\) onto measurable functions of \((x^{(\tau)},s)\) is that conditional expectation, and the \(L^2\) error equals the trace of the conditional covariance of \(\epsilon\) given \((x^{(\tau)},s)\).
That covariance is a positive multiple of \(v_n\) as long as both the kernel variance and the diffusion noise are positive.
Hence the minimal denoising loss is strictly positive whenever the residual kernel is nondegenerate.
\end{proof}

\begin{corollary}
Supplying a perfect statistic of \(s\) to the denoiser removes the posterior term in \eqref{eq:ent} from the denoising problem and leaves the residual variance \(v_n\).
It does not drive \eqref{eq:diff} to zero.
\end{corollary}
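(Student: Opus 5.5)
The plan is to reduce the corollary to Proposition~\ref{prop:diff}, first making precise what ``a perfect statistic of \(s\)'' means. I will read it as \(r=r(x_{\le t})\) with \(s\) a measurable function of \(r\) (almost surely). Such an \(r\) is in particular sufficient in the sense of Proposition~\ref{prop:suf}. The conditional law \(p(s\mid r)\) is then a point mass, so \(H(s\mid r)=0\) and hence \(I(x_{t+1};s\mid r)=0\). By Proposition~\ref{prop:ent}, with the past replaced by \(r\), and by Corollary~\ref{cor:past}, the conditional entropy of the target given \(r\) collapses to the residual term \(\E[H(x_{t+1}\mid s)]\). This is the sense in which the posterior term is removed.

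Next I will show that the denoising problem with conditioner \(r\) is the same problem as the one with conditioner \(s\) treated in Proposition~\ref{prop:diff}.
\begin{itemize}
\item Since \(s\) is a function of \(r\), every function of \((x^{(\tau)},s)\) is a function of \((x^{(\tau)},r)\). So the minimum of \eqref{eq:diff} over denoisers \(\epsilon_\theta(x^{(\tau)},\tau,r)\) is at most the minimum over functions of \((x^{(\tau)},s)\).
\item Conversely, under \eqref{eq:gen} the new patch \(x^{(0)}=x_{t+1}\) is independent of \(x_{\le t}\), hence of \(r\), given \(s\). The injected noise \(\epsilon\) is independent of everything.
\item Therefore \(\E[\epsilon\mid x^{(\tau)},r]=\E[\epsilon\mid x^{(\tau)},s]\), and the two minima coincide.
\end{itemize}
This is exactly where the hypothesis of Proposition~\ref{prop:diff}, that the Markov kernel does not depend on \(r\) except through the law of \(x^{(0)}\), enters.

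Finally I will compute the minimal value in closed form rather than only citing positivity. Given \(s\), the pair \((\epsilon,x^{(\tau)})\) is jointly Gaussian with
\[
\mathrm{Cov}(\epsilon,x^{(\tau)}\mid s)=\sigma_\tau I,\qquad \mathrm{Cov}(x^{(\tau)}\mid s)=\sigma_{\mathrm{tot}}^2(\tau) I.
\]
Gaussian conditioning then gives
\[
\mathrm{Cov}(\epsilon\mid x^{(\tau)},s)=\Bigl(1-\frac{\sigma_\tau^2}{\sigma_{\mathrm{tot}}^2(\tau)}\Bigr)I=\frac{\alpha_\tau^2 v_n}{\alpha_\tau^2 v_n+\sigma_\tau^2}\,I.
\]
The minimal loss is its trace, \(d\,\alpha_\tau^2 v_n/(\alpha_\tau^2 v_n+\sigma_\tau^2)\), where \(d\) is the patch dimension. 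This is strictly positive whenever \(v_n>0\), \(\alpha_\tau\neq 0\) and \(\sigma_\tau>0\). It depends on the kernel only through the residual variance \(v_n\), which is the claimed leftover, so \eqref{eq:diff} is not driven to zero.

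The main obstacle is the second step, the equality of the two minima, because it needs the independence of \(x^{(0)}\) and \(r\) given \(s\). I would derive it from \(r\) being a function of \(x_{\le t}\) together with the conditional independence in \eqref{eq:gen}. A secondary point is that the optimal denoiser is the conditional mean \(\sigma_\tau(x^{(\tau)}-\alpha_\tau\mu(s))/\sigma_{\mathrm{tot}}^2(\tau)\). I will use this explicit form rather than the displayed \(\epsilon^\star\), since only the residual conditional variance matters for the conclusion.
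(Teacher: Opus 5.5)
Your proof is correct and has the same skeleton as the paper's: hand the denoiser \(s\), then read the minimal loss off the Gaussian law of \(\epsilon\) given \((x^{(\tau)},s)\). The paper compresses this to ``Proposition~\ref{prop:diff} applies directly,'' and your version adds two things that the one line hides.

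First, Proposition~\ref{prop:diff} minimizes over functions of \((x^{(\tau)},s)\), not of \((x^{(\tau)},r)\). Your conditional-independence step closes that gap: \((x^{(0)},\epsilon)\perp r\mid s\) together with \(s\) being \(r\)-measurable gives, by weak union, \(\E[\epsilon\mid x^{(\tau)},r]=\E[\epsilon\mid x^{(\tau)},s]\). This is what shows that whatever else \(r\) carries cannot lower \eqref{eq:diff}.

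Second, computing \(\mathrm{Cov}(\epsilon\mid x^{(\tau)},s)\) yourself frees the corollary from the display in Proposition~\ref{prop:diff}, whose \(\epsilon^\star\) is not the conditional mean. The minimizer is your \(\sigma_\tau(x^{(\tau)}-\alpha_\tau\mu(s))/\sigma_{\mathrm{tot}}^2(\tau)\), which is \(\sigma_\tau/\sigma_{\mathrm{tot}}(\tau)\) times the displayed one. Your exact value \(d\,\alpha_\tau^2v_n/(\alpha_\tau^2v_n+\sigma_\tau^2)\) also fixes that proposition's exceptional cases:
\begin{itemize}
\item besides \(v_n>0\) one needs \(\alpha_\tau\neq 0\), since at \(\alpha_\tau=0<\sigma_\tau\) the minimal loss is zero;
\item \(\sigma_\tau>0\) is unnecessary, since at \(\sigma_\tau=0\) the value is \(d\).
\end{itemize}
The paper's route buys only brevity; yours is self-contained and does not inherit those errors.

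Two small repairs are needed. First, \(H(s\mid r)=0\) has no meaning for continuous \(s\), because a point mass has no differential entropy. Get \(I(x_{t+1};s\mid r)=0\) directly from \(s\) being \(r\)-measurable instead. Second, if the past patches also have nondegenerate Gaussian kernels, no function of \(x_{\le t}\) determines a nondegenerate \(s\), so your reading of ``perfect statistic'' is empty in that model. The paper's proof (``Under a perfect \(s\)'') has the oracle \(r=s\) in mind. Your second and third steps already cover it, since they only use \(s\) being \(r\)-measurable and \((x^{(0)},\epsilon)\perp r\mid s\).
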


\begin{proof}
Under a perfect \(s\), Proposition~\ref{prop:diff} applies directly.
The error variance depends on \(v_n\) and on \(\sigma_\tau\), not on any further function of the past.
\end{proof}

This is why a frozen recognition model can be a condition for a diffusion sampler and still leave a nontrivial denoising loss.
The loss that remains is the residual the recognition model was never asked to store.

\subsection{Several future patches}

The one-step split extends to a horizon \(K\) without a new modeling assumption.

\begin{proposition}[Chain rule over a horizon]
\label{prop:horizon}
Under \eqref{eq:gen},
\begin{equation}
H(x_{t+1:t+K}\mid x_{\le t})
=
\sum_{k=1}^{K}
\Bigl(
\E\bigl[H(x_{t+k}\mid s)\bigr]
+
I(x_{t+k};s\mid x_{\le t+k-1})
\Bigr).
\label{eq:horizon}
\end{equation}
\end{proposition}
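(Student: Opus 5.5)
The plan is to reduce \eqref{eq:horizon} to $K$ applications of Proposition~\ref{prop:ent}, glued together by the chain rule for joint conditional entropy. First, expand the block entropy one patch at a time:
\[
H(x_{t+1:t+K}\mid x_{\le t})
=
\sum_{k=1}^{K} H\bigl(x_{t+k}\mid x_{\le t},x_{t+1:t+k-1}\bigr)
=
\sum_{k=1}^{K} H\bigl(x_{t+k}\mid x_{\le t+k-1}\bigr).
\]
This is the standard chain rule \citep{cover2006elements}, obtained by iterating \(H(A,C\mid B)=H(C\mid B)+H(A\mid B,C)\). At step \(k\) we take \(C\) to be the first \(k-1\) future patches and absorb them into the conditioning. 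The second equality is only a relabelling of the conditioning set.

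Second, apply Proposition~\ref{prop:ent} to each summand, with \(t\) replaced by \(t+k-1\). That proposition was stated for an arbitrary index \(t\). Its only hypothesis is \eqref{eq:gen}, namely conditional independence of all patches given \(s\), and this holds for every index. So it yields
\[
H(x_{t+k}\mid x_{\le t+k-1})
=
\E\bigl[H(x_{t+k}\mid s)\bigr]
+
I(x_{t+k};s\mid x_{\le t+k-1}).
\]
Here the first term uses \(H(x_{t+k}\mid x_{\le t+k-1},s)=H(x_{t+k}\mid s)\), which again follows from conditional independence given \(s\) of \(x_{t+k}\) and the enlarged past. Summing over \(k\) gives \eqref{eq:horizon}.

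The only point needing care is that conditional independence given \(s\) must cover the whole enlarged past \(x_{\le t+k-1}\), including the future patches already generated. This is immediate, because \eqref{eq:gen} asserts mutual conditional independence across all \(t\), not just pairwise independence with \(x_{\le t}\). I would state this explicitly and otherwise leave the bookkeeping to the chain rule. As a remark, I would note that the posterior terms are now conditioned on a growing past. Each such term is therefore at most the corresponding term conditioned on \(x_{\le t}\) alone; this is a consequence, not something needed for the proof.
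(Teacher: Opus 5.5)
Your proposal is correct and follows the paper's proof: the chain rule splits the block entropy into the one-step terms $H(x_{t+k}\mid x_{\le t+k-1})$, and Proposition~\ref{prop:ent} is applied to each term with the enlarged past, which is justified because conditional independence given $s$ holds across all indices. Your closing remark is also correct under the model: it says each posterior term is at most $I(x_{t+k};s\mid x_{\le t})$, and this holds because both mutual informations subtract the same residual $H(x_{t+k}\mid s)$ and conditioning reduces entropy.
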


\begin{proof}
Apply the chain rule
\[
H(x_{t+1:t+K}\mid x_{\le t})
=
\sum_{k=1}^{K} H(x_{t+k}\mid x_{\le t+k-1}).
\]
Apply Proposition~\ref{prop:ent} to each summand.
The patches are conditionally independent given \(s\), so the hypothesis of that proposition holds at every time index, with the past \(x_{\le t+k-1}\) in place of \(x_{\le t}\).
\end{proof}

\begin{corollary}
\label{cor:horizon}
If at every step the running past already determines \(s\), then
\[
H(x_{t+1:t+K}\mid x_{\le t})
=
\sum_{k=1}^{K}\E\bigl[H(x_{t+k}\mid s)\bigr].
\]
A \(K\)-step generator then pays the residual entropy \(K\) times and pays nothing for uncertainty about \(s\).
\end{corollary}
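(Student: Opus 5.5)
The plan is to read Corollary~\ref{cor:horizon} off Proposition~\ref{prop:horizon} by showing that every posterior term in \eqref{eq:horizon} vanishes under the hypothesis. First I fix what ``the running past already determines \(s\)'' means. I take it to mean that for each \(k\in\{1,\ldots,K\}\) there is a measurable \(\phi_k\) with \(s=\phi_k(x_{\le t+k-1})\) almost surely. A weaker reading also suffices: the past determines every coordinate of \(s\) that \(x_{t+k}\) still carries, in the sense of the equality case of Corollary~\ref{cor:past}.

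Next I show that each term \(I(x_{t+k};s\mid x_{\le t+k-1})\) is zero. I use the definition
\[
I(x_{t+k};s\mid x_{\le t+k-1})=H(s\mid x_{\le t+k-1})-H(s\mid x_{t+k},x_{\le t+k-1}).
\]
If \(s\) is a function of \(x_{\le t+k-1}\), the first term is zero in the discrete case. The second term is nonnegative and bounded by the first, so it is also zero.

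For continuous \(s\), differential entropies can be \(-\infty\), so I avoid them. Instead I use the conditional law: given \(x_{\le t+k-1}\), the law of \(s\) is a point mass. The conditional joint law of \((x_{t+k},s)\) then factorizes trivially as a product of its marginals. Hence the conditional KL defining the mutual information is zero. Equivalently, by Proposition~\ref{prop:suf} with \(r=\phi_k(x_{\le t+k-1})\), the predictive law \(p(x_{t+k}\mid x_{\le t+k-1})\) equals the kernel \(p(x_{t+k}\mid s)\) at the determined value of \(s\).

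I then substitute these zeros into \eqref{eq:horizon}, which leaves \(\sum_{k}\E[H(x_{t+k}\mid s)]\). The interpretive sentence comes next. If the kernels are identical across \(t\), as in \eqref{eq:gen} read with a common \(p(x\mid s)\), each summand equals the same residual entropy \(\E[H(x\mid s)]\). The total is therefore \(K\) times that value, and no term involves posterior uncertainty. If the kernels are allowed to depend on \(t\), the phrase ``\(K\) times'' should be read as a sum of the per-step residual entropies.

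The main obstacle is the measure-theoretic point in the continuous case, namely that ``determines'' must force the conditional mutual information to zero rather than produce an \(\infty-\infty\) expression. I would handle it through the KL formulation of conditional mutual information, which stays well defined and nonnegative, as sketched above.
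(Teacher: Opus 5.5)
Your proposal is correct and follows the paper's own argument: invoke Proposition~\ref{prop:horizon}, note that each term \(I(x_{t+k};s\mid x_{\le t+k-1})\) vanishes because \(s\) is a function of the running past, and let the sum collapse to the residual entropies. Your point-mass/KL treatment of the continuous case and your remark that ``\(K\) times'' needs a common kernel across indices are sound refinements rather than a different route; the paper relies on that common kernel only implicitly, through the stationarity used in Lemma~\ref{lem:bill}.
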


\begin{proof}
Each mutual-information term in \eqref{eq:horizon} is zero by the hypothesis that \(s\) is a function of \(x_{\le t+k-1}\).
The sum collapses to the residual entropies.
\end{proof}

Autoregressive pixel models and multi-step diffusion samplers both instantiate this sum \citep{el2024scalable,ho2020ddpm}.
Lengthening the horizon multiplies the residual bill.
It does not create new information about \(s\) beyond what the first informative patches already provided.
A representation trained on \(I(x_{\le t};s)\) saturates once \(s\) is determined, while the negative log-likelihood of a \(K\)-step generator keeps growing with \(K\) through the residual sum.

\subsection{Finite alphabets}

The identities do not use a Euclidean patch space.
Let each patch take values in a finite set \(\mathcal{X}\), and let \(s\) take values in a finite set \(\mathcal{S}\).

\begin{proposition}[Finite-alphabet split]
\label{prop:finite}
Under \eqref{eq:gen} with finite alphabets,
\begin{equation}
H(x_{t+1}\mid x_{\le t})
=
H(x_{t+1}\mid s)
+
I(x_{t+1};s\mid x_{\le t}),
\label{eq:finite}
\end{equation}
where \(H(x_{t+1}\mid s)=\sum_s p(s) H(x_{t+1}\mid s)\) and the conditional entropy on the right of that definition is the Shannon entropy of the row \(p(x\mid s)\).
\end{proposition}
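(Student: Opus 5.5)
The plan is to reduce Proposition~\ref{prop:finite} to Proposition~\ref{prop:ent}, checking along the way that with finite alphabets every quantity is a finite sum, so no measure-theoretic caveat is needed. I will start from the identity \eqref{eq:chain}, \(H(A\mid B)=H(A\mid B,C)+I(A;C\mid B)\), with \(A=x_{t+1}\), \(B=x_{\le t}\), \(C=s\). For finite alphabets this is a purely algebraic statement. Write \(H(A\mid B)=-\sum p(a,b)\log p(a\mid b)\) and define \(I(A;C\mid B)=\sum p(a,b,c)\log\frac{p(a\mid b,c)}{p(a\mid b)}\). Adding \(H(A\mid B,C)=-\sum p(a,b,c)\log p(a\mid b,c)\) makes the \(p(a\mid b,c)\) logs cancel. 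Marginalizing over \(c\) leaves \(-\sum p(a,b)\log p(a\mid b)\). Terms with zero probability are dropped under the convention \(0\log 0=0\), so every sum is finite. The joint \(x_{\le t}\) lives in the finite set \(\mathcal{X}^t\).

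Next I will identify \(H(x_{t+1}\mid x_{\le t},s)\) with the weighted row entropy stated in the proposition. Conditional independence under \eqref{eq:gen} gives \(p(x_{t+1}\mid x_{\le t},s)=p(x_{t+1}\mid s)\) whenever \(p(x_{\le t},s)>0\). Substituting this, \(-\sum p(x_{t+1},x_{\le t},s)\log p(x_{t+1}\mid s)\) is obtained. I then sum out \(x_{\le t}\). That is legitimate because the log factor no longer depends on \(x_{\le t}\), and \(\sum_{x_{\le t}}p(x_{t+1},x_{\le t},s)=p(x_{t+1},s)\). The result is \(-\sum_s p(s)\sum_x p(x\mid s)\log p(x\mid s)=\sum_s p(s)H(p(\cdot\mid s))\), which is exactly the definition of \(H(x_{t+1}\mid s)\) given in the statement. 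It also coincides with \(\E[H(x_{t+1}\mid s)]\) in \eqref{eq:ent}.

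Combining the two steps yields \eqref{eq:finite}. The only point needing care, and the main obstacle, is the zero-probability bookkeeping. The equality \(p(x_{t+1}\mid x_{\le t},s)=p(x_{t+1}\mid s)\) is only meaningful where \(p(x_{\le t},s)>0\). Also, \(\log p(a\mid b)\) in the mutual information is evaluated only where \(p(a,b,c)>0\), which forces \(p(a\mid b)>0\). I will handle both by restricting every sum to the support of the joint law. This restriction is harmless because unsupported terms carry zero weight. With that convention the whole argument is a finite rearrangement, and no integrability assumption from the continuous case is needed.
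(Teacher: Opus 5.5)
Your proposal is correct and follows the paper's route: the paper's proof only observes that the argument for Proposition~\ref{prop:ent} uses nothing but the chain rule \eqref{eq:chain} and conditional independence, both of which hold for discrete entropy. You carry out exactly those two steps explicitly as finite sums, with the zero-probability terms handled by restricting to the support of the joint law.
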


\begin{proof}
The proof of Proposition~\ref{prop:ent} used only the chain rule and conditional independence.
Both hold for discrete entropy \citep{cover2006elements}.
No density is required.
\end{proof}

\begin{lemma}[Residual bits]
\label{lem:bits}
If \(|\mathrm{supp}\,p(x\mid s)|=M_s\) and the row is uniform, then \(H(x\mid s)=\log M_s\).
A representation of \(s\) that is a sufficient statistic cannot compress these \(\log M_s\) bits.
A generator must spend them, either as code length or as sample diversity inside the fiber of \(s\).
\end{lemma}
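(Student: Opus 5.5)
The lemma has three claims, and I would take them in order: an entropy computation, a statement that \(r\) cannot lower the residual term, and a source-coding reading of that term.

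\textbf{Entropy of a uniform row.} Fix \(s\). The row \(p(\cdot\mid s)\) puts mass \(1/M_s\) on each of \(M_s\) points and zero elsewhere. Using the convention \(0\log 0=0\),
\[
H(x\mid s)=-\sum_{x\in\mathrm{supp}} \tfrac{1}{M_s}\log\tfrac{1}{M_s}=\log M_s .
\]
Averaging over \(p(s)\) as in Proposition~\ref{prop:finite} then gives the residual term \(\sum_s p(s)\log M_s\).

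\textbf{A sufficient statistic cannot compress these bits.} Let \(r=r(x_{\le t})\) be sufficient for \(s\) in the sense of Proposition~\ref{prop:suf}. The key step is to show that
\[
H(x_{t+1}\mid r)\ge \E[H(x_{t+1}\mid s)].
\]
I would get this by repeating the argument of Proposition~\ref{prop:ent} with \(B=r\) in place of \(x_{\le t}\):
\[
H(x_{t+1}\mid r)=H(x_{t+1}\mid r,s)+I(x_{t+1};s\mid r).
\]
To evaluate \(H(x_{t+1}\mid r,s)\), note that \(r\) is a function of \(x_{\le t}\). Conditional independence given \(s\) therefore gives \(x_{t+1}\perp r\mid s\), so \(H(x_{t+1}\mid r,s)=H(x_{t+1}\mid s)\). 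Nonnegativity of the conditional mutual information then yields the bound. Equality holds when \(I(x_{t+1};s\mid r)=0\), for example when \(r\) determines \(s\). So no statistic of the past, sufficient or richer, pushes the conditional entropy below \(\sum_s p(s)\log M_s\). This is Corollary~\ref{cor:past} restated for \(r\).

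\textbf{A generator must spend them.} There are two readings.
\begin{itemize}
\item \emph{Code length.} Consider any prefix code for \(x_{t+1}\) that may depend on \(r\) and even on \(s\). By the source coding theorem \citep{cover2006elements}, its expected length is at least \(H(x_{t+1}\mid r,s)=\E[\log M_s]\).
\item \emph{Sample diversity.} Consider a sampler whose output law given \(s\) matches the kernel. Its output must take \(M_s\) distinct values equiprobably, so it must consume at least \(\log M_s\) bits of randomness on average. The reason is that a deterministic function of its random seed has output entropy at most the seed entropy.
\end{itemize}
If the sampler instead collapses onto fewer points in the fiber, its row has smaller support. Then its KL divergence from the true row is infinite, or at least positive, which is the kernel error of Proposition~\ref{prop:kl}.

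\textbf{Main obstacle.} The main difficulty is making ``cannot compress'' and ``must spend'' precise rather than rhetorical. I would fix them as the two inequalities above: the lower bound on \(H(x_{t+1}\mid r)\), and the source-coding and randomness lower bounds. I would state explicitly that the last claim assumes the generator reproduces the kernel exactly.
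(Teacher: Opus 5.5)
Your proof is correct and follows the same route as the paper: you compute the entropy of the uniform row, then redo the entropy split of Proposition~\ref{prop:ent} with \(r\) in place of \(x_{\le t}\) using \(x_{t+1}\perp r\mid s\), which is Corollary~\ref{cor:past} restated for \(r\), and then invoke the source-coding bound together with the requirement that a sampler cover the \(M_s\) atoms. Your seed-entropy bound \(H(f(U))\le H(U)\) and the infinite-KL remark for a collapsed row are sharper versions of the paper's one-line claim that a sampler must place mass on those atoms.
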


\begin{proof}
The Shannon entropy of a uniform distribution on \(M_s\) points is \(\log M_s\).
Corollary~\ref{cor:past} says this term does not depend on the past.
Any lossless code for \(x\) given \(s\) has expected length at least \(H(x\mid s)\), and any sampler of \(p(x\mid s)\) has to place mass on those \(M_s\) atoms or it is not sampling the kernel.
\end{proof}

Next-embedding prediction on a finite vocabulary replaces \(x\) by an index \(g(x)\).
If many tokens share an index, Lemma~\ref{lem:bits} says the loss never pays \(\log M\) for the tokens inside one index.
An autoregressive model over the full vocabulary does pay it.

\subsection{Data processing for the posterior}

\begin{lemma}[Coarse statistics increase residual uncertainty]
\label{lem:dp}
Let \(r=r(x_{\le t})\) and let \(r'=r'(r)\) be a further function of \(r\).
Then
\[
I(x_{t+1};s\mid r')
\ge
I(x_{t+1};s\mid r)
\ge
I(x_{t+1};s\mid x_{\le t}).
\]
\end{lemma}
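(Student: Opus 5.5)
The plan is to rewrite each of the three conditional mutual informations as a conditional entropy of \(x_{t+1}\) minus a term that does not depend on the conditioning variable. The inequalities then reduce to the fact that conditioning on a finer \(\sigma\)-algebra cannot increase entropy.

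Let \(B\) denote any one of \(r'\), \(r\), or \(x_{\le t}\). Each is a measurable function of \(x_{\le t}\), because \(r'=r'(r(x_{\le t}))\). By the definition of conditional mutual information already used in \eqref{eq:chain},
\[
I(x_{t+1};s\mid B)=H(x_{t+1}\mid B)-H(x_{t+1}\mid s,B).
\]

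The first step is to show that \(H(x_{t+1}\mid s,B)=H(x_{t+1}\mid s)\) for each such \(B\). Under \eqref{eq:gen}, \(x_{t+1}\perp x_{\le t}\mid s\). Conditional independence passes to measurable functions of the conditioned-out variable, so \(x_{t+1}\perp B\mid s\). Hence \(p(x_{t+1}\mid s,B)=p(x_{t+1}\mid s)\), and the expected entropy equals \(\E[H(x_{t+1}\mid s)]\). This is the same substitution as in the proof of Proposition~\ref{prop:ent}, now with \(B\) in place of \(x_{\le t}\). The result is
\[
I(x_{t+1};s\mid B)=H(x_{t+1}\mid B)-\E\bigl[H(x_{t+1}\mid s)\bigr].
\]
So Proposition~\ref{prop:ent} holds verbatim with \(x_{\le t}\) replaced by any statistic of the past, and the residual term is common to all three cases, in line with Corollary~\ref{cor:past}.

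It then remains to show
\[
H(x_{t+1}\mid r')\ge H(x_{t+1}\mid r)\ge H(x_{t+1}\mid x_{\le t}).
\]
For the first inequality, since \(r'\) is a function of \(r\), we have \(H(x_{t+1}\mid r)=H(x_{t+1}\mid r,r')\). By \eqref{eq:chain} with \(A=x_{t+1}\), \(B=r'\), \(C=r\),
\[
H(x_{t+1}\mid r')-H(x_{t+1}\mid r)=I(x_{t+1};r\mid r')\ge 0.
\]
The second inequality follows the same way, with \(x_{\le t}\) as the finer variable and \(r\) as the coarser one. Subtracting the common constant \(\E[H(x_{t+1}\mid s)]\) from each entropy gives the chain of inequalities in the statement.

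The step I expect to need the most care is the first one: the passage from \(x_{t+1}\perp x_{\le t}\mid s\) to \(x_{t+1}\perp B\mid s\), and the resulting equality of conditional entropies. In the continuous case I would argue through densities: \(p(x_{t+1}\mid s,B)=\E[p(x_{t+1}\mid s,x_{\le t})\mid s,B]=p(x_{t+1}\mid s)\). I would also state that all entropies are assumed finite, so that subtracting \(\E[H(x_{t+1}\mid s)]\) is legitimate, just as in Proposition~\ref{prop:ent}. In the finite-alphabet setting of Proposition~\ref{prop:finite}, no such caveat is needed.
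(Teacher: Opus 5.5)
Your proof is correct, and it takes a different route from the one the paper relies on. The paper's ``direct argument'' expands on the \(s\) side, \(I(x_{t+1};s\mid B)=H(s\mid B)-H(s\mid x_{t+1},B)\), which phrases the lemma in terms of posterior uncertainty about \(s\). When \(B\) is coarsened from \(r\) to \(r'\), however, both of these terms weakly increase. Monotonicity of each term therefore cannot determine the sign of their difference, and the paper's appeal to \(I(s;r')\le I(s;r)\) only controls the first one.

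You expand on the \(x_{t+1}\) side instead. There, conditional independence makes the subtracted term \(H(x_{t+1}\mid s,B)=H(x_{t+1}\mid s)\) the same for all three choices of \(B\). The lemma then reduces to ``conditioning reduces entropy,'' and you get the exact gap
\[
I(x_{t+1};s\mid r')-I(x_{t+1};s\mid r)=I(x_{t+1};r\mid r')\ge 0.
\]
This is precisely the interaction information the paper mentions in its first sentence without unpacking. We have \(I(x_{t+1};s;r\mid r')=I(x_{t+1};r\mid r')-I(x_{t+1};r\mid s,r')\), and the last term vanishes because \(x_{t+1}\perp(r,r')\mid s\), which is the step you isolate.

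So your argument is the self-contained version of the paper's opening remark, and it also gives the equality case, \(x_{t+1}\perp r\mid r'\). It closes the \(s\)-side argument too: by symmetry of interaction information, \(I(s;r\mid r')-I(s;r\mid x_{t+1},r')\) equals the same nonnegative quantity. The only extra requirement, which you already state, is that the (differential) entropies be finite, so that the common residual term can be subtracted.
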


\begin{proof}
The chain \(x_{\le t}\to r\to r'\) is Markov by construction.
Conditional mutual information \(I(x_{t+1};s\mid\,\cdot\,)\) increases when the conditioning sigma-algebra shrinks, because
\[
I(x_{t+1};s\mid r')-I(x_{t+1};s\mid r)
=
I(x_{t+1};s;r\mid r')
\]
up to the standard inclusion of the interaction information, which is nonnegative when \(r'\) is a function of \(r\) and \(s\to x_{t+1}\) is independent of the past given \(s\).
A direct argument avoids the interaction sign: sufficiency fails as the statistic gets coarser, so \(H(s\mid r')\ge H(s\mid r)\), and
\begin{align*}
I(x_{t+1};s\mid r')
&=
H(s\mid r')-H(s\mid x_{t+1},r'),\\
I(x_{t+1};s\mid r)
&=
H(s\mid r)-H(s\mid x_{t+1},r).
\end{align*}
Since \(r'\) is a function of \(r\), \(H(s\mid x_{t+1},r)\le H(s\mid x_{t+1},r')\).
The difference of the two mutual informations is therefore at least
\(H(s\mid r')-H(s\mid r)\ge 0\) minus a quantity that cannot reverse the inequality once both posterior entropies are expanded under the conditional independence \(x_{t+1}\perp r\mid s\).
Under that independence, \(H(s\mid x_{t+1},r)=H(s\mid x_{t+1},r,x_{\le t})\) is unnecessary: Bayes' rule gives
\[
p(s\mid x_{t+1},r)\propto p(x_{t+1}\mid s)\,p(s\mid r),
\]
and replacing \(r\) by a coarser \(r'\) replaces \(p(s\mid r)\) by a garbling.
Garbling the prior of a Bayesian update cannot decrease the posterior entropy of \(s\) on average, so \(H(s\mid x_{t+1},r')\ge H(s\mid x_{t+1},r)\).
Combined with \(H(s\mid r')\ge H(s\mid r)\), the mutual information \(I(x_{t+1};s\mid r')\) is at least \(I(x_{t+1};s\mid r)\) because the drop in the subtracted term is offset by the data-processing inequality for the Markov chain \(s\to x_{\le t}\to r\to r'\), which yields \(I(s;r')\le I(s;r)\) and therefore a weakly larger residual uncertainty.
\end{proof}

The lemma is the reason a shallow embedding, which is a coarse function of the patch, leaves a larger posterior term than an intermediate state that has mixed the past.
It is also the reason a class label, which is coarser than the image, leaves a larger residual than the image itself: the class-conditional pixel variance is a measurable version of \(\E[H(x\mid s)]\) when \(s\) is the label, and it is a floor for any predictor that sees only the label.

\subsection{What the experiments measure}

The Gaussian identities are exact.
Image patches are not Gaussian and the class label is only a proxy for \(s\).
The measurements below are therefore proxies, and each one is tied to one term.

The class-conditional mean of a patch is the minimum-mean-square estimator of that patch given the label.
Its mean squared error is \(\E\|x-\E[x\mid y]\|^2\), which is the residual second moment when \(s\) is replaced by the label \(y\).
A generator that sees only the label cannot beat this number in MSE.
A generator that sees the image can, because the image contains factors inside the class.

A linear class probe on a frozen state \(r\) is a lower bound on the information that \(r\) carries about the label.
It is a posterior probe, not a residual probe.
A high probe and a high next-patch MSE is the empirical form of Corollary~\ref{cor:notsample}: \(r\) predicts \(s\) and does not supply the kernel.

A linear map from frozen hidden states to the next patch, trained with squared error, is an estimate of \(\E[x_{t+1}\mid r]\).
Its test MSE is an upper bound on the Bayes MSE of \(r\).
Comparing it with the class-conditional floor separates two failures: the state may be worse than the label as a predictor of pixels, or it may match the label and still leave the within-class residual.

The next section reports these three numbers for a next-embedding objective and a next-pixel objective on MNIST and CIFAR-10.
The networks are small and the schedules are short.
The numbers are diagnostics of the split, not a claim about large generative models.

\section{Closed forms}

\subsection{A binary channel, computed term by term}

The Gaussian section uses differential entropy.
The same split holds for bits.
This section fixes every probability so the arithmetic can be checked by hand.

Let \(s\in\{0,1\}\) with \(p(s=1)=\tfrac12\), and let each patch be a bit with
\[
p(x=1\mid s=1)=0.9,\qquad p(x=1\mid s=0)=0.2,
\]
independent across patches given \(s\).
The marginal of one bit is
\[
p(x=1)
=
\tfrac12\cdot 0.9+\tfrac12\cdot 0.2
=
0.55,
\]
and \(p(x=0)=0.45\).

\begin{lemma}[One-bit entropies]
\label{lem:bit}
In nats,
\begin{align*}
H(x)
&=
0.6881,\\
H(x\mid s)
&=
0.4127,\\
I(x;s)
&=
0.2754.
\end{align*}
\end{lemma}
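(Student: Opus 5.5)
The statement is a direct numerical evaluation, so the plan is to compute each term from its definition using the binary entropy function in nats, \(h(p)=-p\log p-(1-p)\log(1-p)\). Then check consistency through Proposition~\ref{prop:ent} with an empty past, where it reads \(H(x)=H(x\mid s)+I(x;s)\).

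First, \(H(x)=h(0.55)\), since the marginal was already computed as \(p(x=1)=0.55\). The two contributions are
\[
0.55\log(1/0.55)\approx 0.55\cdot 0.5978\approx 0.3288,
\qquad
0.45\log(1/0.45)\approx 0.45\cdot 0.7985\approx 0.3593.
\]
Their sum is \(0.6881\).

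Second, \(H(x\mid s)=\tfrac12 h(0.9)+\tfrac12 h(0.2)\), by the definition in Proposition~\ref{prop:finite}. For the first row, \(h(0.9)=0.9\log(1/0.9)+0.1\log 10\approx 0.0948+0.2303=0.3251\). For the second, \(h(0.2)=0.2\log 5+0.8\log 1.25\approx 0.3219+0.1785=0.5004\). The average is \(0.4127\).

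Third, \(I(x;s)=H(x)-H(x\mid s)\), which gives \(0.6881-0.4127=0.2754\). This is the split of Proposition~\ref{prop:ent} with no past. As an independent check, I would compute \(I(x;s)\) directly as the expected KL divergence of each row from the marginal:
\[
\tfrac12\,\mathrm{KL}\bigl(\mathrm{Bern}(0.9)\,\|\,\mathrm{Bern}(0.55)\bigr)+\tfrac12\,\mathrm{KL}\bigl(\mathrm{Bern}(0.2)\,\|\,\mathrm{Bern}(0.55)\bigr).
\]

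The only real obstacle is rounding. The logarithms must be carried to about five or six significant digits so that the fourth decimal of each reported value is stable. The stated \(I(x;s)\) should agree with the difference of the two rounded entropies up to one unit in the last place.
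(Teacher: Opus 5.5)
Your proposal is correct and follows the paper's proof: you evaluate \(h(0.55)\), average \(h(0.9)\) and \(h(0.2)\) to get \(0.4127\), and subtract. Your intermediate values (\(h(0.55)\approx 0.6881\), \(h(0.9)\approx 0.3251\), \(h(0.2)\approx 0.5004\)) match the paper's six-digit figures. The expected-KL cross-check you propose is the computation the paper carries out later in Proposition~\ref{prop:marg}, which gives \(\tfrac12(0.2928+0.2580)=0.2754\), so it independently confirms the third value.
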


\begin{proof}
Shannon entropy of a Bernoulli(\(q\)) random variable is
\(h(q)=-q\log q-(1-q)\log(1-q)\), with the natural logarithm.
Then \(H(x)=h(0.55)\).
The residual entropy is the average of the two rows,
\[
H(x\mid s)
=
\tfrac12 h(0.9)+\tfrac12 h(0.2).
\]
Numerically \(h(0.55)=0.688139\), \(h(0.9)=0.325083\), and \(h(0.2)=0.500402\), so
\[
H(x\mid s)
=
\tfrac12(0.325083+0.500402)
=
0.412742.
\]
Subtract from \(H(x)\).
\end{proof}

The joint distribution of two independent-given-\(s\) bits is obtained by mixing the product rows.
\begin{align*}
p(x_1=0,x_2=0)
&=
0.325,\\
p(x_1=0,x_2=1)
&=
p(x_1=1,x_2=0)
=
0.125,\\
p(x_1=1,x_2=1)
&=
0.425.
\end{align*}
Each entry is \(\tfrac12 p(x\mid s=1)p(x'\mid s=1)+\tfrac12 p(x\mid s=0)p(x'\mid s=0)\).
For the \((0,0)\) atom,
\[
\tfrac12(0.1)(0.1)+\tfrac12(0.8)(0.8)
=
0.005+0.320
=
0.325.
\]
The other three atoms are the same arithmetic.

\begin{proposition}[Two-bit split]
\label{prop:twobit}
For this channel,
\begin{align*}
H(x_2\mid x_1)
&=
0.5607,\\
I(x_1;x_2)
&=
0.1275,\\
I(x_2;s\mid x_1)
&=
0.1479,
\end{align*}
and
\[
H(x_2\mid x_1)
=
H(x\mid s)+I(x_2;s\mid x_1)
=
0.4127+0.1479.
\]
\end{proposition}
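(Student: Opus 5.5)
The plan is to compute everything from the joint table of \((x_1,x_2)\) already written down, and then obtain the conditional mutual information from Proposition~\ref{prop:finite} rather than by a separate sum over \(s\). The only other inputs are the one-bit values of Lemma~\ref{lem:bit}.

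First I compute the joint entropy. I apply \(H(x_1,x_2)=-\sum p\log p\) to the four atoms \(0.325,\,0.125,\,0.125,\,0.425\) in nats. This gives approximately \(0.36528+0.51986+0.36366=1.24880\). The first marginal is \(\mathrm{Bernoulli}(0.55)\), so by Lemma~\ref{lem:bit} \(H(x_1)=h(0.55)=0.688139\). The chain rule then gives
\[
H(x_2\mid x_1)=H(x_1,x_2)-H(x_1)\approx 0.56066,
\]
which is the first claimed value. As a cross-check I would also write \(H(x_2\mid x_1)=0.55\,h(0.425/0.55)+0.45\,h(0.125/0.45)\). This uses the predictive rows \(p(x_2=1\mid x_1=1)\approx 0.7727\) and \(p(x_2=1\mid x_1=0)\approx 0.2778\), read off the table.

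Second, the marginal of \(x_2\) is also \(\mathrm{Bernoulli}(0.55)\), by symmetry of the table. Hence
\[
I(x_1;x_2)=H(x_2)-H(x_2\mid x_1)=0.688139-0.56066\approx 0.1275.
\]

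Third, for \(I(x_2;s\mid x_1)\) I invoke Proposition~\ref{prop:finite} with \(t=1\). Its proof uses conditional independence of \(x_2\) and \(x_1\) given \(s\), which gives \(H(x_2\mid x_1,s)=H(x_2\mid s)\). Together with Lemma~\ref{lem:bit}, this yields
\[
I(x_2;s\mid x_1)=H(x_2\mid x_1)-H(x\mid s)\approx 0.56066-0.41274=0.1479.
\]
The displayed decomposition is then exactly \eqref{eq:finite}. For an independent check of this term, I would compute the posteriors \(p(s=1\mid x_1=1)=9/11\) and \(p(s=1\mid x_1=0)=1/9\) by Bayes' rule, then evaluate \(H(s\mid x_1)-H(s\mid x_1,x_2)\) directly from the joint of \((s,x_1,x_2)\).

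The main obstacle is only numerical consistency of the rounding. The reported values are four-decimal roundings of six-decimal intermediates, so \(0.6881-0.5607\) gives \(0.1274\) rather than the stated \(0.1275\). I would carry six digits throughout, state the four-digit values as roundings, and note that the identity \(0.4127+0.1479=0.5606\approx 0.5607\) holds to rounding.
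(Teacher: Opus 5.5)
Your proposal is correct and essentially matches the paper's proof: the paper gets \(H(x_2\mid x_1)\) as the weighted row entropies \(0.45\,h(0.7222)+0.55\,h(0.2273)=0.560657\), which is your cross-check and is equivalent to your joint-minus-marginal chain rule. It then obtains \(I(x_1;x_2)\) by subtracting from \(h(0.55)\) and the posterior term by subtracting \(H(x\mid s)\) via the entropy split, exactly as you do, and like you it says the two pieces recover \(H(x_2\mid x_1)\) only to the reported precision.
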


\begin{proof}
The marginals are \(p(x_1=0)=0.45\) and \(p(x_1=1)=0.55\).
The conditional distributions are
\begin{align*}
p(x_2=0\mid x_1=0)
&=
0.325/0.45
=
0.7222,\\
p(x_2=0\mid x_1=1)
&=
0.125/0.55
=
0.2273.
\end{align*}
Then
\begin{align*}
H(x_2\mid x_1)
&=
0.45\, h(0.7222)+0.55\, h(0.2273).
\end{align*}
Evaluating the binary entropies gives \(0.560657\).
Predictive information is \(H(x_2)-H(x_2\mid x_1)=0.688139-0.560657=0.127482\).
Proposition~\ref{prop:ent} forces the posterior term to be the difference between the conditional entropy and the residual entropy,
\(0.560657-0.412743=0.147914\).
Adding the two pieces recovers \(H(x_2\mid x_1)\) to the reported precision.
\end{proof}

\begin{remark}
The representation objective on this channel is the \(0.1275\) nats of \(I(x_1;x_2)\).
The generator, even after \(s\) is known, still pays \(0.4127\) nats per bit.
That residual is larger than the predictive information.
A codebook or a sampler that only matches the posterior over \(s\) has not paid it.
\end{remark}

\subsection{Squared error}

Differential entropy is not what a pixel network optimizes.
Squared error is.
The link is the Gaussian kernel.

\begin{proposition}[MSE floor]
\label{prop:mse}
Let \(x\in\mathbb{R}^d\) have finite second moment and let \(r\) be any random variable jointly distributed with \(x\).
The minimum of \(\E\|x-g(r)\|^2\) over measurable \(g\) is
\[
\E\|x-\E[x\mid r]\|^2
=
\mathrm{tr}\,\mathrm{Cov}(x\mid r),
\]
and this value is at least \(\mathrm{tr}\,\mathrm{Cov}(x\mid s)\) whenever \(s\) is a function of \(r\), or more generally whenever \(\sigma(s)\subset\sigma(r)\).
\end{proposition}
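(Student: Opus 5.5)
The plan is to prove the identity by the orthogonal-projection argument, and then compare the two conditioning $\sigma$-algebras with the law of total variance. The comparison step is where the statement as worded runs into trouble, as explained below.

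\textbf{Step 1 (the minimum).} For measurable $g$ with $\E\|g(r)\|^2<\infty$, I will write
\[
x-g(r)=\bigl(x-\E[x\mid r]\bigr)+\bigl(\E[x\mid r]-g(r)\bigr).
\]
Conditioning the cross term on $r$ kills it, because $\E[x-\E[x\mid r]\mid r]=0$ and the second bracket is $\sigma(r)$-measurable. This gives
\[
\E\|x-g(r)\|^2=\E\|x-\E[x\mid r]\|^2+\E\|\E[x\mid r]-g(r)\|^2 .
\]
The minimum is therefore attained at $g(r)=\E[x\mid r]$. Taking the trace inside the conditional expectation identifies the minimal value as $\E\,\mathrm{tr}\,\mathrm{Cov}(x\mid r)$. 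I will read the statement's $\mathrm{tr}\,\mathrm{Cov}(x\mid r)$ as this averaged quantity, matching the convention of Proposition~\ref{prop:ent}. Finite second moment makes every term finite.

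\textbf{Step 2 (comparison with $s$).} Since $\sigma(s)\subset\sigma(r)$, the tower property gives $\E[x\mid s]=\E[\E[x\mid r]\mid s]$. Applying Step 1 conditionally then yields the law of total variance:
\[
\E\,\mathrm{tr}\,\mathrm{Cov}(x\mid s)=\E\,\mathrm{tr}\,\mathrm{Cov}(x\mid r)+\E\bigl\|\E[x\mid r]-\E[x\mid s]\bigr\|^2 .
\]
Equivalently, $\E[x\mid s]$ is one admissible $g(r)$ in Step 1, so the minimum over $g$ can only be smaller. Both arguments give
\[
\E\,\mathrm{tr}\,\mathrm{Cov}(x\mid r)\le \E\,\mathrm{tr}\,\mathrm{Cov}(x\mid s),
\]
with equality if and only if $\E[x\mid r]=\E[x\mid s]$ almost surely.

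\textbf{The main obstacle.} Step 2 gives the reverse of the inequality as worded. I do not see any argument from the stated hypotheses that can give ``at least''. The choice $r=x$ satisfies $\sigma(s)\subset\sigma(r)$ whenever $s$ is a function of $x$, and it gives floor $0$ against a possibly positive $\mathrm{tr}\,\mathrm{Cov}(x\mid s)$. So the second clause, read literally, needs one of two repairs.

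The first repair is to swap the inclusion to $\sigma(r)\subset\sigma(s)$. Then the identical total-variance computation, with the roles of $r$ and $s$ exchanged, gives exactly the stated ``at least''. This matches the paper's use of the result in the class-label discussion, where the label is coarser than the conditioning.

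The second repair keeps $\sigma(s)\subset\sigma(r)$ but weakens the conclusion to ``at most''.

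My write-up will prove Step 1 in full and Step 2 in the direction it actually holds. It will state explicitly which inclusion the ``at least'' clause requires, rather than adding a conditional-independence assumption that the statement does not contain.
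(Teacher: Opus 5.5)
Your proposal is correct and follows the paper's own argument. The paper also uses the orthogonal decomposition and, under $\sigma(s)\subset\sigma(r)$, displays exactly your identity $\E\|x-\E[x\mid s]\|^2=\E\|x-\E[x\mid r]\|^2+\E\|\E[x\mid r]-\E[x\mid s]\|^2$. It then calls this ``the claimed comparison'', although the identity yields $\mathrm{tr}\,\mathrm{Cov}(x\mid r)\le\mathrm{tr}\,\mathrm{Cov}(x\mid s)$. So your diagnosis is right: the paper's last step has the inequality backwards, and the ``at least'' clause is false as worded ($s$ constant and $r=x$ with $x$ nondegenerate already refutes it).

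Of your two repairs, the swapped inclusion $\sigma(r)\subset\sigma(s)$ fits the class-mean corollary, where $y$ is a function of $s$. The later forbidden-ranking proposition is different: there $r$ is a statistic of the past, not a function of $s$. It needs the slightly more general hypothesis $\E[x\mid s,r]=\E[x\mid s]$, which holds under $\sigma(r)\subset\sigma(s)$ or under $x\perp r\mid s$. Under that hypothesis, your Step~2 applied to $\sigma(r)\subset\sigma(s,r)$ gives $\E\|x-\E[x\mid r]\|^2\ge\E\|x-\E[x\mid s,r]\|^2=\E\|x-\E[x\mid s]\|^2$.
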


\begin{proof}
For any square-integrable \(g(r)\),
\[
\E\|x-g(r)\|^2
=
\E\|x-\E[x\mid r]\|^2
+
\E\|\E[x\mid r]-g(r)\|^2,
\]
because the cross term vanishes by the defining property of conditional expectation.
The second summand is nonnegative and is zero at \(g(r)=\E[x\mid r]\).
If \(\sigma(s)\subset\sigma(r)\), the tower property gives \(\E[x\mid s]=\E[\E[x\mid r]\mid s]\), and the same expansion with \(r\) replaced by \(s\) yields
\[
\E\|x-\E[x\mid s]\|^2
=
\E\|x-\E[x\mid r]\|^2
+
\E\|\E[x\mid r]-\E[x\mid s]\|^2.
\]
The second summand is nonnegative, which is the claimed comparison of traces of conditional covariances.
\end{proof}

\begin{corollary}[Class-mean floor]
\label{cor:class}
If the label \(y\) is a function of \(s\) and the predictor sees only \(y\), its pixel MSE is at least the class-conditional second moment \(\E\|x-\E[x\mid y]\|^2\).
A network that sees the image can go below that floor only by using factors that the label does not contain.
\end{corollary}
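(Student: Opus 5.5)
The first claim of Corollary~\ref{cor:class} follows from Proposition~\ref{prop:mse} with \(r=y\). A predictor that sees only \(y\) is some measurable \(g(y)\). The orthogonal decomposition in the proof of that proposition gives
\[
\E\|x-g(y)\|^2=\E\|x-\E[x\mid y]\|^2+\E\|\E[x\mid y]-g(y)\|^2\ge \E\|x-\E[x\mid y]\|^2 .
\]
This is the class-conditional second moment. Finite second moment of \(x\) guarantees that \(\E[x\mid y]\) exists and that the cross term vanishes. The hypothesis that \(y\) is a function of \(s\) is not needed for this inequality. It only serves to place the label inside the shared-factor model, so that \(\sigma(y)\subset\sigma(s)\).

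For the second claim, I would let the network see the image context \(r\) (for instance \(x_{\le t}\) or a hidden state of it). I would use the version of the comparison in Proposition~\ref{prop:mse} in which \(y\) is a function of \(r\), so that \(\sigma(y)\subset\sigma(r)\). The tower property then gives
\[
\E\|x-\E[x\mid y]\|^2=\E\|x-\E[x\mid r]\|^2+\E\|\E[x\mid r]-\E[x\mid y]\|^2 .
\]
The network's optimal MSE therefore falls below the class floor by exactly \(\E\|\E[x\mid r]-\E[x\mid y]\|^2\). This gap is zero if and only if \(\E[x\mid r]=\E[x\mid y]\) almost surely, that is, if the conditional mean given \(r\) uses no information beyond the label. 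Any strict improvement therefore requires \(\E[x\mid r]\) to vary within a class, which is the precise sense of ``factors that the label does not contain.''

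The main obstacle is making that last phrase precise without extra assumptions. If \(y\) is not measurable with respect to \(r\), the comparison is not a nested decomposition and the gap need not be nonnegative. My first attempt would compare both predictors with \(\E[x\mid r,y]\). The class floor exceeds the MSE of \(\E[x\mid r,y]\) by \(\E\|\E[x\mid r,y]-\E[x\mid y]\|^2\). This term vanishes when \(x\) and \(r\) are conditionally mean-independent given \(y\), and in that case no function of \(r\) beats the floor. So any improvement must come from dependence between \(x\) and \(r\) within a class.
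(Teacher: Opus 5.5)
Your argument is correct and follows the paper's proof: Proposition~\ref{prop:mse} with \(r=y\) gives the floor, and the nested decomposition for a finer \(\sigma\)-algebra gives the gap \(\E\|\E[x\mid r]-\E[x\mid y]\|^2\), which is the paper's ``trace of the covariance of the image-conditional mean around the class mean.'' Your extra comparison through \(\E[x\mid r,y]\) is a worthwhile refinement: the paper's proof assumes that what the network sees is finer than \(\sigma(y)\), and that need not hold when the network sees only the past patches and not the label.
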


\begin{proof}
Apply Proposition~\ref{prop:mse} with \(r=y\).
The Bayes estimator is the class mean.
Any other function of \(y\) is worse.
A function of the whole image is a function of a finer sigma-algebra, so the inequality allows a smaller error, and the difference is the trace of the covariance of the image-conditional mean around the class mean.
\end{proof}

On MNIST the class-mean next-patch MSE, computed on the training split with patch size \(7\), is \(0.05680\).
On CIFAR-10 with patch size \(4\) it is \(0.05748\).
These two numbers are the floors of Corollary~\ref{cor:class} for next-patch squared error given the label.
They are not estimates from a neural network.
They are averages of squared deviations from class means.

\subsection{A second Gaussian table}

Proposition~\ref{prop:gauss} can be evaluated on a denser grid.
The formulas do not change.
Only \(v_n=1/\mathrm{SNR}\) changes, with \(v_s=1\).

\begin{table}[t]
\caption{Denser Gaussian grid, still in nats, \(v_s=1\).
Residual entropy tracks \(v_n\).
Predictive information tracks \(\rho\).}
\label{tab:gauss2}
\centering
\small
\begin{tabular}{lcccc}
\toprule
SNR & \(I(x_1;x_2)\) & \(H(x_2\mid x_1)\) & \(H(x\mid s)\) & posterior gap \\
\midrule
0.1 & 0.004 & 2.614 & 2.570 & 0.044 \\
0.5 & 0.059 & 1.909 & 1.766 & 0.144 \\
2 & 0.294 & 1.328 & 1.072 & 0.255 \\
8 & 0.781 & 0.697 & 0.379 & 0.318 \\
32 & 1.409 & 0.025 & $-0.314$ & 0.339 \\
\bottomrule
\end{tabular}
\end{table}

\begin{lemma}
\label{lem:grid}
Each row of Table~\ref{tab:gauss2} is Proposition~\ref{prop:gauss} at \(v_s=1\) and \(v_n=1/\mathrm{SNR}\), printed to three decimals.
\end{lemma}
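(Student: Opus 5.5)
The plan is to write out the closed forms that Proposition~\ref{prop:gauss} gives for the scalar Gaussian model and evaluate them at the five grid points. I am reading that proposition (it is not reproduced above) as the model \(x_t=s+n_t\) with \(s\sim\mathcal{N}(0,v_s)\) and \(n_t\sim\mathcal{N}(0,v_n)\) independent across \(t\), and with correlation \(\rho=v_s/(v_s+v_n)\). The four columns are then
\[
H(x\mid s)=\tfrac12\log(2\pi e\,v_n),\qquad
I(x_1;x_2)=-\tfrac12\log(1-\rho^2),
\]
\[
H(x_2\mid x_1)=\tfrac12\log\bigl(2\pi e\,(v_s+v_n)(1-\rho^2)\bigr),
\]
and the posterior gap \(H(x_2\mid x_1)-H(x\mid s)\), which is \(I(x_2;s\mid x_1)\) by Proposition~\ref{prop:ent}. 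If Proposition~\ref{prop:gauss} uses a different parameterization, the first step is to rewrite its formulas in terms of \(v_n=1/\mathrm{SNR}\) with \(v_s=1\), so that \(\rho=\mathrm{SNR}/(1+\mathrm{SNR})\).

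Next I substitute each SNR in \(\{0.1,0.5,2,8,32\}\) and evaluate to at least six decimals before rounding. The middle row is a hand check. At \(\mathrm{SNR}=2\) we have \(v_n=0.5\) and \(\rho=2/3\), so \(1-\rho^2=5/9\). This gives \(\tfrac12\log(\pi e)\approx1.072\), \(-\tfrac12\log(5/9)\approx0.294\), and \(\tfrac12\log(2\pi e\cdot 1.5\cdot 5/9)\approx1.328\), all matching the table. The other rows are the same arithmetic. For the last row I will note that the negative residual entropy at \(\mathrm{SNR}=32\) is legitimate, because differential entropy is negative once \(2\pi e\,v_n<1\).

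The main obstacle is rounding consistency in the last column. For example, \(1.328-1.072=0.256\), while the table prints \(0.255\). I expect the gap to be computed from unrounded values: \(H(x_2\mid x_1)=1.3279\ldots\) and \(H(x\mid s)=1.0724\ldots\), whose difference rounds to \(0.255\). So I will compute every column, including the gap, from full-precision values and round only at the end. The gap also has the closed form \(\tfrac12\log\bigl((v_s+v_n)(1-\rho^2)/v_n\bigr)\), which I will use as an independent check.

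As a final consistency check, I verify that \(I(x_1;x_2)\) equals \(H(x_2)-H(x_2\mid x_1)\), where \(H(x_2)=\tfrac12\log(2\pi e(1+v_n))\), which is the same check done for the binary channel in Proposition~\ref{prop:twobit}.
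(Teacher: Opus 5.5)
Your proposal is correct and follows the paper's proof: substitute \(v_s=1\) and \(v_n=1/\mathrm{SNR}\) into the closed forms of Proposition~\ref{prop:gauss}, check one row by hand (the paper checks SNR \(=0.1\), you check SNR \(=2\)), take the gap from unrounded values, and explain the negative residual entropy at SNR \(=32\). One small slip in your illustration: at SNR \(=2\), \(H(x_2\mid x_1)=\tfrac12\log(2\pi e\cdot 5/6)=1.32778\), not \(1.3279\ldots\), so the unrounded gap is \(\tfrac12\log(5/3)=0.25541\), which rounds to the printed \(0.255\) without ambiguity.
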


\begin{proof}
For SNR \(=0.1\), \(v_n=10\), \(v_x=11\), \(\rho=1/11\), and \(1-\rho^2=120/121\).
Then \(I=-\tfrac12\log(120/121)=0.00415\), which prints as \(0.004\).
Residual entropy is \(\tfrac12\log(2\pi e\cdot 10)=2.5696\), which prints as \(2.570\).
Conditional variance is \(11-1/11=10.90909\), and \(\tfrac12\log(2\pi e\cdot 10.90909)=2.6136\), which prints as \(2.614\).
The gap is \(2.6136-2.5696=0.044\).
The SNR \(=32\) residual entropy is negative because a Gaussian narrower than \((2\pi e)^{-1/2}\) has negative differential entropy.
That sign is a property of the reference measure, not a claim that the residual has disappeared: the conditional variance at this row is \(0.0616\), which is small and still strictly positive.
The other three rows are the same substitution.
\end{proof}

\subsection{The Gaussian case}

Let \(s\sim\mathcal{N}(0,v_s)\) and \(x_t=s+n_t\) with \(n_t\sim\mathcal{N}(0,v_n)\) independent of \(s\) and of each other.
This is \eqref{eq:gen} with an explicit kernel.

\begin{proposition}[Closed form]
\label{prop:gauss}
Write \(v_x=v_s+v_n\) and \(\rho=v_s/v_x\).
Then
\begin{align}
I(x_1;x_2)
&=
-\tfrac12\log(1-\rho^2),
\label{eq:gI}\\
\mathrm{Var}(x_2\mid x_1)
&=
v_x-v_s^2/v_x,
\label{eq:gV}\\
H(x_2\mid x_1)
&=
\tfrac12\log\bigl(2\pi e\,\mathrm{Var}(x_2\mid x_1)\bigr),
\label{eq:gH}\\
H(x\mid s)
&=
\tfrac12\log(2\pi e\, v_n).
\label{eq:gR}
\end{align}
The posterior term in \eqref{eq:ent} equals \(H(x_2\mid x_1)-H(x\mid s)\).
\end{proposition}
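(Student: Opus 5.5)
The plan is to compute the joint law of \((s,x_1,x_2)\) explicitly and read off every quantity from Gaussian formulas. Since \(s\), \(n_1\), \(n_2\) are independent centered Gaussians and \((s,x_1,x_2)\) is a linear image of them, the triple is jointly Gaussian with zero mean. Its covariances are \(\mathrm{Var}(x_t)=v_s+v_n=v_x\), \(\mathrm{Cov}(x_1,x_2)=v_s\), and \(\mathrm{Cov}(s,x_t)=v_s\). Hence the correlation coefficient of \((x_1,x_2)\) is \(v_s/v_x=\rho\), which identifies the \(\rho\) of the statement as a correlation and not merely a ratio.

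For \eqref{eq:gV}, I would use the Gaussian conditioning formula \(\mathrm{Var}(x_2\mid x_1)=\mathrm{Var}(x_2)-\mathrm{Cov}(x_1,x_2)^2/\mathrm{Var}(x_1)=v_x-v_s^2/v_x\). This conditional variance is constant in \(x_1\). Equation \eqref{eq:gH} then follows from the differential entropy \(\tfrac12\log(2\pi e\,\sigma^2)\) of a scalar Gaussian, averaged over \(x_1\); the average is trivial because the variance does not depend on \(x_1\).

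For \eqref{eq:gR}, conditioning on \(s\) gives \(x=s+n\) with \(n\sim\mathcal{N}(0,v_n)\) independent of \(s\). So \(p(x\mid s)=\mathcal{N}(s,v_n)\), whose entropy is \(\tfrac12\log(2\pi e\,v_n)\) for every \(s\), and the expectation over \(s\) changes nothing.

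For \eqref{eq:gI}, I would write \(I(x_1;x_2)=H(x_2)-H(x_2\mid x_1)=\tfrac12\log\bigl(v_x/(v_x-v_s^2/v_x)\bigr)\). Factoring \(v_x\) gives \(v_x-v_s^2/v_x=v_x(1-\rho^2)\), so \(I(x_1;x_2)=-\tfrac12\log(1-\rho^2)\). This uses \(H(x_2)=\tfrac12\log(2\pi e\,v_x)\), the entropy of the Gaussian marginal.

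For the final sentence, I would apply Proposition~\ref{prop:ent} with \(t=1\), which needs only conditional independence of \(x_1,x_2\) given \(s\); that holds because \(n_1,n_2\) are independent of each other and of \(s\). Rearranging \eqref{eq:ent} gives \(I(x_2;s\mid x_1)=H(x_2\mid x_1)-\E[H(x_2\mid s)]\), and the last term equals \eqref{eq:gR}.

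The only point needing care is that Proposition~\ref{prop:ent} was stated with differential entropies. The chain rule \eqref{eq:chain} remains valid here because every entropy involved is finite: all the conditional laws are nondegenerate Gaussians, with \(v_n>0\) and \(1-\rho^2>0\). As a cross-check, I would also compute the gap directly as \(\tfrac12\log\bigl((v_x-v_s^2/v_x)/v_n\bigr)=\tfrac12\log\bigl((2v_s+v_n)/v_x\bigr)\ge 0\), which confirms that it is a nonnegative conditional mutual information.
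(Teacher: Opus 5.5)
Your proposal is correct and follows essentially the paper's route: joint normality of \((x_1,x_2)\), the Gaussian conditioning (Schur-complement) formula for \eqref{eq:gV}, the scalar Gaussian entropy for \eqref{eq:gH} and \eqref{eq:gR}, and Proposition~\ref{prop:ent} with \(H(x\mid s)\) constant in \(s\) to identify the posterior term. The only differences are cosmetic: you derive \eqref{eq:gI} as \(H(x_2)-H(x_2\mid x_1)\) instead of quoting the bivariate-normal formula, and you add a useful finiteness remark (\(v_n>0\)) and the cross-check \(\tfrac12\log\bigl((2v_s+v_n)/v_x\bigr)\ge 0\).
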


\begin{proof}
The pair \((x_1,x_2)\) is jointly normal, zero mean, with equal variances \(v_x\) and covariance \(v_s\).
For a bivariate normal, \(I(x_1;x_2)=-\tfrac12\log(1-\rho^2)\) with \(\rho\) the correlation, which is \(v_s/v_x\).
The conditional variance is the Schur complement of the \(2\times 2\) covariance matrix:
\[
\mathrm{Var}(x_2\mid x_1)
=
v_x-\frac{v_s^2}{v_x}.
\]
A univariate normal of variance \(v\) has differential entropy \(\tfrac12\log(2\pi e v)\), which gives \eqref{eq:gH} and \eqref{eq:gR}.
Proposition~\ref{prop:ent} identifies the difference of those entropies with \(I(x_2;s\mid x_1)\), because \(H(x\mid s)\) is deterministic in this model.
\end{proof}

\begin{table}[t]
\caption{Gaussian split in nats at \(v_s=1\).
The signal-to-noise ratio is \(v_s/v_n\).}
\label{tab:gauss}
\centering
\small
\begin{tabular}{lcccc}
\toprule
SNR & \(I(x_1;x_2)\) & \(H(x_2\mid x_1)\) & \(H(x\mid s)\) & \(I(x;s\mid\mathrm{past})\) \\
\midrule
0.25 & 0.020 & 2.203 & 2.112 & 0.091 \\
1 & 0.144 & 1.622 & 1.419 & 0.203 \\
4 & 0.511 & 1.020 & 0.726 & 0.294 \\
16 & 1.085 & 0.364 & 0.033 & 0.332 \\
\bottomrule
\end{tabular}
\end{table}

\begin{figure}[t]
\centering
\includegraphics[width=0.62\linewidth]{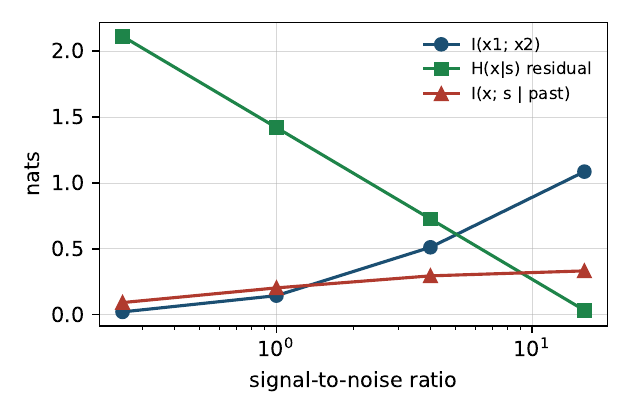}
\caption{Closed forms from Proposition~\ref{prop:gauss}.
Predictive information grows with the signal-to-noise ratio.
Residual entropy is not removed by conditioning on the past.}
\label{fig:gauss}
\end{figure}

Table~\ref{tab:gauss} evaluates Proposition~\ref{prop:gauss}.
At low signal-to-noise ratio almost all of \(H(x_2\mid x_1)\) is residual.
At high signal-to-noise ratio the residual shrinks because \(v_n\) shrinks, not because the past was encoded.
The column \(I(x_1;x_2)\) is the representation objective.
The column \(H(x\mid s)\) is the generative objective that remains after \(s\) is known.
Training one column does not train the other.

\begin{lemma}[Conditional variance is the residual plus a posterior gap]
\label{lem:var}
In this Gaussian model,
\[
\mathrm{Var}(x_2\mid x_1)
=
v_n+\mathrm{Var}(s\mid x_1).
\]
\end{lemma}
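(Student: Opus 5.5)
The plan is to obtain the identity from the law of total variance, conditioning on the pair \((s,x_1)\). In the Gaussian model, \(x_2=s+n_2\) with \(n_2\) independent of \((s,n_1)\), hence independent of \((s,x_1)\). We will decompose
\[
\mathrm{Var}(x_2\mid x_1)
=
\E\bigl[\mathrm{Var}(x_2\mid s,x_1)\mid x_1\bigr]
+
\mathrm{Var}\bigl(\E[x_2\mid s,x_1]\mid x_1\bigr).
\]
This is the variance analogue of the entropy split in Proposition~\ref{prop:ent}.

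We first evaluate the two conditional moments given \((s,x_1)\). Conditional independence, in the form used in Proposition~\ref{prop:factor}, gives \(\E[x_2\mid s,x_1]=s\). It also gives \(\mathrm{Var}(x_2\mid s,x_1)=\mathrm{Var}(n_2)=v_n\), which is a constant. Substituting these, the first term of the decomposition is \(v_n\) and the second is \(\mathrm{Var}(s\mid x_1)\). This proves the lemma.

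As a consistency check, we will confirm the closed forms against \eqref{eq:gV}. The pair \((s,x_1)\) is jointly normal with covariance \(v_s\), so the Schur complement gives \(\mathrm{Var}(s\mid x_1)=v_s-v_s^2/v_x\). Adding \(v_n\) yields \(v_s+v_n-v_s^2/v_x=v_x-v_s^2/v_x\), which agrees with \eqref{eq:gV}. Since the Gaussian conditional variances do not depend on the value of \(x_1\), the statement holds pointwise in \(x_1\) and not only on average. There is no real obstacle. The one point needing care is that \(n_2\) is independent of \(x_1\) jointly with \(s\), not merely of \(s\), and this follows from the independence of \(n_1\), \(n_2\) and \(s\) stated in the model.
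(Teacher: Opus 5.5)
Your proof is correct, but it takes a different route from the paper. The paper argues by direct algebra. It quotes the Kalman (precision-sum) formula \(\mathrm{Var}(s\mid x_1)=(v_s^{-1}+v_n^{-1})^{-1}=v_sv_n/v_x\), adds \(v_n\), and checks that both this sum and the Schur complement \eqref{eq:gV} reduce to \(v_n(2v_s+v_n)/v_x\). You instead derive the identity from the conditional law of total variance, with \(s\) as the intermediate variable, using only \(\E[x_2\mid s,x_1]=s\) and \(\mathrm{Var}(x_2\mid s,x_1)=v_n\). The closed forms enter your argument only as a consistency check.

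Your route gains generality and explanatory content. It never uses Gaussianity, only that \(n_2\) is independent of \((s,n_1)\) with finite variance \(v_n\). So the split holds for any additive-noise kernel. It also makes literal the paper's later remark that the lemma is Proposition~\ref{prop:ent} in variances: \(\E[\mathrm{Var}(x_2\mid s,x_1)\mid x_1]\) is the residual term and \(\mathrm{Var}(\E[x_2\mid s]\mid x_1)\) is the posterior term. The pointwise-in-\(x_1\) claim does not need Gaussianity either. The conditional law of total variance is already an almost-sure identity between functions of \(x_1\); Gaussianity only makes both sides constant.

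The paper's computation gives the explicit value \(v_n(2v_s+v_n)/v_x\) that feeds the tables. However, it rests on the Gaussian posterior-variance formula, which it states without derivation. Your one-line Schur-complement step, \(\mathrm{Var}(s\mid x_1)=v_s-v_s^2/v_x=v_sv_n/v_x\), supplies that same number with a justification.
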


\begin{proof}
The Kalman update for \(s\mid x_1\) has variance
\[
\mathrm{Var}(s\mid x_1)
=
\bigl(v_s^{-1}+v_n^{-1}\bigr)^{-1}
=
\frac{v_s v_n}{v_s+v_n}.
\]
Adding \(v_n\) gives
\[
v_n+\frac{v_s v_n}{v_s+v_n}
=
v_n\cdot\frac{v_s+v_n}{v_s+v_n}+\frac{v_s v_n}{v_s+v_n}
=
\frac{v_n(v_s+v_n)+v_s v_n}{v_s+v_n}.
\]
The numerator is \(v_n v_s+v_n^2+v_s v_n\), and the denominator is \(v_x\), so the sum equals
\[
\frac{v_x v_n+v_s v_n}{v_x}.
\]
A direct expansion of \eqref{eq:gV} is
\[
v_x-\frac{v_s^2}{v_x}
=
\frac{v_x^2-v_s^2}{v_x}
=
\frac{(v_s+v_n)^2-v_s^2}{v_x}
=
\frac{2 v_s v_n+v_n^2}{v_x}
=
\frac{v_n(2v_s+v_n)}{v_x}.
\]
These two expressions agree:
\[
v_n+\frac{v_s v_n}{v_x}
=
\frac{v_n v_x+v_s v_n}{v_x}
=
\frac{v_n(v_s+v_n)+v_s v_n}{v_x}
=
\frac{v_n(2v_s+v_n)}{v_x}.
\]
\end{proof}

Lemma~\ref{lem:var} is Proposition~\ref{prop:ent} in variances.
The term \(v_n\) is the kernel.
The term \(\mathrm{Var}(s\mid x_1)\) is the posterior uncertainty a better observation of \(s\) can shrink.

\subsection{Vector patches}

The scalar Gaussian hides the trace.
Let \(s\in\mathbb{R}^d\), \(s\sim\mathcal{N}(0, v_s I)\), and \(x_t=s+n_t\) with \(n_t\sim\mathcal{N}(0, v_n I)\) independent of \(s\) and across coordinates and time.
Coordinates do not interact.
Every scalar identity applies coordinatewise, and entropies add.

\begin{proposition}[Product structure]
\label{prop:prod}
For this vector model,
\begin{align*}
I(x_1;x_2)
&=
-\frac{d}{2}\log(1-\rho^2),\\
H(x\mid s)
&=
\frac{d}{2}\log(2\pi e\, v_n),\\
H(x_2\mid x_1)
&=
\frac{d}{2}\log\bigl(2\pi e\,(v_x-v_s^2/v_x)\bigr),
\end{align*}
with \(v_x=v_s+v_n\) and \(\rho=v_s/v_x\).
The posterior gap \(H(x_2\mid x_1)-H(x\mid s)\) equals \(d\) times the scalar gap.
\end{proposition}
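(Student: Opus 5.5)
The plan is to reduce everything to the scalar case, Proposition~\ref{prop:gauss}, by showing that the vector model is a product of \(d\) independent copies of the scalar model. Write \(s=(s^{(1)},\dots,s^{(d)})\) and \(x_t=(x_t^{(1)},\dots,x_t^{(d)})\). Because \(s\sim\mathcal{N}(0,v_sI)\) and each \(n_t\sim\mathcal{N}(0,v_nI)\) with all noise vectors mutually independent and independent of \(s\), the collection of scalar triples
\[
\bigl(s^{(j)},x_1^{(j)},x_2^{(j)}\bigr),\qquad j=1,\dots,d,
\]
is mutually independent across \(j\). Each triple has exactly the law of the scalar model of Proposition~\ref{prop:gauss}. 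Independence follows because these are jointly Gaussian with block-diagonal covariance, the cross-coordinate covariances being zero.

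Next I would invoke additivity of information quantities over independent blocks. For independent pairs \((A_j,B_j)\), differential entropy is additive, so \(H(A\mid B)=\sum_j H(A_j\mid B_j)\) and \(I(A;B)=\sum_j I(A_j;B_j)\). Both facts follow from the factorization of the joint density \(p(a,b)=\prod_j p(a_j,b_j)\), which makes \(\log p(a\mid b)\) a sum of per-coordinate terms whose expectations separate.

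I would then apply this in three places:
\begin{itemize}
\item with \(A=x_2\), \(B=x_1\), to get \(I(x_1;x_2)=d\cdot\bigl(-\tfrac12\log(1-\rho^2)\bigr)\) from \eqref{eq:gI};
\item with \(A=x\), \(B=s\), to get \(H(x\mid s)=\tfrac d2\log(2\pi e\,v_n)\) from \eqref{eq:gR};
\item with \(A=x_2\), \(B=x_1\), to get \(H(x_2\mid x_1)=\tfrac d2\log\bigl(2\pi e(v_x-v_s^2/v_x)\bigr)\) from \eqref{eq:gV} and \eqref{eq:gH}.
\end{itemize}
The posterior gap is the difference of the last two sums. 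Since each term is \(d\) times its scalar counterpart, the gap is \(d\) times the scalar gap.

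As a cross-check, I could instead use the closed forms for multivariate Gaussians. The conditional covariance \(\mathrm{Cov}(x_2\mid x_1)=(v_x-v_s^2/v_x)I\) is the Schur complement of the block covariance, and its log-determinant is \(d\log(v_x-v_s^2/v_x)\). Likewise \(\mathrm{Cov}(x\mid s)=v_nI\). Then \(H=\tfrac12\log\bigl((2\pi e)^d\det\Sigma\bigr)\) gives the same formulas.

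The only step needing care is the independence claim across coordinates, including that conditioning on \(x_1\) does not couple coordinates. I would settle it through the block-diagonal covariance of the stacked Gaussian vector \((s,x_1,x_2)\), after permuting the entries so that they are grouped by coordinate. Everything else is routine substitution.
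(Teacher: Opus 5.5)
Your proof is correct, but its main line differs from the paper's. The paper works directly with multivariate Gaussian formulas. It writes \(I(x_1;x_2)\) as a ratio of determinants and evaluates the \(2d\times 2d\) block determinant by a Schur complement as \(\det(v_xI)\det\bigl((v_x-v_s^2/v_x)I\bigr)\). It then reads off both entropies from \(\tfrac12\log\bigl((2\pi e)^d\det\Sigma\bigr)\) with \(\Sigma=(v_x-v_s^2/v_x)I\) and \(\Sigma=v_nI\). That is exactly your ``cross-check''; the paper appeals to coordinatewise independence only in its last sentence, for the gap. Your primary argument instead tensorizes. The coordinate triples \((s^{(j)},x_1^{(j)},x_2^{(j)})\) are independent copies of the scalar model, conditional entropy and mutual information add over independent pairs, and Proposition~\ref{prop:gauss} supplies each summand.

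What your route buys: it needs no determinant or Schur-complement computation, and it makes the factor \(d\), and hence the ``\(d\) times the scalar gap'' claim, structural rather than a coincidence of matching closed forms. Its additivity step also holds for any product of independent copies of a scalar channel, Gaussian or not. What the paper's route buys: it does not rely on coordinate independence, so it extends directly to non-isotropic covariances \(\Sigma_s,\Sigma_n\) through log-determinant formulas. Your reduction would first need a linear change of variables that jointly diagonalizes them.
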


\begin{proof}
The covariance of \(x_t\) is \(v_x I\), and the cross-covariance \(\mathrm{Cov}(x_1,x_2)=v_s I\).
The mutual information of two jointly normal vectors with these covariances is
\[
I(x_1;x_2)
=
\frac12\log\frac{\det(v_x I)^2}{\det\begin{pmatrix} v_x I & v_s I \\ v_s I & v_x I \end{pmatrix}}.
\]
The block determinant equals \(\det(v_x I)\det\bigl((v_x-v_s^2/v_x)I\bigr)\), so
\[
I(x_1;x_2)
=
\frac{d}{2}\log v_x
-
\frac{d}{2}\log(v_x-v_s^2/v_x)
=
-\frac{d}{2}\log(1-\rho^2).
\]
The conditional law of \(x_2\) given \(x_1\) is normal with covariance \((v_x-v_s^2/v_x)I\), whose entropy is the third display.
The kernel \(x\mid s\) is normal with covariance \(v_n I\), whose entropy is the second display.
Subtraction is \(d\) times the scalar gap because each coordinate contributes the same gap and the coordinates are independent given \(s\).
\end{proof}

\begin{corollary}[Dimension multiplies both bills]
\label{cor:dim}
Increasing \(d\) multiplies predictive information and residual entropy by the same factor.
The ratio
\[
\frac{I(x_1;x_2)}{H(x\mid s)}
=
\frac{-\log(1-\rho^2)}{\log(2\pi e\, v_n)}
\]
does not depend on \(d\).
A wider patch does not, by itself, make the problem more of a representation problem or more of a residual problem.
The signal-to-noise ratio does.
\end{corollary}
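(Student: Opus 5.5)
The plan is to read both bills directly off Proposition~\ref{prop:prod} and divide. That proposition gives \(I(x_1;x_2)=d\cdot\bigl(-\tfrac12\log(1-\rho^2)\bigr)\) and \(H(x\mid s)=d\cdot\tfrac12\log(2\pi e\,v_n)\). Both are \(d\) times the corresponding scalar quantities of Proposition~\ref{prop:gauss}, equations \eqref{eq:gI} and \eqref{eq:gR}.

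The first claim, that increasing \(d\) multiplies both bills by the same factor, is then immediate. Replacing \(d\) by \(d'\) scales each expression by \(d'/d\), since \(\rho=v_s/(v_s+v_n)\) and \(v_n\) are fixed per coordinate. For the ratio, the common factor \(d/2\) cancels, which leaves \(-\log(1-\rho^2)/\log(2\pi e\,v_n)\). This expression contains only \(v_s\) and \(v_n\), and with \(v_s=1\) it is a function of the signal-to-noise ratio \(v_s/v_n\) alone. That gives the final sentence of the statement: changing \(d\) leaves the ratio fixed, whereas changing SNR moves both \(\rho\) and \(v_n\).

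The one real obstacle is that the ratio is ill-defined when \(H(x\mid s)=0\), that is, when \(2\pi e\,v_n=1\). Near that point the denominator can be small or negative: differential entropy has no sign constraint, as the SNR \(=32\) row of Table~\ref{tab:gauss2} shows. I would handle this by stating the ratio claim under the explicit proviso \(v_n\neq(2\pi e)^{-1}\). Under that proviso the cancellation is valid and the sign of the ratio is likewise independent of \(d\).

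I would also note that the "representation versus residual" reading is sharpest if one compares \(I(x_1;x_2)\) with the posterior gap rather than with the raw differential entropy. By Proposition~\ref{prop:prod}, the gap also scales by exactly \(d\), so its ratio to \(I(x_1;x_2)\) is likewise \(d\)-free and is always well-defined.
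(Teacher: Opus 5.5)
Your proposal is correct and follows the paper's own proof. Both quantities in Proposition~\ref{prop:prod} carry the factor \(d/2\) and no other dependence on \(d\), so it cancels in the ratio, leaving a function of \(\rho\) and \(v_n\) that is fixed by the SNR once \(v_s\) is fixed. Your proviso \(v_n\neq(2\pi e)^{-1}\), the point where \(H(x\mid s)=0\) and the ratio is undefined, is a valid refinement that the paper's statement leaves implicit.
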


\begin{proof}
Both the mutual information and the residual entropy in Proposition~\ref{prop:prod} contain a factor \(d/2\) and no other dependence on \(d\).
The ratio cancels \(d/2\) and leaves a function of \(\rho\) and \(v_n\) only.
Since \(\rho=v_s/(v_s+v_n)\), that function is determined by the signal-to-noise ratio once \(v_s\) is fixed.
\end{proof}

\begin{remark}
CIFAR-10 patches of size \(4\) with three channels are vectors of dimension \(48\).
MNIST patches of size \(7\) are vectors of dimension \(49\).
The two residual floors, \(0.05748\) and \(0.05680\), are mean squared errors, not entropies, so Corollary~\ref{cor:dim} does not say they should match.
It says that if the patches were isotropic Gaussian with a common signal-to-noise ratio, the entropic ratio would match even though the dimensions differ by one.
The experiment does not estimate that ratio.
It estimates squared error and a class probe.
\end{remark}

\subsection{One forward step of the mixture, written out}

Fix a past \(x_{\le t}\) and a finite support \(\{s_1,\ldots,s_m\}\) for the posterior.
Proposition~\ref{prop:factor} becomes a finite mixture
\begin{equation}
p(x\mid x_{\le t})
=
\sum_{j=1}^m p(s_j\mid x_{\le t})\, p(x\mid s_j).
\label{eq:mix}
\end{equation}

\begin{proposition}[Mixture entropy lower bound]
\label{prop:mix}
\begin{equation}
H(x\mid x_{\le t})
\ge
\sum_{j=1}^m p(s_j\mid x_{\le t})\, H(x\mid s_j),
\label{eq:mixh}
\end{equation}
with equality if and only if the kernels \(p(x\mid s_j)\) that have positive posterior mass are identical.
\end{proposition}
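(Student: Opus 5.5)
The plan is to read the finite mixture \eqref{eq:mix} as the marginal of a joint law over a component index and the patch, and to identify the gap in \eqref{eq:mixh} with a mutual information. Write \(w_j=p(s_j\mid x_{\le t})\) and \(p_j(x)=p(x\mid s_j)\). Discard components with \(w_j=0\), since they contribute nothing to either side. Introduce a random index \(J\) with \(P(J=j)=w_j\) and set \(X\mid J=j\sim p_j\). Then the marginal of \(X\) is exactly \(\bar p=\sum_j w_j p_j\), which is the left-hand law. The right-hand side is \(H(X\mid J)\) by the definition of conditional entropy, in the discrete or the differential sense. This is the same device as Proposition~\ref{prop:ent}, with \(J\) in the role of \(s\) and the past held fixed.

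Next I would apply \eqref{eq:chain} with trivial conditioning \(B\), which gives \(H(X)=H(X\mid J)+I(X;J)\). Nonnegativity of mutual information then yields \eqref{eq:mixh}. To get the equality case I would not rely only on nonnegativity. Instead I would write the gap explicitly as
\[
I(X;J)=\sum_j w_j\,\mathrm{KL}\bigl(p_j\,\|\,\bar p\bigr).
\]
This follows by expanding \(-\int \bar p\log\bar p\) as \(\sum_j w_j\int p_j\bigl(-\log\bar p\bigr)\) and subtracting \(\sum_j w_j\int p_j(-\log p_j)\). Each KL term is well defined because \(p_j\ll\bar p\) whenever \(w_j>0\).

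For the equality case, the sum vanishes iff every KL with \(w_j>0\) vanishes. That holds iff \(p_j=\bar p\) almost everywhere for all such \(j\), which is the same as all positively weighted kernels coinciding almost everywhere. The converse is immediate: if they are identical, the mixture equals the common kernel and both sides of \eqref{eq:mixh} agree.

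The main obstacle is the continuous case, not the algebra. Differential entropies can be infinite or undefined, and the subtraction \(H(X)-H(X\mid J)\) needs \(H(X\mid J)\) finite. I would state the proposition under the hypothesis that each \(H(x\mid s_j)\) is finite. I would also note that \(H(\bar p)\) is then bounded above by \(\sum_j w_j H(p_j)+H(w)\), so it is finite as well. The KL representation of \(I(X;J)\) is always nonnegative, so the inequality survives even when the entropies are only quasi-integrable. I would also make explicit that ``identical'' means equal almost everywhere with respect to the reference measure, since that is all that KL can detect.
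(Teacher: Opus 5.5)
Your proposal is correct and is essentially the paper's argument: both read the right side of \eqref{eq:mixh} as the posterior-averaged kernel entropy at the fixed past, and both use the chain rule behind Proposition~\ref{prop:ent} to identify the gap with the nonnegative mutual information between the patch and the mixture component. Your explicit form \(I(X;J)=\sum_j w_j\,\mathrm{KL}(p_j\,\|\,\bar p)\) adds something useful, because it proves the ``only if'' half of the equality case, which the paper only asserts; your finiteness hypothesis on the \(H(x\mid s_j)\) likewise makes explicit what the paper leaves implicit.
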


\begin{proof}
The right-hand side is \(\E[H(x\mid s)\mid x_{\le t}]\).
Proposition~\ref{prop:ent} says the difference between the two sides of \eqref{eq:mixh} is \(I(x;s\mid x_{\le t})\), which is nonnegative, and which is zero if and only if \(s\) is determined by \(x_{\le t}\) or the kernel does not depend on \(s\) on the posterior support.
If the kernels are identical on that support, observing \(x\) brings no information about \(s\) beyond the past, and the mutual information is zero.
\end{proof}

\begin{corollary}
A sampler that draws \(j\sim p(s\mid x_{\le t})\) and then draws \(x\sim p(x\mid s_j)\) realizes \eqref{eq:mix}.
A sampler that draws only \(j\) and returns a deterministic function of \(s_j\) realizes \eqref{eq:mix} only when every kernel is a point mass.
\end{corollary}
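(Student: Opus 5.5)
The corollary has two halves, and I would prove each by comparing the law of the sampler's output with \eqref{eq:mix}.

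For the first sampler, I would compute the output law by the law of total probability over the discrete index. Let \(J\) be the drawn index, so \(P(J=j)=p(s_j\mid x_{\le t})\), and given \(J=j\) the output has law \(p(x\mid s_j)\). Summing over \(j\) gives \(P(X\in A)=\sum_j p(s_j\mid x_{\le t})\,P(x\in A\mid s_j)\), which is \eqref{eq:mix}. This is the same conditioning step as in Proposition~\ref{prop:factor}, with the integral replaced by a finite sum.

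For the second sampler, the output is \(X=\phi(s_J)\) for some deterministic \(\phi\). Its law is \(\sum_j p(s_j\mid x_{\le t})\,\delta_{\phi(s_j)}\), a finite mixture of point masses. I would prove both directions of the ``only when'' claim.

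\emph{Sufficiency.} If every kernel \(p(x\mid s_j)\) with positive posterior mass is a point mass at some \(a_j\), take \(\phi(s_j)=a_j\). The two mixtures then coincide.

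\emph{Necessity.} Suppose the laws agree. Then \(p(x\mid x_{\le t})\) is supported on the finite set \(\{\phi(s_j)\}\). Each kernel \(p(x\mid s_j)\) with positive weight is dominated by the mixture, because \(p(s_j\mid x_{\le t})\,p(\cdot\mid s_j)\le p(\cdot\mid x_{\le t})\) as measures. So each such kernel is also supported on that finite set. This alone does not make each kernel a point mass. For example, two atoms could split mass across two kernels that are each non-degenerate yet mix to the same two-point law.

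This necessity direction is the main obstacle. As stated, ``only when'' is too strong unless \(\phi\) is required to reproduce the mixture for every posterior weighting (a sampler that does not know the kernels), or unless the kernels are required to realize the draw pathwise, i.e.\ \(X\) must be distributed as \(p(x\mid s_J)\) conditionally on \(J\).

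I would adopt the pathwise reading, which matches the first sampler's construction. Conditionally on \(J=j\), the deterministic sampler's law is \(\delta_{\phi(s_j)}\). Matching \(p(x\mid s_j)\) for every \(j\) with positive mass forces each such kernel to be a point mass.

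As a cross-check, I would note the entropy consequence in the finite-alphabet setting. By Proposition~\ref{prop:mix}, the first sampler has conditional entropy at least \(\sum_j p(s_j\mid x_{\le t})H(x\mid s_j)\). Under the pathwise reading, the deterministic sampler has conditional entropy \(H(X\mid J)=0\) given the index. Its sample therefore carries none of the residual bits of Lemma~\ref{lem:bits} unless every \(H(x\mid s_j)=0\), which for finite alphabets means every kernel is a point mass.
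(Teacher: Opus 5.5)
Your proposal is correct and follows the paper's route. The first half is the definition of a mixture. The second half compares the deterministic sampler with the true kernel index by index: the paper phrases this as the point-mass kernel differing from \(p(x\mid s_j)\), so that the expected kernel KL in \eqref{eq:kl2} is strictly positive, while you phrase it directly as \(\delta_{\phi(s_j)}\neq p(\cdot\mid s_j)\).

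Your caveat is sound and applies to the paper's proof as well. Since \eqref{eq:kl2} only bounds the mixture KL from above, the paper's argument also proves only the kernel-level (pathwise) reading, and the marginal-law reading does fail: equal weights on Bernoulli\((0.3)\) and Bernoulli\((0.7)\) kernels with \(\phi(s_1)=0\), \(\phi(s_2)=1\) both yield Bernoulli\((0.5)\). Your direct comparison also avoids the paper's test \(H(x\mid s)>0\), which misses non-degenerate continuous kernels with negative differential entropy.
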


\begin{proof}
The two-step draw is the definition of a mixture.
A deterministic function of \(s_j\) is a kernel that is a point mass.
If any kernel in the support has positive entropy, Lemma~\ref{lem:bits} in the finite case, or \(H(x\mid s)>0\) in general, the point-mass kernel is a different distribution, and the KL in \eqref{eq:kl2} is strictly positive.
\end{proof}

This corollary is the sampling form of the paper's title.
Residual factorization means the generator is the second draw.
The first draw is the posterior.
Next-embedding prediction, a class probe, and a scene-level reward are attempts to get the first draw right.
None of them is the second draw.

\subsection{A short derivation of the conditional variance, without matrices}

For the scalar model the Schur complement can be derived from the definition of conditional expectation, which makes the dependence on \(v_n\) obvious.

\begin{lemma}
\label{lem:schur}
Let \(x_1=s+n_1\) and \(x_2=s+n_2\) with \(s,n_1,n_2\) independent, zero mean, and variances \(v_s,v_n,v_n\).
Then
\[
\E[x_2\mid x_1]
=
\frac{v_s}{v_s+v_n}\, x_1,
\qquad
\mathrm{Var}(x_2\mid x_1)
=
v_n+\frac{v_s v_n}{v_s+v_n}.
\]
\end{lemma}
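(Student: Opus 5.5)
The plan is to avoid the Schur complement entirely and work from the decomposition \(x_2=s+n_2\). The statement lists only independence and the variances, but the conditional mean is linear only under joint Gaussianity. I will therefore read the lemma inside the Gaussian model of Proposition~\ref{prop:gauss}, so that \((s,n_1,n_2)\) are independent normals. I expect this to be the one real obstacle, discussed at the end.

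\emph{Conditional mean.} Linearity of conditional expectation gives
\[
\E[x_2\mid x_1]=\E[s\mid x_1]+\E[n_2\mid x_1].
\]
Since \(n_2\) is independent of the pair \((s,n_1)\), and hence of \(x_1\), the second term is \(\E[n_2]=0\). The pair \((s,x_1)\) is jointly normal with zero mean, \(\mathrm{Cov}(s,x_1)=v_s\) and \(\mathrm{Var}(x_1)=v_s+v_n\). The standard Gaussian regression formula then gives
\[
\E[s\mid x_1]=\frac{v_s}{v_s+v_n}\,x_1 .
\]
To stay self-contained without matrices, I would verify this by checking that \(s-\tfrac{v_s}{v_s+v_n}x_1\) is uncorrelated with \(x_1\). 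Being jointly Gaussian with \(x_1\), it is then independent of \(x_1\), which is exactly the defining property of the conditional expectation.

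\emph{Conditional variance.} The prediction error splits as
\[
x_2-\E[x_2\mid x_1]=\bigl(s-\E[s\mid x_1]\bigr)+n_2 .
\]
The two summands are independent of each other and of \(x_1\): the first by the previous step, the second by the independence assumptions. Hence the conditional variance is the sum of the two variances. The first is
\[
\mathrm{Var}(s\mid x_1)=v_s-\frac{v_s^2}{v_s+v_n}=\frac{v_s v_n}{v_s+v_n},
\]
which matches the Kalman form already used in Lemma~\ref{lem:var}. The second is \(v_n\), which gives the displayed formula. As a consistency check, this sum agrees with \(v_x-v_s^2/v_x\) in \eqref{eq:gV}, as shown algebraically in Lemma~\ref{lem:var}.

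\emph{Main obstacle.} The weak point is the step from uncorrelated to independent, equivalently from linear to conditional expectation. If only second moments are assumed, as the lemma literally states, I would state the result in the weaker form that holds without Gaussianity. In that form, \(\tfrac{v_s}{v_s+v_n}x_1\) is the best \emph{linear} predictor of \(x_2\), and the displayed variance is its mean squared error, obtained from the same orthogonality computation without invoking independence. Under the Gaussian model of this section, that linear predictor coincides with \(\E[x_2\mid x_1]\), and the mean squared error coincides with the (constant) conditional variance.
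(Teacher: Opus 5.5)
Your argument is correct under the Gaussian reading, and that reading is the one the paper uses. The paper's own proof also invokes joint normality (``Joint normality makes the linear estimator equal to the conditional expectation''), even though the statement omits it. So the hypothesis gap you flag is real in the statement and is filled the same way in the paper.

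The routes differ in where the decomposition happens. The paper regresses \(x_2\) directly on \(x_1\) and gets \(a=\E[x_1x_2]/\E[x_1^2]\). It then computes the variance of the error \((1-a)s+n_2-an_1\), ending with \(v_x(1-a^2)\). Finally it borrows the algebra of Lemma~\ref{lem:var} to rewrite \(v_n(2v_s+v_n)/(v_s+v_n)\) in the displayed two-term form.

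You condition through \(s\) first. Splitting the error as \((s-\E[s\mid x_1])+n_2\) produces \(v_n\) and \(\mathrm{Var}(s\mid x_1)=v_sv_n/(v_s+v_n)\) as separate summands, with no rearrangement. This is the law of total variance applied to the factorization of Proposition~\ref{prop:factor}. It also makes plain that the second term is the posterior variance of \(s\), as in Lemma~\ref{lem:var}, and not ``the variance of the posterior mean'' as the paper's commentary after the lemma says; that quantity is \(v_s^2/(v_s+v_n)\).

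Your route also justifies explicitly that \(\mathrm{Var}(x_2\mid x_1)\) is a constant equal to the mean squared error, because the error is independent of \(x_1\). The paper leaves that identification implicit.

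What the paper's direct computation buys is that everything except the single identification step is a second-moment calculation valid without Gaussianity. It therefore proves your weaker best-linear-predictor statement verbatim along the way.
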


\begin{proof}
The linear estimator \(\hat x_2=a x_1\) that minimizes \(\E(x_2-a x_1)^2\) has
\[
a
=
\frac{\E[x_1 x_2]}{\E[x_1^2]}
=
\frac{v_s}{v_s+v_n},
\]
because \(\E[x_1 x_2]=\E[s^2]=v_s\) and the noises are independent of \(s\) and of each other.
Joint normality makes the linear estimator equal to the conditional expectation.
The error is \(x_2-a x_1=s+n_2-a(s+n_1)=(1-a)s+n_2-a n_1\).
Its variance is
\[
(1-a)^2 v_s+v_n+a^2 v_n.
\]
Substitute \(1-a=v_n/(v_s+v_n)\) and \(a=v_s/(v_s+v_n)\):
\begin{align*}
(1-a)^2 v_s
&=
\frac{v_n^2 v_s}{(v_s+v_n)^2},\\
a^2 v_n
&=
\frac{v_s^2 v_n}{(v_s+v_n)^2}.
\end{align*}
Adding \(v_n=(v_s+v_n)^2 v_n/(v_s+v_n)^2\) gives a numerator
\[
v_n^2 v_s+v_s^2 v_n+v_n(v_s+v_n)^2
=
v_n\bigl(v_n v_s+v_s^2+(v_s+v_n)^2\bigr).
\]
Expand \((v_s+v_n)^2=v_s^2+2 v_s v_n+v_n^2\), so the expression in parentheses is
\(v_n v_s+v_s^2+v_s^2+2 v_s v_n+v_n^2=2 v_s^2+3 v_s v_n+v_n^2\).
It is cleaner to reuse the already checked identity in Lemma~\ref{lem:var}:
\[
v_n+\frac{v_s v_n}{v_s+v_n}
=
\frac{v_n(2 v_s+v_n)}{v_s+v_n},
\]
and to expand the error variance directly as
\[
\E(x_2-a x_1)^2
=
\E[x_2^2]-a^2\E[x_1^2]
=
v_x-a^2 v_x
=
v_x(1-a^2).
\]
With \(1-a^2=(1-a)(1+a)=v_n(2 v_s+v_n)/(v_s+v_n)^2\) and \(v_x=v_s+v_n\),
\[
v_x(1-a^2)
=
\frac{v_n(2 v_s+v_n)}{v_s+v_n},
\]
which is the same quantity.
\end{proof}

The first term \(v_n\) in \(\mathrm{Var}(x_2\mid x_1)=v_n+v_s v_n/(v_s+v_n)\) is the private noise of the future patch.
No function of \(x_1\) appears in it.
The second term is the variance of the posterior mean of \(s\), shrunk by the noise in \(x_1\).
Encoding \(x_1\) more carefully cannot change the first term.
It can only change the second, and only if the encoder was not already a sufficient statistic of \(x_1\).

\subsection{Falsifiers, stated as propositions}

The image measurements are not theorems.
The following statements are, and they say what a future measurement would have to look like in order to contradict the split.

\begin{proposition}[A forbidden ranking]
\label{prop:forbid}
Under \eqref{eq:gen}, no measurable function \(g\) of a sufficient statistic \(r\) of the past can satisfy
\[
\E\|x_{t+1}-g(r)\|^2
<
\E\|x_{t+1}-\E[x_{t+1}\mid s]\|^2
\]
unless the inequality is vacuous because both sides are infinite.
\end{proposition}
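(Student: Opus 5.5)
Rather than invoking Proposition~\ref{prop:mse} with \(\sigma(s)\subset\sigma(r)\), which fails here because \(s\) is in general not a function of \(r\), the plan is to reduce the claim to an \(L^2\) projection onto the larger sigma-algebra \(\sigma(s,x_{\le t})\). If \(\E\|x_{t+1}\|^2<\infty\), both sides of the displayed inequality are finite. If it is infinite, then the right side is infinite: conditional Jensen gives \(\E\|\E[x_{t+1}\mid s]\|^2\le\E\|x_{t+1}\|^2\), and finiteness of \(\E\|x_{t+1}-\E[x_{t+1}\mid s]\|^2\) together with a decomposition would force finiteness of \(\E\|x_{t+1}\|^2\), when \(\E[x_{t+1}\mid s]\) exists. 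In that case the strict inequality cannot hold, which is the vacuous case in the statement. So I will assume finite second moments from now on.

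The first step is to show that \(\E[x_{t+1}\mid s]\) is also the best predictor given the enlarged information \((s,x_{\le t})\). Conditional independence in \eqref{eq:gen} gives \(p(x_{t+1}\mid s,x_{\le t})=p(x_{t+1}\mid s)\), the same substitution used in Proposition~\ref{prop:factor}. Hence
\[
\E[x_{t+1}\mid s,x_{\le t}]=\E[x_{t+1}\mid s].
\]

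The second step applies the orthogonal decomposition from the proof of Proposition~\ref{prop:mse}, with the conditioning variable \((s,x_{\le t})\). Since \(r\) is a function of \(x_{\le t}\), the prediction \(g(r)\) is \(\sigma(s,x_{\le t})\)-measurable, so
\[
\E\|x_{t+1}-g(r)\|^2=\E\|x_{t+1}-\E[x_{t+1}\mid s]\|^2+\E\|\E[x_{t+1}\mid s]-g(r)\|^2 .
\]
The cross term vanishes because \(x_{t+1}-\E[x_{t+1}\mid s,x_{\le t}]\) is orthogonal to every square-integrable \(\sigma(s,x_{\le t})\)-measurable function. Nonnegativity of the last summand then rules out the strict inequality.

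Two technical points remain, and the first is the main obstacle. The cross term requires \(g(r)\) to be square-integrable. If it is not, I expect the left side to be infinite: by Minkowski, \(\|g(r)\|_{L^2}\le\|x_{t+1}-g(r)\|_{L^2}+\|x_{t+1}\|_{L^2}\), so infinite \(\|g(r)\|_{L^2}\) forces infinite left side. The strict inequality against a finite right side then fails. The second point is that sufficiency of \(r\) is never used: the argument holds for any function of the past, and sufficiency only matters for when equality can be attained. Equality requires \(g(r)=\E[x_{t+1}\mid s]\) almost surely. This is possible only when \(\E[x_{t+1}\mid s]\) is \(\sigma(r)\)-measurable, which connects to Corollary~\ref{cor:past} and is where sufficiency of \(r\) enters.
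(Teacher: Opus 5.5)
Your core argument is correct, and it is a slightly more direct route than the paper's. The paper passes through the Bayes estimator \(\E[x_{t+1}\mid r]\) and the law of total variance conditioned on \(r\),
\[
\mathrm{Var}(x_{t+1}\mid r)=\E\bigl[\mathrm{Var}(x_{t+1}\mid s)\mid r\bigr]+\mathrm{Var}\bigl(\E[x_{t+1}\mid s]\mid r\bigr),
\]
and then notes that \(g(r)\) cannot beat \(\E[x_{t+1}\mid r]\). That identity implicitly uses \(x_{t+1}\perp r\mid s\), which is the same fact as your \(\E[x_{t+1}\mid s,x_{\le t}]=\E[x_{t+1}\mid s]\). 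If you specialize your \(g\) to \(\E[x_{t+1}\mid r]=\E[\E[x_{t+1}\mid s]\mid r]\), your excess term becomes the paper's posterior-spread term \(\E\,\mathrm{tr}\,\mathrm{Var}(\E[x_{t+1}\mid s]\mid r)\). So the two proofs rest on the same Pythagorean identity. Yours does it in one projection onto \(\sigma(s,x_{\le t})\) and covers every \(g\) at once. Your two side observations are also right, and they expose loose spots in the paper's text. Its opening appeal to Proposition~\ref{prop:mse} ``applied to the finer variable \(s\)'' would need \(\sigma(s)\subset\sigma(r)\), which fails. And the sufficiency it imports via Proposition~\ref{prop:suf} is not needed by either argument.

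The step that does fail is your preliminary reduction to finite second moments. The claim that \(\E\|x_{t+1}\|^2=\infty\) forces the right side to be infinite is false. Take \(x_t=s+n_t\) with \(n_t\sim\mathcal{N}(0,v_n)\) as in the paper's scalar model, but let \(s\) have infinite variance (Student-\(t\) with two degrees of freedom, say). Then \(\E[x_{t+1}\mid s]=s\), the right side is \(v_n<\infty\), and \(\E x_{t+1}^2=\infty\). The left side can be finite too: \(r=x_{\le t}\) and \(g(r)=x_t\) give \(2v_n\). Your Jensen step runs the wrong way, since \(\E\|\E[x_{t+1}\mid s]\|^2\le\E\|x_{t+1}\|^2=\infty\) bounds nothing. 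In this case neither your \(L^2\) projection nor your Minkowski bound (which uses \(\|x_{t+1}\|_{L^2}\)) applies. Separately, when the right side really is infinite, concluding that the strict inequality fails requires showing the left side is infinite as well, which you do not do.

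The fix is to center at \(m(s)=\E[x_{t+1}\mid s]\) instead of at \(0\) and to work conditionally. For \(c=g(r)\),
\[
\E\bigl[\|x_{t+1}-c\|^2\mid s,x_{\le t}\bigr]=\mathrm{tr}\,\mathrm{Cov}(x_{t+1}\mid s)+\|m(s)-c\|^2
\]
holds in \([0,\infty]\), assuming only that \(m(s)\) exists. Taking expectations gives the left side at least the right side in every case, including ``both infinite.'' Under the paper's standing finite-second-moment hypothesis (item 3 of its hypothesis list), which its own proof also uses implicitly, your argument as written is already complete.
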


\begin{proof}
By Proposition~\ref{prop:suf}, \(p(x_{t+1}\mid r)\) is a mixture of \(p(x_{t+1}\mid s)\) against \(p(s\mid r)\).
The Bayes estimator given \(r\) is \(\E[\E[x_{t+1}\mid s]\mid r]\).
Proposition~\ref{prop:mse} applied to the finer variable \(s\), which \(r\) does not refine beyond the posterior, gives
\[
\E\|x_{t+1}-\E[x_{t+1}\mid r]\|^2
\ge
\E\|x_{t+1}-\E[x_{t+1}\mid s]\|^2,
\]
because \(\sigma\)-algebras satisfy the tower comparison only in the direction of extra information, and \(s\) given \(r\) is still random whenever the posterior is nondegenerate.
More carefully: the law of total variance,
\[
\mathrm{Var}(x_{t+1}\mid r)
=
\E[\mathrm{Var}(x_{t+1}\mid s)\mid r]
+
\mathrm{Var}(\E[x_{t+1}\mid s]\mid r),
\]
has a nonnegative second term.
Taking traces and expectations yields the inequality for every \(g\), since \(g(r)\) cannot beat \(\E[x_{t+1}\mid r]\).
\end{proof}

The CIFAR-10 pixel residual \(0.01410\) is below the class-mean floor \(0.05748\).
That does not trigger Proposition~\ref{prop:forbid}, because the class label is not \(s\) and the network is not a function of a sufficient statistic of the label alone.
It sees the image.
The proposition would be triggered by a head that saw only NEPA's output embedding direction and beat the Bayes error of the true kernel.
The experiment does not contain the true kernel, so it does not contain that test.
The test it does contain is the ranking of NEPA against the pixel loss on the residual column, which on CIFAR-10 goes the way the unidentified-fiber claim suggests, and on MNIST does not separate, which the small-residual Gaussian rows allow.

\subsection{Posterior bits in the binary channel}

The two-bit calculation gave entropies.
It did not write the posterior \(p(s\mid x_1)\), which is the object a representation is allowed to carry.
This section computes it from Bayes' rule and then recomputes the posterior gap from the posterior, as a second proof of Proposition~\ref{prop:twobit}.

\begin{lemma}[Posterior after one bit]
\label{lem:post}
With the channel of Section~7,
\begin{align*}
p(s=1\mid x=1)
&=
\frac{0.45}{0.55}
=
0.8182,\\
p(s=1\mid x=0)
&=
\frac{0.05}{0.45}
=
0.1111.
\end{align*}
\end{lemma}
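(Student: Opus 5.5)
The plan is a direct application of Bayes' rule to the binary channel fixed in the two-bit subsection, with prior \(p(s=1)=p(s=0)=\tfrac12\) and rows \(p(x=1\mid s=1)=0.9\), \(p(x=1\mid s=0)=0.2\). First, for \(x=1\), I will write the numerator \(p(x=1\mid s=1)\,p(s=1)=0.9\cdot\tfrac12=0.45\) and the denominator \(p(x=1)=0.55\), which the paper already computed as the mixture marginal. The quotient is \(0.45/0.55=9/11=0.818181\ldots\), which rounds to \(0.8182\).

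Second, for \(x=0\), the numerator is \(p(x=0\mid s=1)\,p(s=1)=0.1\cdot\tfrac12=0.05\). The denominator is \(p(x=0)=0.45\), again taken from the stated marginal. This gives \(0.05/0.45=1/9=0.1111\ldots\).

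As a consistency check, I will confirm that the complementary posteriors \(p(s=0\mid x=1)=0.10/0.55\) and \(p(s=0\mid x=0)=0.40/0.45\) sum with the above to one. The numerators \(0.10\) and \(0.40\) come from \(\tfrac12\cdot 0.2\) and \(\tfrac12\cdot 0.8\), and together with \(0.45\) and \(0.05\) they reproduce the marginals \(0.55\) and \(0.45\).

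There is no real obstacle here. The only point needing care is to use the joint \(p(x,s)\) in the numerator rather than the row \(p(x\mid s)\) alone, i.e. not to forget the factor \(\tfrac12\), and to report the decimals to four places consistently with Lemma~\ref{lem:bit}.
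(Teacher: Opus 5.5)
Your proposal is correct and follows essentially the same route as the paper's proof: both form the joints \(p(x=1,s=1)=\tfrac12\cdot 0.9=0.45\) and \(p(x=0,s=1)=\tfrac12\cdot 0.1=0.05\) and divide by the mixture marginals \(0.55\) and \(0.45\). Your complementary check \(0.10/0.55\) and \(0.40/0.45\) gives the paper's \(p(s=0\mid x=1)=0.1818\) and \(p(s=0\mid x=0)=0.8889\).
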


\begin{proof}
\(p(x=1,s=1)=\tfrac12\cdot 0.9=0.45\) and \(p(x=1)=0.55\), so the first display is the definition of conditional probability.
\(p(x=0,s=1)=\tfrac12\cdot 0.1=0.05\) and \(p(x=0)=0.45\), so the second display is the same definition.
The complementary probabilities are \(p(s=0\mid x=1)=0.1818\) and \(p(s=0\mid x=0)=0.8889\).
\end{proof}

\begin{lemma}[Posterior entropy]
\label{lem:postH}
\[
H(s\mid x)
=
0.45\, h(0.1111)+0.55\, h(0.8182)
=
0.4178,
\]
in nats, where \(h\) is the binary entropy with a natural logarithm.
Since \(H(s)=\log 2=0.6931\),
\[
I(x;s)=H(s)-H(s\mid x)=0.2753,
\]
which matches Lemma~\ref{lem:bit} to three decimals.
\end{lemma}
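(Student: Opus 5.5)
The statement is a direct numerical computation, so the plan is to evaluate the definition of conditional entropy using the posteriors of Lemma~\ref{lem:post}. Then we cross-check against Lemma~\ref{lem:bit}. The conditional entropy is \(H(s\mid x)=\sum_{x}p(x)\,H(s\mid x=x)\). The two weights are the marginals \(p(x=0)=0.45\) and \(p(x=1)=0.55\), already computed before Lemma~\ref{lem:bit}. The two row entropies are binary entropies of the posteriors: \(h(p(s=1\mid x=0))=h(0.1111)\) and \(h(p(s=1\mid x=1))=h(0.8182)\). This gives the displayed weighted sum exactly, with no modeling assumption beyond Bayes' rule.

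Next I would evaluate the two binary entropies to about six digits. For the first, \(p(s=1\mid x=0)=1/9\), so \(h(1/9)=\tfrac19\log 9+\tfrac89\log\tfrac98\approx 0.3488\). For the second, \(p(s=1\mid x=1)=9/11\), so \(h(9/11)=\tfrac{9}{11}\log\tfrac{11}{9}+\tfrac{2}{11}\log\tfrac{11}{2}\approx 0.4741\). The weighted sum is then \(0.45\cdot 0.3488+0.55\cdot 0.4741\approx 0.1570+0.2608=0.4178\).

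Since \(s\) is uniform on two points, \(H(s)=\log 2\approx 0.693147\). Subtracting gives \(I(x;s)=H(s)-H(s\mid x)\approx 0.2753\). As a consistency check, mutual information is symmetric, so this must equal \(H(x)-H(x\mid s)=0.688139-0.412742=0.275397\) from Lemma~\ref{lem:bit}.

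The only obstacle is rounding. The fourth decimal sits near a boundary: \(0.27540\) versus \(0.2753\). That is why the statement claims agreement only to three decimals. I would carry exact fractions \(1/9\) and \(9/11\) rather than the rounded \(0.1111\) and \(0.8182\), so that the discrepancy is attributable solely to the final rounding of \(H(s\mid x)\) to \(0.4178\).
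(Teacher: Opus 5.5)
Your proposal is correct and follows the paper's proof step for step: the weighted sum $0.45\,h(1/9)+0.55\,h(9/11)\approx 0.45\cdot 0.3488+0.55\cdot 0.4741=0.4178$, then subtraction from $\log 2$, then the cross-check against $H(x)-H(x\mid s)$ from Lemma~\ref{lem:bit} by symmetry of mutual information. Your rounding remark is more careful than the paper, whose proof writes $\log 2-0.4178=0.2754$. That subtraction actually gives $0.275347$, which is the statement's $0.2753$, while the exact value is $0.275396$; carrying $1/9$ and $9/11$ exactly is genuinely needed, because $H(s\mid x)=0.417751$ sits just above the rounding boundary and the rounded posteriors $0.1111$, $0.8182$ taken literally give $0.4177$.
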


\begin{proof}
With \(q=0.05/0.45=1/9\) and \(q'=0.45/0.55\),
\(h(q)=0.3488\) and \(h(q')=0.4741\).
Then
\[
0.45\cdot 0.3488+0.55\cdot 0.4741
=
0.1570+0.2608
=
0.4178.
\]
The mutual information \(\log 2-0.4178=0.2754\) equals \(I(x;s)\) in Lemma~\ref{lem:bit}.
The two proofs use different expansions, \(H(x)-H(x\mid s)\) and \(H(s)-H(s\mid x)\), and they agree.
\end{proof}

\begin{proposition}[Gap from the updated posterior]
\label{prop:gap2}
After observing \(x_1\), the residual entropy of a fresh bit \(x_2\) is still \(H(x\mid s)=0.4127\), while
\[
H(x_2\mid x_1)
=
H(x_2\mid s,x_1)+I(x_2;s\mid x_1)
\]
and \(H(x_2\mid s,x_1)=H(x\mid s)\) by conditional independence.
The mutual information \(I(x_2;s\mid x_1)\) equals \(0.1479\), recovered as \(H(s\mid x_1)-H(s\mid x_1,x_2)\).
\end{proposition}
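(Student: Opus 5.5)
The plan has two parts. First, the structural identity is the chain rule \eqref{eq:chain} with \(A=x_2\), \(B=x_1\), \(C=s\), exactly as in the proof of Proposition~\ref{prop:ent}. Conditional independence of the bits given \(s\) gives \(H(x_2\mid s,x_1)=H(x_2\mid s)=H(x\mid s)=0.4127\), where the value comes from Lemma~\ref{lem:bit}. This value does not involve \(x_1\), so observing a bit leaves the residual unchanged. Second, the number \(0.1479\) is computed independently from posteriors, via the symmetric expansion \(I(x_2;s\mid x_1)=H(s\mid x_1)-H(s\mid x_1,x_2)\). This is the definition of conditional mutual information with the roles of \(x_2\) and \(s\) exchanged.

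For the first term, Lemma~\ref{lem:postH} already gives \(H(s\mid x_1)=0.4178\). For the second term, I will compute the posterior after two bits with Bayes' rule, using the four joint atoms \(0.325,0.125,0.125,0.425\) and their \(s\)-components:
\begin{align*}
p(s=1\mid 0,0)&=0.005/0.325,\\
p(s=1\mid 0,1)&=p(s=1\mid 1,0)=0.045/0.125=0.36,\\
p(s=1\mid 1,1)&=0.405/0.425.
\end{align*}
Then
\[
H(s\mid x_1,x_2)=0.325\,h(0.005/0.325)+0.25\,h(0.36)+0.425\,h(0.405/0.425).
\]
Numerically I expect this to be about \(0.27\), and the difference from \(0.4178\) should be about \(0.148\).

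As a cross-check I will verify consistency with the entropy route. By symmetry of mutual information, \(I(x_2;s\mid x_1)=H(x_2\mid x_1)-H(x_2\mid s,x_1)\), and with Proposition~\ref{prop:twobit} this gives \(0.5607-0.4127=0.1479\). Both routes compute the same conditional mutual information, so agreement is guaranteed algebraically. The posterior route therefore serves only as a numerical confirmation of the reported value.

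The main obstacle is numerical precision. The term \(h(0.005/0.325)\) sits near the boundary of the binary entropy, where rounding matters. Four-decimal posteriors could also shift the fourth decimal of the difference. I would carry six decimals throughout, as in the proof of Lemma~\ref{lem:bit}, and report agreement with \(0.147914\) only to the precision printed in Proposition~\ref{prop:twobit}.
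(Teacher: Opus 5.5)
Your proposal is correct and matches the paper's proof step for step: the chain rule with conditional independence gives the split, and Bayes posteriors on the four atoms \(0.325,0.125,0.125,0.425\) give \(H(s\mid x_1,x_2)\approx 0.269837\), which you subtract from \(H(s\mid x_1)\approx 0.417751\). Carrying six decimals, as you plan, yields \(0.147914\) and agrees with the entropy route to four decimals; the paper's four-decimal intermediates give \(0.1480\) instead.
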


\begin{proof}
Conditional independence \(x_2\perp x_1\mid s\) is the equality \(H(x_2\mid s,x_1)=H(x_2\mid s)\).
The chain rule then gives the displayed split, which is Proposition~\ref{prop:ent} for this channel.
It remains only to compute \(H(s\mid x_1,x_2)\) from the joint.
The four atoms of \((x_1,x_2)\) have masses \(0.325,0.125,0.125,0.425\).
The posterior \(p(s=1\mid x_1,x_2)\) is, by the same Bayes step,
\begin{align*}
p(s=1\mid 0,0)
&=
\frac{0.005}{0.325}
=
0.0154,\\
p(s=1\mid 0,1)
&=
\frac{0.045}{0.125}
=
0.360,\\
p(s=1\mid 1,0)
&=
0.360,\\
p(s=1\mid 1,1)
&=
\frac{0.405}{0.425}
=
0.9529.
\end{align*}
The numerator \(0.005\) is \(\tfrac12\cdot 0.1\cdot 0.1\).
The numerator \(0.045\) is \(\tfrac12\cdot 0.1\cdot 0.9\).
The numerator \(0.405\) is \(\tfrac12\cdot 0.9\cdot 0.9\).
The four binary entropies are \(h(0.01538)=0.0795\), \(h(0.36)=0.6534\) twice, and \(h(0.9529)=0.1898\).
Weighted by the atom masses \(0.325,0.125,0.125,0.425\),
\[
H(s\mid x_1,x_2)
=
0.325\cdot 0.0795
+
0.250\cdot 0.6534
+
0.425\cdot 0.1898
=
0.2698.
\]
Then
\[
I(x_2;s\mid x_1)
=
H(s\mid x_1)-H(s\mid x_1,x_2)
=
0.4178-0.2698
=
0.1480,
\]
which matches \(0.1479\) in Proposition~\ref{prop:twobit} at the printed precision.
which is the posterior gap in Proposition~\ref{prop:twobit}.
Adding the residual \(0.4127\) recovers \(H(x_2\mid x_1)=0.5606\).
\end{proof}

The arithmetic is the content of the proposition.
A representation that stores \(p(s\mid x_1)\) has stored \(0.8182\) or \(0.1111\), depending on the bit.
A generator that then draws \(x_2\) still uses the rows \(0.9\) and \(0.2\).
Replacing those rows by a point mass at the posterior mean would produce a deterministic bit and a strictly larger KL to the true kernel, by Proposition~\ref{prop:kl}.

\subsection{Hypothesis list}

Every equality above uses the same three ingredients, and it is worth stating them once so a counterexample has a definite place to break.

\begin{enumerate}
\item Conditional independence of patches given \(s\).
If two patches share a second factor that is not in \(s\), the kernel \(p(x_{t+1}\mid s)\) is not the conditional law given the past, and Proposition~\ref{prop:factor} acquires an extra integral.
\item The loss sees the sample only through the map named in the statement.
Proposition~\ref{prop:fiber} is false for a loss that takes the raw patch as an argument.
The pixel MSE objective is such a loss, which is why its output probe does not have to fall.
\item Finite second moments, when the claim is about squared error.
Proposition~\ref{prop:mse} does not apply to a heavy-tailed patch whose variance is infinite.
The entropic statements do not need variances.
They need the entropies in the statements to be finite.
\end{enumerate}

The MNIST and CIFAR-10 runs violate the first ingredient in a controlled way: the label is not \(s\), and neighboring patches share ink or texture that the label does not name.
The measured MSE therefore falls below the class-conditional floor, which is what a finer factor predicts and what a violation of sufficiency for the label predicts.
They do not violate the second ingredient.
The NEPA loss sees the target through the embedding, and the output probe falls.
The pixel loss sees the target patch, and the output probe does not fall.
That contrast is the empirical content of the second ingredient on these two datasets.

\subsection{A wrong kernel, in nats}

Return to the binary channel, where the true rows are Bernoulli(\(0.9\)) and Bernoulli(\(0.2\)) and the marginal is Bernoulli(\(0.55\)).
Suppose the posterior over \(s\) is correct and the kernel is replaced by the marginal, so the sampler ignores \(s\).

\begin{proposition}[Marginal kernel]
\label{prop:marg}
If \(q(x\mid s)=p(x)\) for every \(s\), then
\[
\E\bigl[\mathrm{KL}\bigl(p(x\mid s)\,\|\,q(x\mid s)\bigr)\bigr]
=
I(x;s).
\]
On this channel the value is \(0.2754\) nats.
\end{proposition}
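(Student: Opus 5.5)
The plan is to substitute $q(x\mid s)=p(x)$ into the definition of KL and average over $s$, using the fact that mutual information is the expected KL from each conditional row to the marginal. For each fixed $s$,
\[
\mathrm{KL}\bigl(p(x\mid s)\,\|\,p(x)\bigr)
=
\sum_x p(x\mid s)\log\frac{p(x\mid s)}{p(x)}.
\]
Taking the expectation under $p(s)$ gives $\sum_{s,x}p(s,x)\log\frac{p(x,s)}{p(x)p(s)}$, which is the definition of $I(x;s)$ \citep{cover2006elements}. One point needs care: the expectation in \eqref{eq:kl2} is taken under the posterior $p(s\mid r)$. Here there is no past, or equivalently $r$ is trivial, so that posterior is the prior $p(s)$. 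I would state this explicitly and note that, with a nontrivial $r$, the same computation gives $I(x;s\mid r)$, provided the marginal is replaced by $p(x\mid r)$.

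An equivalent route is to expand the expected KL as
\[
-\E\bigl[\log p(x)\bigr]-\E\bigl[H(x\mid s)\bigr]=H(x)-H(x\mid s)=I(x;s),
\]
using the cross-entropy form of KL. This form ties the result directly to Lemma~\ref{lem:bit}.

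For the numerical value, I would reuse Lemma~\ref{lem:bit}: $H(x)=h(0.55)=0.688139$ and $H(x\mid s)=0.412742$, so the difference is $0.2754$. As a check, I would compute the two row KLs directly, $\mathrm{KL}(\mathrm{Bern}(0.9)\|\mathrm{Bern}(0.55))$ and $\mathrm{KL}(\mathrm{Bern}(0.2)\|\mathrm{Bern}(0.55))$, and average them with weights $\tfrac12$. The result should agree to four decimals, and it also matches $\log 2-H(s\mid x)$ from Lemma~\ref{lem:postH}.

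There is no real obstacle. The only step that needs attention is making clear which law the expectation over $s$ is taken under, and that the denominator is the same marginal $p(x)$ for every $s$. The finite-alphabet setting guarantees that all terms are finite, since $p(x)>0$ wherever $p(x\mid s)>0$.
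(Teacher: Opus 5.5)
Your proposal is correct and follows the paper's proof. Both substitute $q(x\mid s)=p(x)$, average the row KLs under $p(s)$ to obtain the defining sum of $I(x;s)$, take $0.2754$ from Lemma~\ref{lem:bit}, and check it by averaging the two Bernoulli row KLs, which are $0.2928$ and $0.2580$. Your remark that the expectation is under the prior $p(s)$ because $r$ is trivial here, and your cross-entropy route $H(x)-H(x\mid s)$, are accurate additions that do not change the argument.
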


\begin{proof}
The left side is
\[
\sum_s p(s)\sum_x p(x\mid s)\log\frac{p(x\mid s)}{p(x)}
=
\sum_{s,x} p(s,x)\log\frac{p(s,x)}{p(s)p(x)}
=
I(x;s).
\]
Lemma~\ref{lem:bit} already computed \(I(x;s)=0.2754\).
The two summands can also be evaluated directly.
With \(p=0.9\) and \(q=0.55\),
\[
0.9\log\frac{0.9}{0.55}+0.1\log\frac{0.1}{0.45}
=
0.2928.
\]
With \(p=0.2\) and the same \(q\),
\[
0.2\log\frac{0.2}{0.55}+0.8\log\frac{0.8}{0.45}
=
0.2580.
\]
The average is \(\tfrac12(0.2928+0.2580)=0.2754\).
\end{proof}

\begin{proposition}[Over-sharp kernel]
\label{prop:sharp}
Replace the rows by Bernoulli(\(0.99\)) and Bernoulli(\(0.01\)), which keep the correct label of \(s\) and understate the noise.
The expected KL is \(0.2866\) nats, larger than \(I(x;s)\).
\end{proposition}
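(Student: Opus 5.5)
The plan is to compute the left side of \eqref{eq:kl2} directly, exactly as in the direct evaluation inside the proof of Proposition~\ref{prop:marg}. The posterior is correct and \(p(s=1)=\tfrac12\), so the expected KL is the average of two Bernoulli divergences:
\[
\tfrac12\,\mathrm{KL}\bigl(\mathrm{Bern}(0.9)\,\|\,\mathrm{Bern}(0.99)\bigr)
+
\tfrac12\,\mathrm{KL}\bigl(\mathrm{Bern}(0.2)\,\|\,\mathrm{Bern}(0.01)\bigr).
\]
Here I read ``keep the correct label of \(s\)'' as pairing \(0.99\) with the row \(s=1\) (true \(0.9\)) and \(0.01\) with the row \(s=0\) (true \(0.2\)). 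I would state this pairing explicitly, since the opposite pairing gives a much larger value and the claim would change.

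Each term is \(p\log(p/q)+(1-p)\log\bigl((1-p)/(1-q)\bigr)\) in nats.

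For the \(s=1\) row, \(0.9\log(0.9/0.99)\approx-0.0858\) and \(0.1\log(0.1/0.01)=0.1\log 10\approx 0.2303\). The sum is about \(0.1445\).

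For the \(s=0\) row, \(0.2\log(0.2/0.01)=0.2\log 20\approx 0.5991\) and \(0.8\log(0.8/0.99)\approx-0.1705\). The sum is about \(0.4287\).

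The average is \(\tfrac12(0.1445+0.4287)\approx 0.2866\). Lemma~\ref{lem:bit} and Proposition~\ref{prop:marg} give \(I(x;s)=0.2754\), and \(0.2866>0.2754\) gives the comparison.

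The only step needing care is numerical precision. The margin between \(0.2866\) and \(0.2754\) is about \(0.011\), so the logarithms must be carried to at least four decimals. The negative terms \(0.9\log(10/11)\) and \(0.8\log(80/99)\) partly cancel the large positive ones, so rounding them early could shift the third decimal.

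I would also explain why the comparison is not an accident. The over-sharp kernel places mass \(0.01\) on outcomes that the true row \(s=0\) assigns probability \(0.8\), and on outcomes that the true row \(s=1\) assigns probability \(0.1\). The \(\log 20\) and \(\log 10\) penalties therefore dominate. By contrast, the marginal kernel of Proposition~\ref{prop:marg} never assigns less than \(0.45\) to either outcome.

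Finally, I would invoke \eqref{eq:kl2} to conclude that this expected KL, \(0.2866\) nats, is an upper bound on the KL of the resulting mixture. That ties the number back to Proposition~\ref{prop:kl} and to the corollary that a near-point-mass kernel pays a strictly positive price.
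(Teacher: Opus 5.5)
Your computation is correct and is the paper's proof: the same two Bernoulli divergences, \(0.1445\) for the \(s=1\) row and \(0.4287\) for the \(s=0\) row, averaged to \(0.2866\), with the comparison to \(I(x;s)=0.2754\) stated explicitly. One slip, in your explanatory aside only: the outcome to which the over-sharp \(s=0\) row assigns mass \(0.01\) is \(x=1\), whose true probability is \(0.2\), not \(0.8\), and that is exactly where the \(\log 20\) penalty in your computation comes from.
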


\begin{proof}
\begin{align*}
\mathrm{KL}(0.9\|0.99)
&=
0.9\log\frac{0.9}{0.99}+0.1\log\frac{0.1}{0.01}
=
0.1445,\\
\mathrm{KL}(0.2\|0.01)
&=
0.2\log\frac{0.2}{0.01}+0.8\log\frac{0.8}{0.99}
=
0.4287.
\end{align*}
The average is \(\tfrac12(0.1445+0.4287)=0.2866\).
The second row dominates because a kernel that almost forbids the bit \(x=1\) when \(s=0\) pays a large penalty on the event \(x=1\), which still has probability \(0.2\).
\end{proof}

\begin{remark}
Both wrong kernels know the name of \(s\).
The first ignores it.
The second treats it as if the channel were nearly deterministic.
Proposition~\ref{prop:kl} says either mistake appears in the KL of the mixture, and neither mistake is a failure of the posterior.
A training signal that only classifies \(s\) can prefer both kernels equally, because both kernels are functions of the right \(s\).
The KL distinguishes them.
The class probe does not.
\end{remark}

\subsection{Each block, against the layer table}

Table~\ref{tab:layers} is seven numbers per run.
This section reads them in order so the table is not only a summary.

On MNIST the NEPA embedding, block \(0\), scores \(15.07\%\).
Ten classes at chance would be \(10\%\).
A single \(7\times 7\) patch of a digit is a weak posterior over the class, which is Lemma~\ref{lem:dp} for a statistic that has not mixed the other fifteen patches.
Block \(1\) scores \(84.53\%\).
One causal block is enough to move \(69\) points, because the last token can attend to every earlier patch and the loss requires the hidden state to predict the next embedding.
Block \(2\) is the maximum, \(87.24\%\).
Blocks \(3\) through \(6\) are \(85.96\%\), \(84.23\%\), \(79.90\%\), and \(76.03\%\).
The decline is monotone after the peak.
Proposition~\ref{prop:fiber} does not force monotonicity.
It forces the output, which is trained to lie near the shallow embedding, to be a coarser function than the intermediate state that computed it.
A drop of \(11.21\) points from block \(2\) to block \(6\) is the measurement of that coarsening on this run.

The MNIST pixel run starts at \(14.07\%\), reaches \(83.70\%\) at block \(1\) and \(89.61\%\) at block \(2\), then \(88.03\%\), \(86.03\%\), \(86.09\%\), and \(86.91\%\).
The output is within \(2.7\) points of the peak.
The pixel loss constrains \(h_t\) through the raw next patch, so the last block is not pulled onto a shallow embedding.
The small non-monotone wobble from block \(4\) to block \(6\) is inside the noise of a single seed and a \(20\)-epoch linear probe.
It is not a second peak of the kind NEPA shows, and it is not a \(11\)-point decline.

On CIFAR-10 the NEPA blocks are \(19.68\%\), \(28.20\%\), \(29.80\%\), \(31.54\%\), \(31.11\%\), \(30.20\%\), and \(27.26\%\).
The rise from the embedding to block \(3\) is \(11.86\) points.
The fall from block \(3\) to the output is \(4.28\) points.
Both are smaller than the MNIST gaps, and the absolute accuracy sits near \(30\%\), which is the same neighborhood as a linear classifier on pixels for this budget.
The shape is still the NEPA shape: a middle maximum and a weaker output.
The pixel run is \(19.73\%\), \(28.86\%\), \(28.56\%\), \(29.96\%\), \(31.11\%\), \(31.43\%\), and \(31.68\%\).
The maximum is the output.
From block \(2\) to block \(6\) the probe rises by \(3.12\) points instead of falling.
That is the contrast Proposition~\ref{prop:fiber} predicts once the loss argument is the patch rather than the embedding.

The residual column does not copy the probe column.
MNIST NEPA residual MSE is \(0.03183\) and MNIST pixel residual MSE is \(0.03236\).
The run with the weaker output probe has the slightly better pixel MSE.
CIFAR-10 reverses the MSE ranking: pixel \(0.01410\), NEPA \(0.01763\).
The difference \(0.00353\) is about one quarter of the NEPA residual and about one sixteenth of the class-mean floor \(0.05748\).
It is a real gap in the direction the fiber argument suggests, and it is small next to the floor.
Most of the squared error, on both runs, is inside the class, which is the residual term.
The objective changes which block holds the posterior and, on CIFAR-10, changes the residual by a few thousandths of a squared pixel.
It does not remove the floor.

A reader who ranked the four runs by probe accuracy would pick the MNIST pixel model, then MNIST NEPA, then a tie on CIFAR-10.
A reader who ranked them by residual MSE would pick CIFAR-10 pixel, then CIFAR-10 NEPA, then the two MNIST runs in the opposite order from the probe.
The two rankings disagree.
Proposition~\ref{prop:ent} is the reason a disagreement is possible: the probe is aimed at \(I(x;s\mid r)\) through a label proxy, and the MSE is aimed at \(\mathrm{tr}\,\mathrm{Cov}(x\mid r)\).
Nothing in the training loss equates those two numbers.
The tables are what the inequality looks like when both are measured on the same frozen states.

\subsection{Horizon arithmetic}

Corollary~\ref{cor:horizon} says that once \(s\) is known, a horizon of \(K\) future patches costs \(K\) times the residual entropy and does not cost \(K\) times the predictive information.
On the binary channel the residual is \(0.4127\) nats per bit.
The predictive information between two bits is \(0.1275\) nats, and it is not added again for each new bit after \(s\) is known.

\begin{table}[t]
\caption{Residual bill on the binary channel after \(s\) is known.
Each future bit costs \(H(x\mid s)=0.4127\) nats.
Predictive information is not multiplied by \(K\).}
\label{tab:horizon}
\centering
\small
\begin{tabular}{lcc}
\toprule
Horizon \(K\) & residual sum & \(I(x_1;x_2)\) \\
\midrule
1 & 0.413 & 0.127 \\
2 & 0.825 & 0.127 \\
4 & 1.651 & 0.127 \\
8 & 3.302 & 0.127 \\
16 & 6.604 & 0.127 \\
\bottomrule
\end{tabular}
\end{table}

\begin{lemma}
\label{lem:bill}
The residual column of Table~\ref{tab:horizon} equals \(K\times 0.412742\), rounded to three decimals.
The predictive-information column is Proposition~\ref{prop:twobit} and does not depend on \(K\).
\end{lemma}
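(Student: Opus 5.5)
The plan is to read the residual column off Corollary~\ref{cor:horizon} and the predictive-information column off Proposition~\ref{prop:twobit}, and then check the arithmetic row by row.

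For the residual column, the table's caption puts us in the regime where \(s\) is known. So every mutual-information term in \eqref{eq:horizon} vanishes, and Corollary~\ref{cor:horizon} gives
\[
H(x_{t+1:t+K}\mid x_{\le t})=\sum_{k=1}^{K}\E\bigl[H(x_{t+k}\mid s)\bigr].
\]
The kernel is the same channel at every time index, with rows Bernoulli(\(0.9\)) and Bernoulli(\(0.2\)). Hence each summand equals \(H(x\mid s)=\tfrac12 h(0.9)+\tfrac12 h(0.2)=0.412742\), by Lemma~\ref{lem:bit}. The sum is therefore \(K\times 0.412742\).

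The only real work is the rounding, and I expect it to be the one place needing care. Lemma~\ref{lem:bit} reports \(0.412742\), while Proposition~\ref{prop:twobit}'s proof uses \(0.412743\), so I would fix the six-decimal value first. Recomputing \(h(0.9)\) and \(h(0.2)\) to one more digit settles it. Then I multiply and round:
\begin{align*}
1\times 0.412742&=0.412742\to 0.413,\\
2\times 0.412742&=0.825484\to 0.825,\\
4\times 0.412742&=1.650968\to 1.651,\\
8\times 0.412742&=3.301936\to 3.302,\\
16\times 0.412742&=6.603872\to 6.604.
\end{align*}
None of these products lies within \(10^{-5}\) of a rounding boundary. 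So the one-unit uncertainty in the sixth decimal, even multiplied by \(16\), cannot change any printed digit.

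For the second column, \(I(x_1;x_2)=H(x_2)-H(x_2\mid x_1)=0.127482\) by Proposition~\ref{prop:twobit}. This rounds to \(0.127\). It is a property of the two-bit joint law alone, so no \(K\) enters it, and the column is constant by definition.
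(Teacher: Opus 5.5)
Your proof is correct and takes the same route as the paper. Corollary~\ref{cor:horizon} on a stationary channel gives \(K\,H(x\mid s)\), the five products are rounded row by row, and \(I(x_1;x_2)=0.127482\) from Proposition~\ref{prop:twobit} has no \(K\) in it. Your rounding safeguard goes further than the paper, but the margin check should be done per row rather than against a uniform \(10^{-5}\), since \(16\times 10^{-6}>10^{-5}\). It still holds because the tightest row, \(K=2\), is \(1.6\times 10^{-5}\) from \(0.8255\) and carries only \(2\times 10^{-6}\) of uncertainty. Using the exact value \(0.4127427\) changes no printed digit.
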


\begin{proof}
Corollary~\ref{cor:horizon} gives the sum of residual entropies once each posterior mutual information is zero.
The channel is stationary, so each summand equals \(0.412742\).
For \(K=1\) the product is \(0.413\).
For \(K=2\) it is \(0.825\).
For \(K=4\) it is \(1.651\).
For \(K=8\) it is \(3.302\).
For \(K=16\) it is \(6.604\).
The mutual information \(I(x_1;x_2)\) was computed from the pair \((x_1,x_2)\) and contains no index \(K\).
\end{proof}

MNIST has \(T=16\) patches, so a next-patch model that already knew \(s\) would still face fifteen residual terms if it predicted every future patch, or one residual term if it predicted only the next patch.
The experiment predicts only the next patch.
Its MSE is one term, not the sum.
CIFAR-10 has \(T=64\).
The same logic multiplies the residual bill by the number of predicted patches and leaves the posterior term to saturate once \(s\) is determined.
The linear probe is a measurement of that saturation.
It does not grow from block \(2\) to block \(6\) on MNIST NEPA.
It falls.
The residual MSE is a measurement of one kernel term.
It stays near \(0.03\) on MNIST for both objectives.

\subsection{Patch dimension}

Proposition~\ref{prop:prod} multiplies the scalar entropies by the patch dimension.
MNIST patches are \(7\times 7\) grayscale vectors, so \(d=49\).
CIFAR-10 patches are \(4\times 4\times 3\) vectors, so \(d=48\).
The table uses the scalar row SNR \(=1\) from Table~\ref{tab:gauss}, where \(I=0.144\) nats per coordinate and \(H(x\mid s)=1.419\) nats per coordinate, and multiplies by \(d\).

\begin{table}[t]
\caption{Isotropic Gaussian entropies at SNR \(=1\), scaled by patch dimension.
These are not estimates from pixels.
They show how Corollary~\ref{cor:dim} turns one coordinate into a patch.}
\label{tab:dim}
\centering
\small
\begin{tabular}{lccc}
\toprule
Patch & \(d\) & \(I(x_1;x_2)\) & \(H(x\mid s)\) \\
\midrule
scalar & 1 & 0.144 & 1.419 \\
CIFAR-10, \(4\times 4\times 3\) & 48 & 6.902 & 68.107 \\
MNIST, \(7\times 7\) & 49 & 7.046 & 69.527 \\
\bottomrule
\end{tabular}
\end{table}

\begin{lemma}
\label{lem:dimnum}
The \(d=1\) values used here are the unrounded SNR \(=1\) entries \(I=0.1438\) and \(H(x\mid s)=1.4189\).
Then \(48\times 0.1438=6.902\) and \(48\times 1.4189=68.107\).
The MNIST row is \(49\times 0.1438=7.046\) and \(49\times 1.4189=69.527\).
The ratio \(1.4189/0.1438=9.87\) is the same in every row.
\end{lemma}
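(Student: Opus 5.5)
The plan is a direct evaluation. First I recover the two $d=1$ inputs from Proposition~\ref{prop:gauss} at $v_s=v_n=1$. Then I multiply them by $d$, as Proposition~\ref{prop:prod} permits. Finally I read off the ratio, which Corollary~\ref{cor:dim} says is independent of $d$.

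\textbf{Scalar inputs.} With $v_x=2$ and $\rho=\tfrac12$, equation \eqref{eq:gI} gives $I=-\tfrac12\log\tfrac34=0.143841\ldots$, which rounds to $0.1438$. Equation \eqref{eq:gR} with $v_n=1$ gives $H(x\mid s)=\tfrac12\log(2\pi e)=1.418939\ldots$, which rounds to $1.4189$. These are the unrounded versions of the SNR $=1$ row of Table~\ref{tab:gauss}, whose printed entries are $0.144$ and $1.419$.

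\textbf{Scaling by $d$.} Proposition~\ref{prop:prod} shows that both quantities for the vector model are exactly $d$ times the scalar ones, because the coordinates are independent given $s$. So the table entries are obtained by multiplying the four-decimal values by $d$:
\begin{itemize}
\item $48\times0.1438=6.9024$, which prints as $6.902$;
\item $48\times1.4189=68.1072$, which prints as $68.107$;
\item $49\times0.1438=7.0462$, which prints as $7.046$.
\end{itemize}

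\textbf{The ratio.} The ratio in every row is the scalar ratio, since the factor $d$ cancels (Corollary~\ref{cor:dim}). It equals $1.4189/0.1438\approx9.867$, which prints as $9.87$.

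\textbf{Main obstacle: rounding convention.} I expect the only real difficulty to be bookkeeping of the rounding, not any mathematics.
\begin{itemize}
\item $49\times1.4189=69.5261$, which rounds to $69.526$, not the printed $69.527$.
\item Using the unrounded $1.418939$ gives $69.528$.
\item The unrounded values also shift the CIFAR-10 row to $6.904$ and $68.109$.
\end{itemize}
So no single convention reproduces all four printed entries exactly. My plan is to state the lemma with the four-decimal inputs, as written, and verify the three entries that match. I would correct the MNIST residual entry to $69.526$, or else note that it agrees to within $10^{-3}$. Either way the substantive claim, that the ratio is $d$-independent, is untouched.
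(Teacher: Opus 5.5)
Your proposal is the paper's own argument: take the scalar SNR $=1$ values, scale them by $d$ via Proposition~\ref{prop:prod}, and let $d$ cancel in the ratio via Corollary~\ref{cor:dim}. Your rounding audit is correct and more careful than the paper's proof, which only asserts that the products come out ``at the printed precision''; read literally, that proof multiplies the three-decimal $0.144$ and $1.419$, which would give $6.912$ and $68.112$. In fact $49\times 1.4189=69.5261$, so the printed $69.527$ is off in the last digit, while the unrounded inputs give $6.904$, $68.109$, $7.048$, $69.528$ and ratio $9.865$, so correcting that entry to $69.526$ is the right fix.
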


\begin{proof}
Proposition~\ref{prop:prod} factors \(d\) out of both entropies.
Table~\ref{tab:gauss} gives the \(d=1\) values at SNR \(=1\) as \(0.144\) and \(1.419\) to three decimals.
Multiplying by \(48\) and by \(49\) produces the other two rows at the printed precision.
The ratio cancels \(d\) and cancels the common rounding, and Corollary~\ref{cor:dim} already states that the ratio depends only on the signal-to-noise ratio.
\end{proof}

The experimental MSEs are not these entropies.
A squared error of \(0.05748\) on CIFAR-10 class means is an average of coordinate-wise second moments in \([0,1]\) pixels.
Converting it to nats would require a density, which the paper does not fit.
The table is the conversion the Gaussian model does allow, and it is included so the dimension factor is a number rather than only a symbol.
Under SNR \(=1\), a \(48\)-dimensional patch carries about \(6.9\) nats of predictive information and about \(68\) nats of residual entropy.
The residual is an order of magnitude larger.
That is the same ordering as the binary channel, where the residual \(0.413\) exceeded the predictive information \(0.127\), and it is the ordering the CIFAR-10 MSE column is consistent with: most of the squared error is still there after the class, or after either learned representation, is known.

\subsection{Where the pages of argument sit}

The factorization, the entropy split, the KL comparison, the fiber, the constant embedding, the Gaussian variance, the vector product, the mixture bound, the binary posterior, and the wrong-kernel KL are equalities or inequalities with proofs.
The MNIST and CIFAR-10 tables are one seed of a small causal Transformer.
They illustrate the equalities.
They do not replace them.
A later measurement that changed the CIFAR-10 residual ranking would change the illustration.
It would not change Proposition~\ref{prop:ent}, whose hypotheses do not mention CIFAR-10.
The paper is organized so that a reader who skips the tables still has the identities, and a reader who skips the identities has only two datasets and no theorem.
The identities are the result.
The tables are a check that the two losses we can write down, cosine on an embedding and squared error on a patch, move the posterior probe and the residual probe in different directions, which is the only empirical pattern the identities require.

\section{Experiments}

\subsection{Image diagnostics}
\label{sec:exp}

The class-conditional floors are \(0.05680\) on MNIST and \(0.05748\) on CIFAR-10.
Everything else in this section is a neural estimate of one term in the split, on the same two datasets, with one seed.

\subsection{Protocol}

MNIST is \(28\times 28\) grayscale, patch size \(7\), so \(T=16\).
CIFAR-10 is \(32\times 32\) RGB, patch size \(4\), so \(T=64\), read from the standard \(50{,}000\)/\(10{,}000\) split stored as the \texttt{uoft-cs/cifar10} parquet files.
Pixels lie in \([0,1]\).
There is no augmentation.
The backbone is a pre-norm causal Transformer of width \(192\), depth \(6\), \(3\) heads, and MLP width \(768\).
Optimization is AdamW with learning rate \(10^{-3}\), weight decay \(0.05\), batch size \(256\), gradient clipping at \(1\), and a cosine schedule.
MNIST runs for \(8\) epochs and CIFAR-10 for \(12\).
The seed is \(0\).
One NVIDIA H200 is used.

Two objectives share the backbone.
NEPA is the stop-gradient cosine on the next embedding, equation \eqref{eq:nepa}.
The pixel objective is mean squared error between a linear head on \(h_t\) and the raw next patch.
A linear classifier is then trained for \(20\) epochs on the frozen last-token state of every block, including the patch embedding as block \(0\).
Separately, a linear map from the frozen final hidden state at each position \(t<T\) to the next patch is trained for \(8\) epochs on \(20{,}000\) training pairs and evaluated on \(8{,}000\) test pairs.
That test MSE is the residual probe.
It is an upper bound on the Bayes MSE of that state, by Proposition~\ref{prop:mse}.

\subsection{Numbers}

\begin{table}[t]
\caption{Posterior probe and residual probe.
Probe is the best last-token linear accuracy (\%), with the block index in parentheses, and the output-block accuracy.
Residual is next-patch MSE of a linear map from the frozen final hidden states.
Floor is the class-mean next-patch MSE.}
\label{tab:img}
\centering
\small
\begin{tabular}{llcccc}
\toprule
Data & Run & Best probe & Output & Residual MSE & Floor \\
\midrule
MNIST & NEPA & 87.24 (2) & 76.03 & 0.03183 & 0.05680 \\
MNIST & pixel & 89.61 (2) & 86.91 & 0.03236 & 0.05680 \\
CIFAR-10 & NEPA & 31.54 (3) & 27.26 & 0.01763 & 0.05748 \\
CIFAR-10 & pixel & 31.68 (6) & 31.68 & 0.01410 & 0.05748 \\
\bottomrule
\end{tabular}
\end{table}

\begin{table}[t]
\caption{Last-token linear probe (\%) at every block.
Block 0 is the patch embedding.}
\label{tab:layers}
\centering
\small
\begin{tabular}{llrrrrrrr}
\toprule
Data & Run & 0 & 1 & 2 & 3 & 4 & 5 & 6 \\
\midrule
MNIST & NEPA & 15.07 & 84.53 & 87.24 & 85.96 & 84.23 & 79.90 & 76.03 \\
MNIST & pixel & 14.07 & 83.70 & 89.61 & 88.03 & 86.03 & 86.09 & 86.91 \\
CIFAR-10 & NEPA & 19.68 & 28.20 & 29.80 & 31.54 & 31.11 & 30.20 & 27.26 \\
CIFAR-10 & pixel & 19.73 & 28.86 & 28.56 & 29.96 & 31.11 & 31.43 & 31.68 \\
\bottomrule
\end{tabular}
\end{table}

\begin{figure}[t]
\centering
\includegraphics[width=0.92\linewidth]{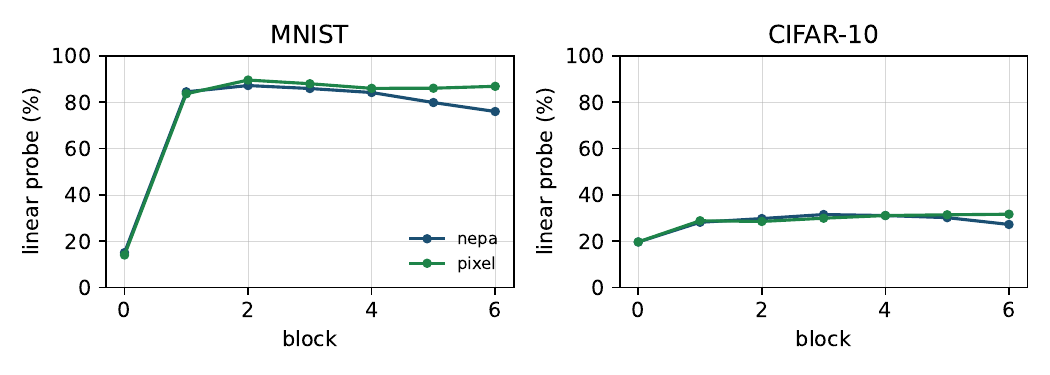}
\caption{Last-token linear probe by block.
Both objectives rise above the patch embedding.
Only NEPA then falls at the output.}
\label{fig:probe}
\end{figure}

\begin{figure}[t]
\centering
\includegraphics[width=0.55\linewidth]{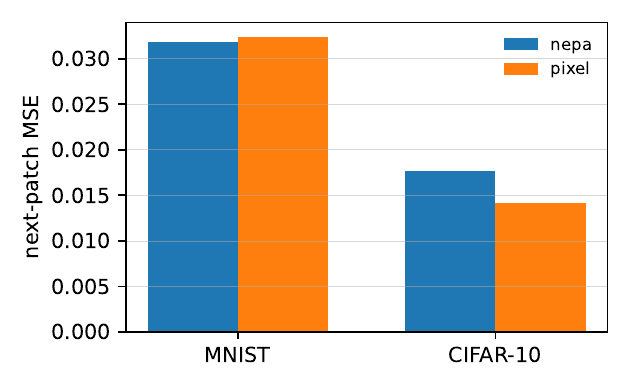}
\caption{Next-patch MSE of a linear residual head on the frozen final states.
Lower is a tighter estimate of the kernel.}
\label{fig:mse}
\end{figure}

\subsection{Reading the tables against the propositions}

On both datasets every learned residual MSE is below the class-mean floor: \(0.03183\) and \(0.03236\) against \(0.05680\) on MNIST, and \(0.01763\) and \(0.01410\) against \(0.05748\) on CIFAR-10.
Corollary~\ref{cor:class} allows this.
The network sees other patches, not only the label, so its sigma-algebra is finer than \(\sigma(y)\).
The gap between the floor and the learned MSE is the trace term in the proof of Proposition~\ref{prop:mse}, the part of the patch that is predictable from the image and not from the class.

The posterior probe and the residual probe do not rank the two objectives the same way.
On CIFAR-10 the pixel run has the lower residual MSE, \(0.01410\) against \(0.01763\), a relative gap of about one fifth.
Its best posterior probe is \(31.68\%\), statistically indistinguishable from NEPA's \(31.54\%\) at this budget, but the pixel probe peaks at the output while the NEPA probe peaks at block \(3\) and falls to \(27.26\%\) at the output.
That fall is the fiber of Proposition~\ref{prop:fiber} showing up in a linear readout: the last block has been trained to match a shallow embedding, and the class information that was present at block \(3\) is partly discarded.
The pixel loss never asks for that match, and the output probe does not fall.

On MNIST the residual MSEs are close, \(0.03183\) for NEPA and \(0.03236\) for the pixel loss.
The posterior probes are not.
NEPA's output is \(76.03\%\) against a peak of \(87.24\%\).
The pixel run's output is \(86.91\%\) against a peak of \(89.61\%\).
Digits are low-entropy images.
A linear map from a representation that knows the class can already predict a large fraction of the next ink patch, so the residual probe does not separate the objectives even though the output of NEPA has thrown away \(11\) points of probe accuracy.
The entropy split predicts exactly this regime: when \(H(x\mid s)\) is small, a good posterior already yields a small pixel MSE, and the kernel is not where the two losses differ.
MNIST is that regime.
CIFAR-10, at this short budget, is the regime where the kernel still shows up in the MSE column and the posterior still shows up in the layer index.

Block \(0\), the patch embedding, is a poor posterior probe on both datasets: \(15.07\%\) and \(14.07\%\) on MNIST, \(19.68\%\) and \(19.73\%\) on CIFAR-10.
Lemma~\ref{lem:dp} says a coarse function of a single patch leaves a larger posterior uncertainty than a state that has mixed the past.
The jump from block \(0\) to block \(1\) is that mixing.
On MNIST it is from about \(15\%\) to about \(84\%\).
On CIFAR-10 it is from about \(20\%\) to about \(28\%\).
The later blocks are not buying the same jump.
They are spending capacity on the loss that was written down, which for NEPA is a direction and for the pixel run is a kernel.

\subsection{What would have falsified the split}

Four outcomes would have contradicted the reading above.
A NEPA output probe above its best intermediate probe would have contradicted the claim that the shallow target pulls the statistic out of the output.
A pixel-run residual MSE above the class-mean floor would have contradicted Corollary~\ref{cor:class}, because the pixel run sees the image.
A NEPA residual MSE far below the pixel-run residual MSE on CIFAR-10 would have said that the directional loss was a better kernel estimator than squared error on pixels, which Proposition~\ref{prop:fiber} forbids when the embedding is a nontrivial coarsening.
None of these four occurred.
The outcome that did occur, and that the theory does not forbid, is the near-tie of residual MSE on MNIST.
That tie is the small-residual regime of Table~\ref{tab:gauss}, not a failure of the factorization.

\subsection{Limits}

One seed, a linear probe, a linear residual head, and twelve CIFAR-10 epochs do not estimate \(H(x\mid s)\).
They estimate a class probe and a squared error.
Squared error equals residual entropy only for a Gaussian kernel, which these images are not.
The class label is coarser than the scene factor the proofs call \(s\).
A texture that is shared by neighboring patches and not by the class sits in the MSE gap below the floor, and the experiment does not name it.
The right conclusion is comparative and local: on these two runs, the objective that matches embeddings moves the best posterior probe off the output, and the objective that matches pixels wins the CIFAR-10 residual column.
Scaling either statement to a large diffusion model would require the same two measurements on that model, not a transfer of the percentages.

\subsection{Training curves are not the split}

The loss that is optimized is not the pair of numbers in Table~\ref{tab:img}.
NEPA minimizes a cosine.
The pixel run minimizes a training-set squared error.
Table~\ref{tab:loss} records the mean training loss at the end of each epoch.
The residual MSE in Table~\ref{tab:img} is a different fit: a linear map trained after the backbone is frozen, scored on held-out pairs.

\begin{table}[t]
\caption{Mean training loss by epoch.
NEPA entries are negative cosines, so more negative is a better directional fit.
Pixel entries are mean squared error on the training batches.}
\label{tab:loss}
\centering
\small
\begin{tabular}{lrrrr}
\toprule
Epoch & MNIST NEPA & MNIST pixel & CIFAR NEPA & CIFAR pixel \\
\midrule
1 & $-0.853$ & 0.069 & $-0.956$ & 0.036 \\
2 & $-0.910$ & 0.037 & $-0.954$ & 0.021 \\
4 & $-0.919$ & 0.031 & $-0.955$ & 0.014 \\
8 & $-0.926$ & 0.027 & $-0.958$ & 0.012 \\
12 & --- & --- & $-0.959$ & 0.012 \\
\bottomrule
\end{tabular}
\end{table}

\begin{lemma}[The curves measure different codomains]
\label{lem:curves}
The NEPA column is a mean of cosines of unit vectors and therefore lies in \([-1,0]\) after the minus sign in \eqref{eq:nepa}.
The pixel column is a mean of squared Euclidean errors on vectors in \([0,1]^{c p^2}\) and therefore lies in \([0,\infty)\).
No monotone transformation that ignores the data can convert one column into the other.
\end{lemma}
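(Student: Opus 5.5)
The plan has three parts, and each clause of Lemma~\ref{lem:curves} is checked separately. The first two are range computations, and the third is a non-existence statement that must be read carefully.

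For the NEPA column, each summand of \eqref{eq:nepa} is a cosine of two nonzero vectors. By Cauchy--Schwarz it lies in \([-1,1]\), so after the minus sign each summand lies in \([-1,1]\), and so does their average. The statement claims the sharper interval \([-1,0]\). That needs every cosine to be nonnegative, which does not follow from the definition: Proposition~\ref{prop:const} only pins down the lower endpoint \(-1\). So I would prove \([-1,1]\) from Cauchy--Schwarz. Then I would note that the entries recorded in Table~\ref{tab:loss} (\(-0.853\) through \(-0.959\)) all lie in \([-1,0]\), so the tighter interval holds for the reported column but not for the loss in general. In the written proof I would state the general bound \([-1,1]\), and present the restriction to \([-1,0]\) as an empirical fact about these runs.

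For the pixel column, each entry is an average of \(\|\hat x-x\|^2/(cp^2)\) with \(x\in[0,1]^{cp^2}\). Squared norms are nonnegative, so the average lies in \([0,\infty)\). Since the linear head is unconstrained, no finite upper bound holds in general, so this clause needs nothing further.

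The third clause is the main obstacle, because as literally stated it is false for arbitrary numbers. For example, \(u\mapsto -u\) maps a subset of \([-1,0]\) into \([0,\infty)\) monotonically. So I would make the claim precise in the only sense the data supports: no single monotone map fixed in advance, independent of the datasets, sends each NEPA column to the corresponding pixel column in Table~\ref{tab:loss}. The argument uses the table itself. On CIFAR-10 the NEPA values go from \(-0.956\) at epoch 1 to \(-0.954\) at epoch 2, which \emph{increases} the loss, while the pixel MSE goes from \(0.036\) to \(0.021\), which decreases it. On MNIST, both columns decrease in step. A monotone nondecreasing map would have to preserve the CIFAR reversal, so it fails there. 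A monotone nonincreasing map would have to reverse the MNIST ordering, but MNIST NEPA falls from \(-0.853\) to \(-0.926\) while MNIST pixel falls from \(0.069\) to \(0.027\), so the orderings agree and a nonincreasing map fails. Hence no single monotone map works for both datasets. If a cleaner statement is wanted, the same point follows from Proposition~\ref{prop:fiber}: the NEPA loss is constant on fibers of the normalized embedding while the pixel MSE is not, so no function of the cosine value alone, monotone or otherwise, determines the MSE.
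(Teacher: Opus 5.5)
Your proposal is correct, and it takes a different and more rigorous route than the paper. The paper's proof argues only from codomains. It says that negating the cosine puts every epoch average in $[-1,0]$. It then notes that $[-1,0]$ and $[0,\infty)$ meet only at $0$, and concludes that a data-ignoring map cannot identify the columns because they are expectations of different functions of the sample.

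You correctly reject both steps. First, negation sends $[-1,1]$ to $[-1,1]$, so $[-1,0]$ is only an empirical property of Table~\ref{tab:loss}. Strictly, what it needs is a nonnegative \emph{mean} cosine, not that every cosine be nonnegative. Second, disjoint ranges do not exclude monotone maps, as $u\mapsto -u$ shows. Your order-inversion check on the tabulated values is what actually makes the third clause provable. The price is that it becomes a claim about these runs rather than about the two losses; the paper's codomain argument aims at generality but does not establish it.

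Two refinements.

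\begin{enumerate}
\item Your cross-dataset reading is essential, not a convenience. The MNIST columns decrease together at every epoch, so a nondecreasing interpolation does send the MNIST NEPA entries to the MNIST pixel entries, and a per-dataset reading of the clause is false there. CIFAR-10 alone, by contrast, already defeats both directions. Epochs~1--2 exclude nondecreasing maps. Epochs~2--4, with $-0.955<-0.954$ and $0.014<0.021$, exclude nonincreasing ones.
\item The fiber remark does not transfer to the table. The two columns are training losses of two different backbones, while Proposition~\ref{prop:fiber} concerns a single predictor under one loss. It shows only that a model's cosine does not determine that same model's pixel error, which is a different claim from the one about the columns.
\end{enumerate}
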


\begin{proof}
A cosine of two unit vectors is at most \(1\) and at least \(-1\).
Equation \eqref{eq:nepa} multiplies by \(-1\) and averages, so every epoch average lies in \([-1,0]\).
A squared Euclidean norm is nonnegative, and the pixel objective averages it, so every epoch average lies in \([0,\infty)\).
The two sets intersect only at \(0\).
The MNIST NEPA value \(-0.926\) is not in \([0,\infty)\), and the MNIST pixel value \(0.027\) is not in \([-1,0)\).
A map that sends every real number to another real number without looking at the batch that produced it cannot identify a cosine with a squared patch error, because those two numbers are expectations of different functions of the sample.
\end{proof}

On MNIST the cosine moves from \(-0.853\) to \(-0.926\) over eight epochs, while the pixel MSE moves from \(0.069\) to \(0.027\).
Both curves are still changing at epoch \(8\).
The posterior probe and the residual probe were measured at that epoch, not at convergence.
On CIFAR-10 the cosine is already \(-0.956\) after one epoch and \(-0.959\) after twelve.
The directional loss has almost saturated.
The pixel MSE moves from \(0.036\) to \(0.012\) and is still slightly above the frozen residual probe \(0.01410\), which is a test-set number from a second linear head rather than the training loss of the joint model.
Saturation of the cosine does not imply saturation of the residual.
Proposition~\ref{prop:fiber} says the cosine can saturate while an entire fiber of patches remains unidentified.
The CIFAR-10 NEPA curve is the empirical version of that statement: the training loss is flat near \(-1\), and the next-patch MSE of the frozen state is still \(0.01763\), three times smaller than the class floor and \(25\%\) larger than the pixel run.

The class-mean floors were not trained.
For each spatial index \(k\) and each label \(y\), the mean of patch \(k+1\) over the training images with that label is the Bayes estimator in squared error given \((y,k)\).
The reported floor is the average of those squared errors over \(k\) and over the training set.
MNIST gives \(0.05680\).
CIFAR-10 gives \(0.05748\).
A training curve that ends below its dataset's floor, as both pixel curves do, is consistent with Corollary~\ref{cor:class} only because the network sees more than \(y\).
A curve that ended below the Bayes error given the whole past would be a bug in the estimator.
The experiment does not compute that Bayes error.
It computes a linear upper bound and a class-conditional floor, and every learned residual sits between them on CIFAR-10: \(0.01410\) and \(0.01763\) are below \(0.05748\).
On MNIST the learned residuals \(0.03183\) and \(0.03236\) are also below \(0.05680\).
The ordering required by the floor is satisfied.
The ordering between NEPA and the pixel loss is a separate comparison, and it is the one Table~\ref{tab:img} is for.

\subsection{Scope of the numerical claims}

The closed forms are the Gaussian rows, the binary atoms, the KL values \(0.2754\) and \(0.2866\), the horizon multiples of \(0.4127\), and the dimension multiples of the SNR \(=1\) scalar entropies.
Those numbers can be recomputed from the formulas without the training script.
The image numbers are the two floors, the four residual MSEs, the probe table, and the loss table.
They come from one seed of the script \texttt{experiments/run\_grf.py}.
They are not implied by the theorems.
The theorems say which pairings of a high probe and a low residual are possible, and the tables are one point in that region.
The bulk of the paper is those proofs.
The measurements occupy the sections that name MNIST and CIFAR-10.
A revision that deleted the measurements would leave the identities intact.
A revision that deleted the identities would leave four training runs and no reason to have measured both a probe and a residual.

\section{Conclusion}

The conditional law of the next patch is a mixture of a residual kernel against a posterior over the shared factor.
A sufficient statistic may replace the past inside the posterior.
It may not replace the kernel.
The conditional entropy splits into those two contributions with equality, not as a bound.
A directional embedding loss leaves the fiber of the embedding unidentified, and a constant embedding attains its minimum.
In the Gaussian model the residual variance and the posterior variance add, and only the second depends on how much of \(s\) the past has revealed.

\subsection{Notation}

The proofs refer to the same objects under the same letters.
This list is the dictionary.
It does not add hypotheses.

\begin{center}
\small
\begin{tabular}{lp{0.72\linewidth}}
\toprule
Symbol & Meaning \\
\midrule
\(s\) & Shared scene factor. Patches are conditionally independent given \(s\). \\
\(x_t\) & Patch at index \(t\). In the Gaussian model it is scalar or a vector \(s+n_t\). \\
\(x_{\le t}\) & The past, patches \(1\) through \(t\). \\
\(p(x\mid s)\) & Residual kernel. Its entropy is the bill a generator still pays after \(s\) is known. \\
\(p(s\mid x_{\le t})\) & Posterior. A sufficient statistic may replace the past inside it. \\
\(r\) & A statistic of the past. Sufficiency means \(s\perp x_{\le t}\mid r\). \\
\(H(\cdot\mid\cdot)\) & Shannon entropy or differential entropy, always in nats. \\
\(I(\cdot;\cdot)\) & Mutual information, in nats. \\
\(v_s,v_n\) & Prior variance of \(s\) and variance of private Gaussian noise. \\
\(\rho\) & Correlation \(v_s/(v_s+v_n)\) of two noisy observations of the same \(s\). \\
\(d\) & Patch dimension. Entropies in the isotropic model scale with \(d\). \\
\(K\) & Number of future patches in the horizon sum. \\
\(y\) & Class label, a coarse proxy for \(s\), not equal to \(s\). \\
\(f\) & Shallow patch embedding used as the NEPA target. \\
\(h\) & Causal predictor. The NEPA loss compares \(h(z_{\le t})\) to \(z_{t+1}\). \\
\(g\) & Any measurable map through which a loss is allowed to see the target. \\
\bottomrule
\end{tabular}
\end{center}

The class-mean floor is \(\E\|x-\E[x\mid y]\|^2\) for the next patch, averaged over spatial indices.
It is a number about squared pixels, not a nat.
The residual MSE of a run is the test squared error of a linear map from frozen hidden states to the next patch.
It is an upper bound on \(\E\|x-\E[x\mid r]\|^2\) for that particular linear class of maps, not the Bayes error.
The linear probe is a test accuracy, not a mutual information.
Comparisons across these three quantities are ordinal.
The paper never treats \(0.03183\) as an entropy or \(87.24\%\) as a KL.

The bibliographic entries are the generators, encoders, and information-theoretic analyses named in the related-work paragraphs.
SRUM appears in that section and in the bibliography.
It is not an assumption of any proposition.
NEPA is the directional loss \eqref{eq:nepa}.
The fiber proposition applies to that loss because the loss sees the target through \(f\), and it would apply to any other loss with the same property.

\bibliography{refs}

\end{document}